\newcommand{\cmark}{\textcolor{OliveGreen}{\ding{51}}}
\newcommand{\xmark}{\textcolor{BrickRed}{\ding{55}}}
\theoremstyle{definition}
\newtheorem{definition}{Definition}[section]
\theoremstyle{plain}
\newtheorem{theorem}{Theorem}[section]
\newtheorem{lemma}[theorem]{Lemma}
\title{Generalised $f$-Mean Aggregation \\for Graph Neural Networks}
\author{Ryan Kortvelesy, Steven Morad and Amanda Prorok \\
University of Cambridge \\
\{rk627, sm2558, asp45\}@cam.ac.uk
}
\begin{document}

\maketitle
\thispagestyle{empty}
\pagestyle{empty}

\begin{abstract}
Graph Neural Network (GNN) architectures are defined by their implementations of update and aggregation modules. While many works focus on new ways to parametrise the update modules, the aggregation modules receive comparatively little attention. Because it is difficult to parametrise aggregation functions, currently most methods select a ``standard aggregator'' such as $\mathrm{mean}$, $\mathrm{sum}$, or $\mathrm{max}$. While this selection is often made without any reasoning, it has been shown that the choice in aggregator has a significant impact on performance, and the best choice in aggregator is problem-dependent. Since aggregation is a lossy operation, it is crucial to select the most appropriate aggregator in order to minimise information loss. In this paper, we present GenAgg, a generalised aggregation operator, which parametrises a function space that includes all standard aggregators. In our experiments, we show that GenAgg is able to represent the standard aggregators with much higher accuracy than baseline methods. We also show that using GenAgg as a drop-in replacement for an existing aggregator in a GNN often leads to a significant boost in performance across various tasks.
\end{abstract}

\section{Introduction}
\label{intro}
\begin{wrapfigure}{r}{0.45\textwidth}
    \centering
    \vspace{-12pt}
    \subfloat[$\theta = \langle \sigma(x),0,-1 \rangle$]{
    \includegraphics[width=0.47\linewidth]{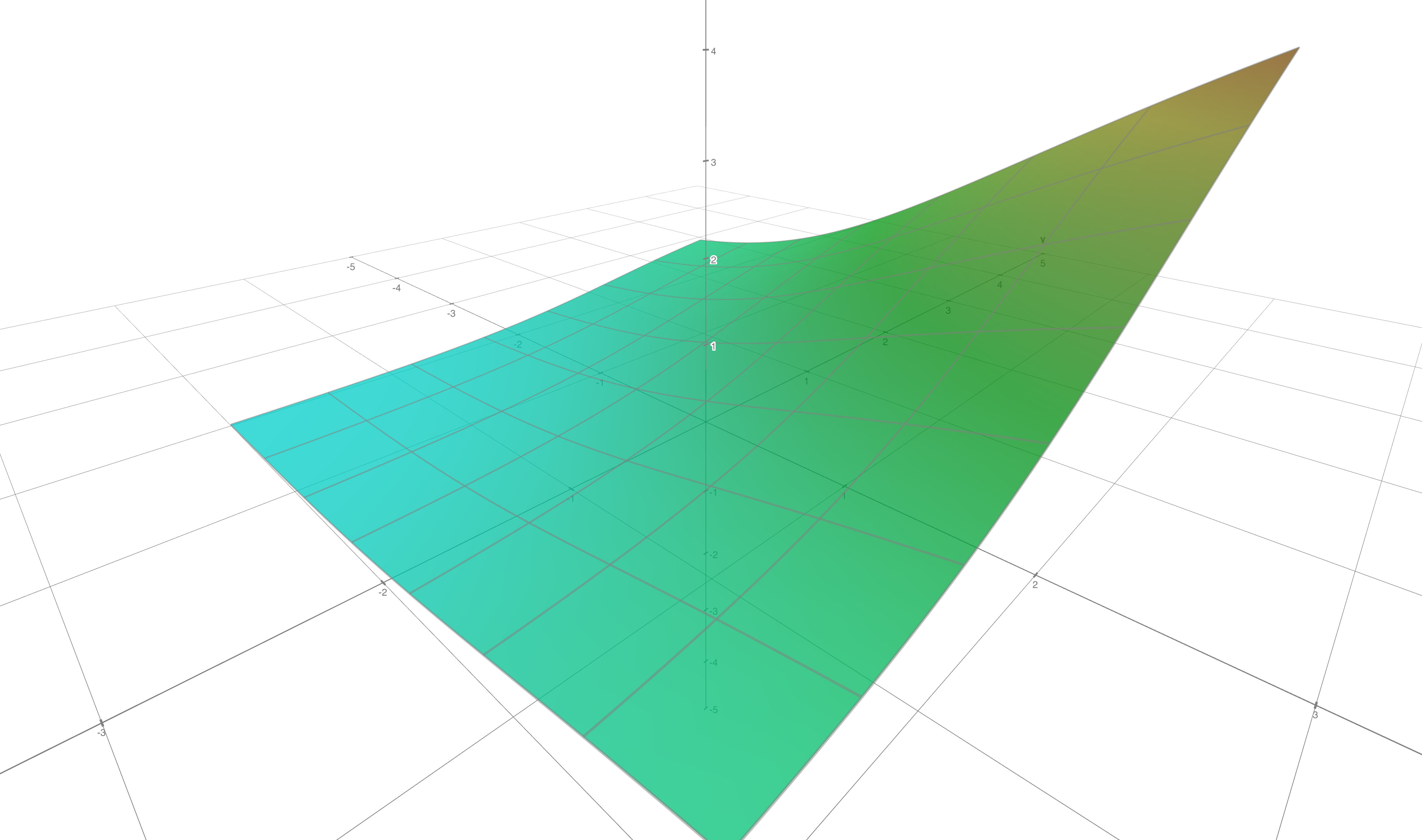}
    }
    \subfloat[$\theta = \langle e^x,0,0 \rangle$]{
    \includegraphics[width=0.47\linewidth]{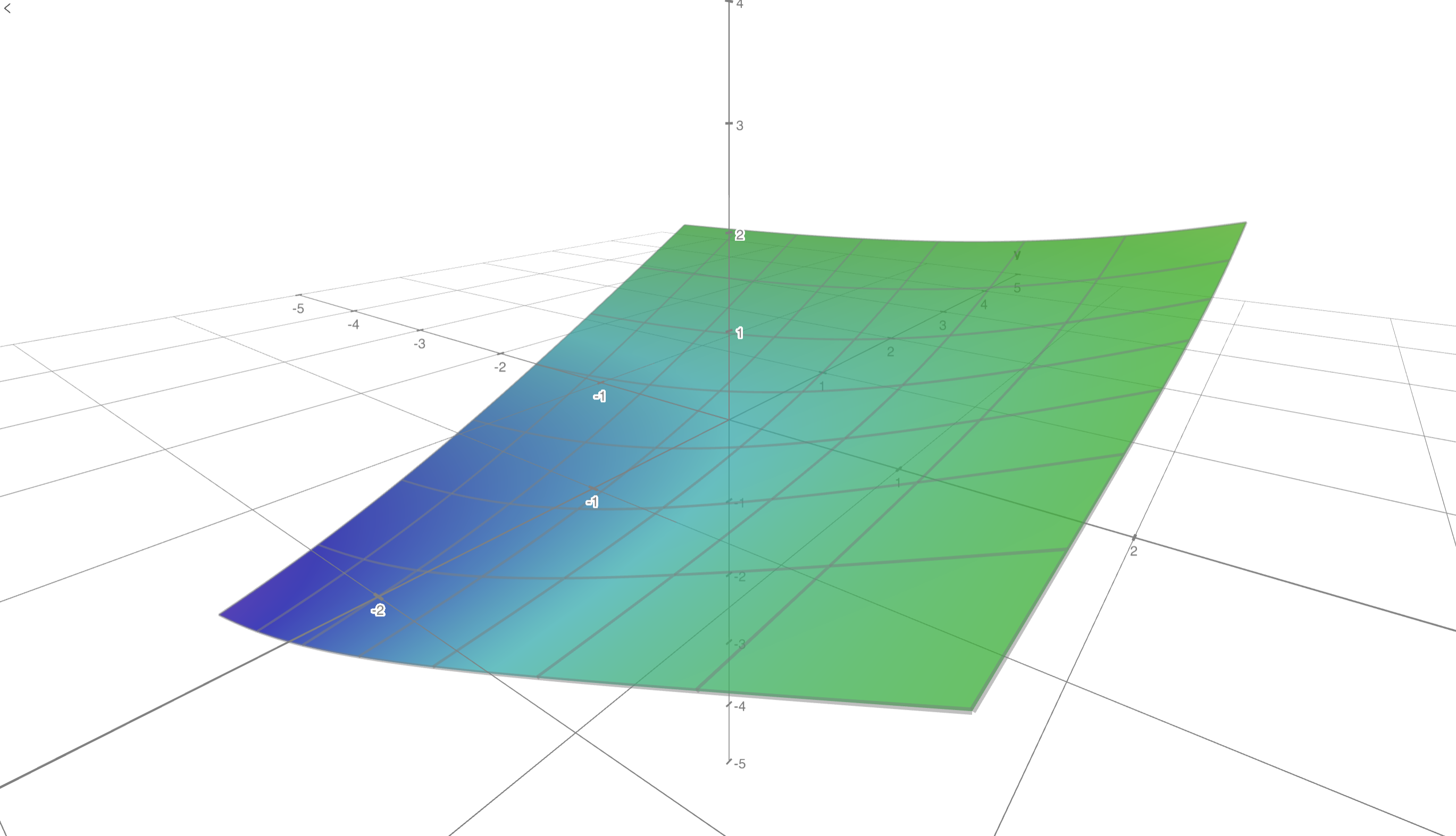}
    }\\
    \subfloat[$\theta = \langle \log(|x|),0,0 \rangle$]{
    \includegraphics[width=0.47\linewidth]{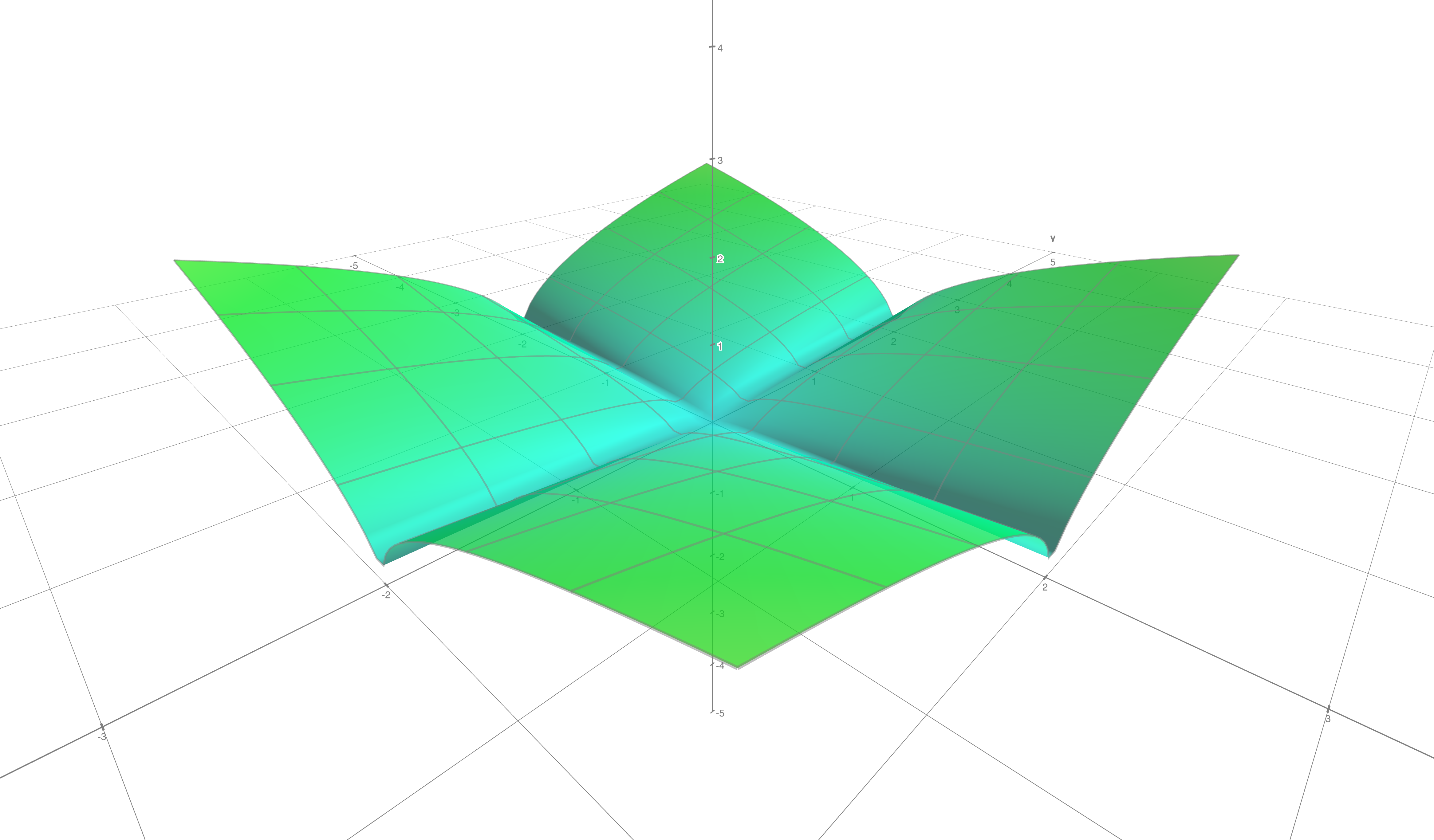}
    }
    \subfloat[$\theta = \langle \tanh(x),0,0 \rangle$]{
    \includegraphics[width=0.47\linewidth]{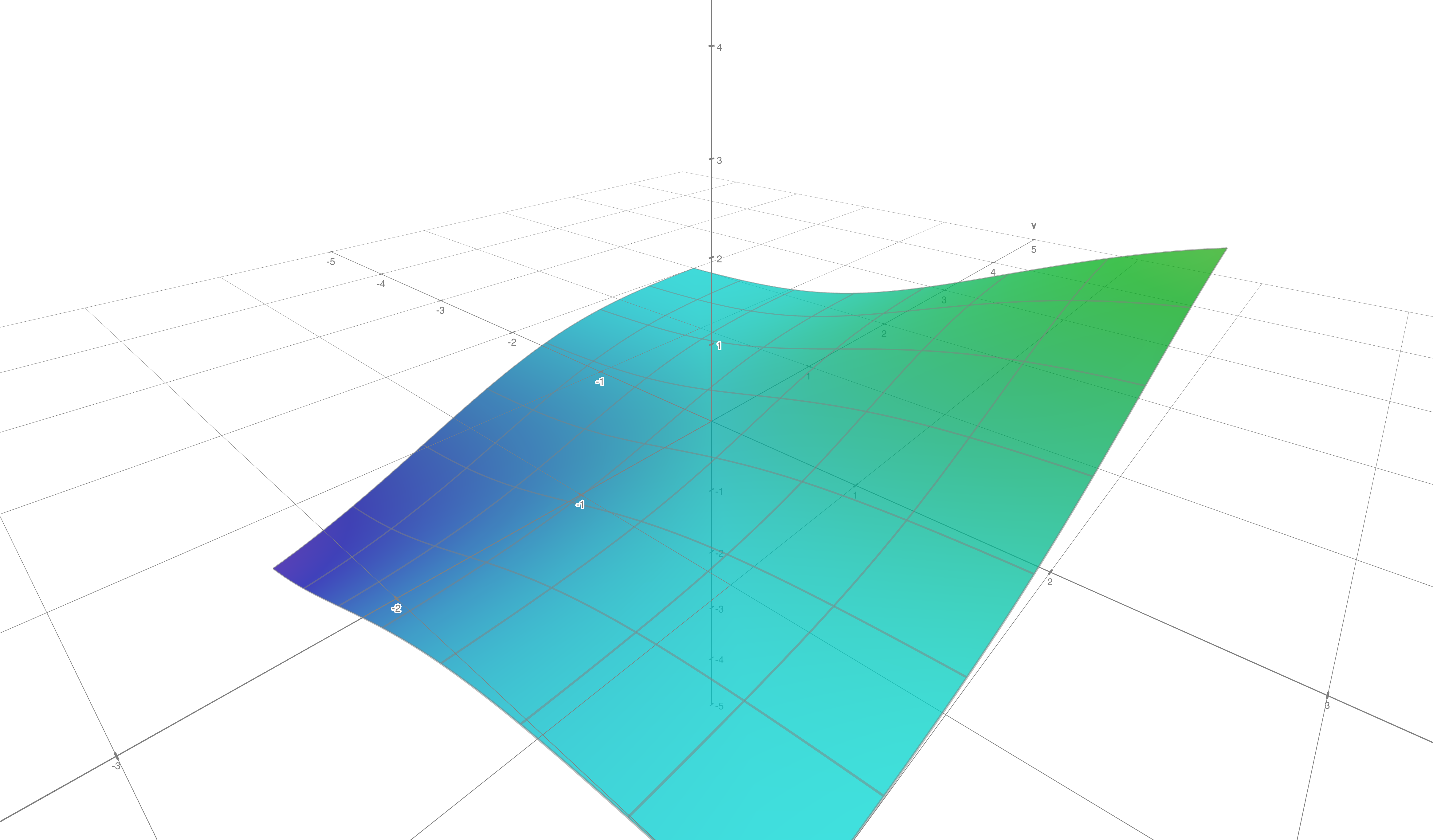}
    }\\
    \subfloat[$\theta = \langle e^x,0,1 \rangle$]{
    \includegraphics[width=0.47\linewidth]{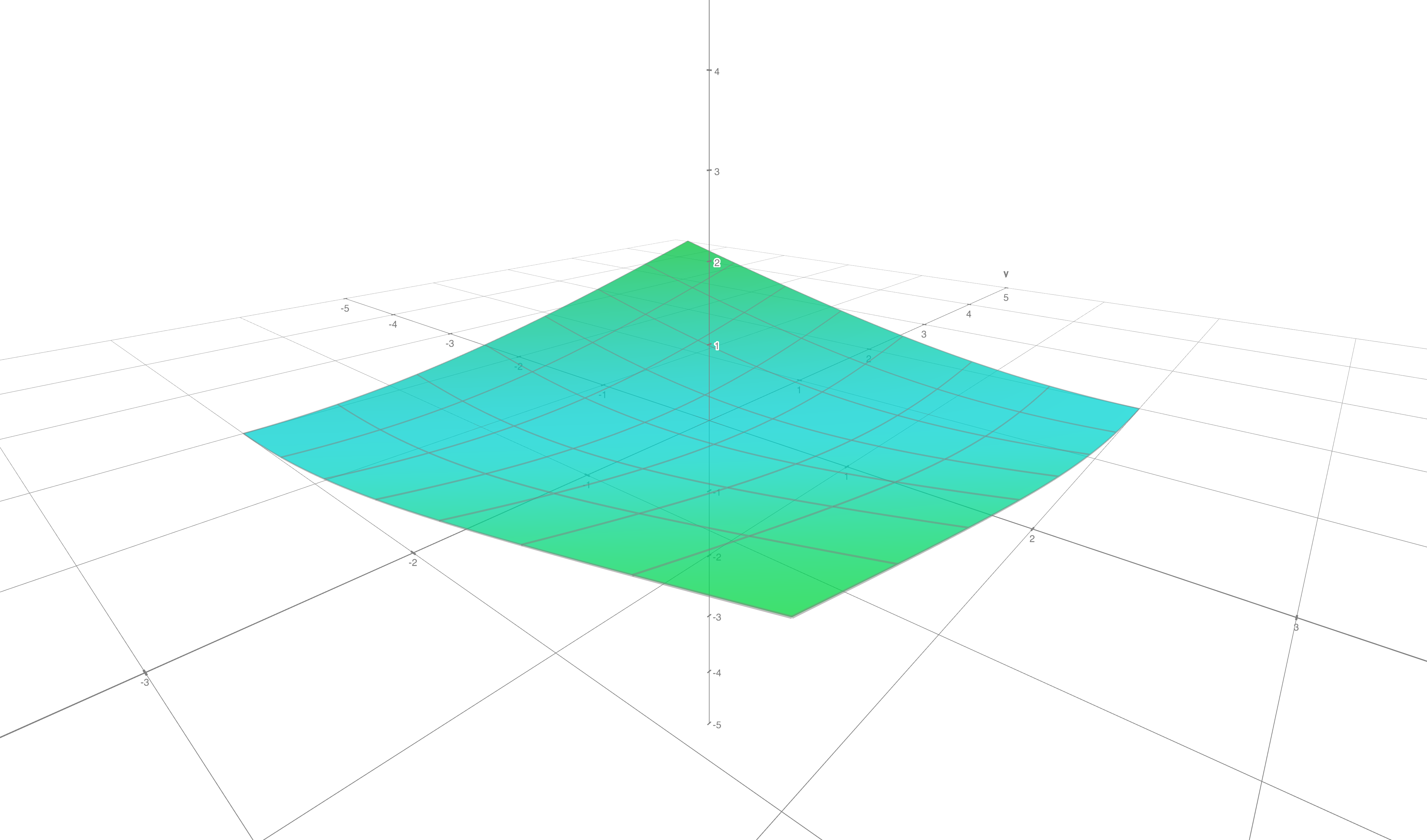}
    }
    \subfloat[$\theta = \langle \log(|x|),0,1 \rangle$]{
    \includegraphics[width=0.47\linewidth]{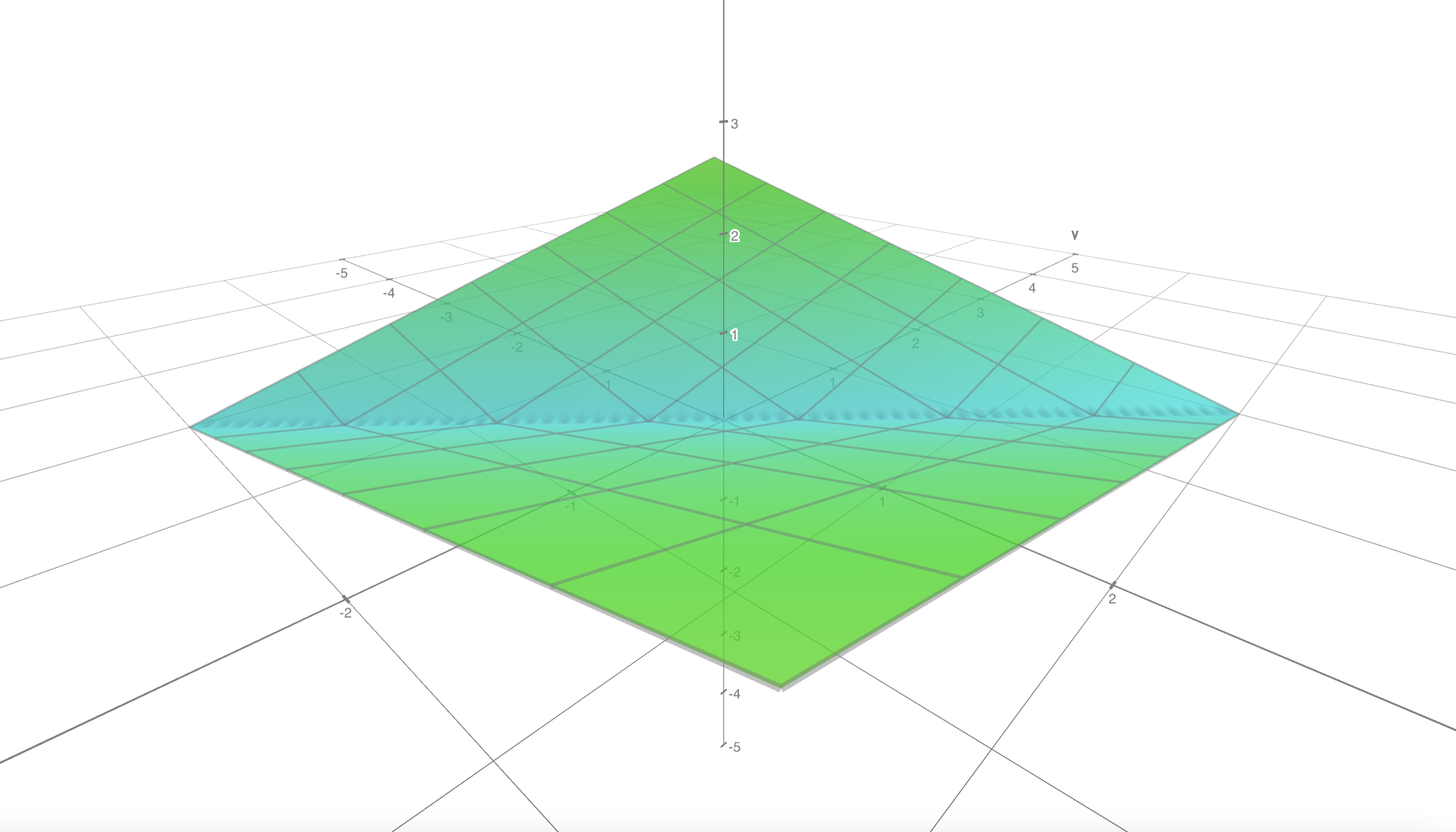}
    }
    
    \caption{\small{A qualitative demonstration of the diversity of functions that can be represented by GenAgg. In these visualisations, GenAgg is plotted as a function of inputs $x_0$ and $x_1$ for different parametrisations $\theta = \langle f, \alpha, \beta \rangle$ (see Equation \eqref{eq:genagg}).}}
    \label{fig:vis}
    \vspace{-30pt}
\end{wrapfigure}

Graph Neural Networks (GNNs) provide a powerful framework for operating over structured data. Taking advantage of relational inductive biases, they use local filters to learn functions that generalise over high-dimensional data. Given different graph structures, GNNs can represent many special cases, including CNNs (on grid graphs) \cite{cnn}, RNNs (on line graphs) \cite{rnn}, and Transformers (on fully connected graphs) \cite{transformer}. All of these architectures can be subsumed under the Graph Networks framework, parametrised by update and aggregation modules \cite{graphnets}. Although the framework itself is general, the representational capacity is often constrained in practice through design choices, which create a human prior \cite{gnn-design}. There are two primary reasons for introducing this human prior. First, there are no standard methods to parametrise all of the modules---MLPs can be used as universal approximators in the update modules, but it is nontrivial to parametrise the function space of aggregators. Consequently, most GNNs simply make a design choice for the aggregation functions, selecting mean, sum, or max \cite{gnn-design}. Second, constraints can boost performance in GNNs, either through invariances or a regularisation effect.

In this paper, we focus on the problem of parametrising the space of aggregation functions. The ultimate goal is to create an aggregation function which can represent the set of all desired aggregators while remaining as constrained as possible. In prior work, one approach is to introduce learnable parameters into functions that could parametrise min, mean, and max, such as the Powermean and a variant of Softmax \cite{pnorm, gnn-genmean, gnn-genmean2}. However, these approaches can only parametrise a small set of aggregators, and they can introduce instability in the training process (see Section \ref{related work}). On the opposite end of the spectrum, methods like Deep Sets \cite{deepsets} and LSTMAgg \cite{lstm-agg} are capable of universal approximation over set functions, but they are extremely complex, which leads to poor sample efficiency. These methods scale in complexity (\textit{i.e.} number of parameters) with the dimensionality of the input, and lack some of the useful constraints that are shared among standard aggregators (see Section \ref{related work}). Consequently, the complexity of these methods counteracts the benefits of simple GNN architectures.

Although existing approaches present some limitations, the theoretical advantages of a learnable aggregation module are evident. It has been shown that the choice of aggregation function not only has a significant impact on performance, but also is problem-specific \cite{gnn-design}.  Since there is no aggregator that can discriminate between all inputs \cite{pna}, it is important to select an aggregator that preserves the relevant information. In this paper, we present a method that parametrises the function space, allowing GNNs to learn the most appropriate aggregator for each application.

\subsection*{Contributions}

\begin{itemize}
    \item We introduce Generalised Aggregation (GenAgg), the first aggregation method based on the \textit{generalised f-mean}. GenAgg is a learnable permutation-invariant aggregator which is provably capable (both theoretically and experimentally) of representing all ``standard aggregators'' (see Appendix \ref{appendix:param-proofs} for proofs). The representations learned by GenAgg are \textit{explainable}---each learnable parameter has an interpretable meaning (Section \ref{discussion}).

    \item Our experiments provide several insights about the role of aggregation functions in the performance of GNNs. In our regression experiments, we demonstrate that GNNs struggle to ``make up for'' the lack of representational complexity in their constituent aggregators, even when using state-of-the-art parametrised aggregators. This finding is validated in our GNN benchmark experiments, where we show that a GenAgg-based GNN outperforms all of the baselines, including standard aggregators and other state-of-the-art parametrised aggregators.
    
    \item Finally, we show that GenAgg satisfies a generalisation of the distributive property. We derive the solution for a binary operator that satisfies this property for given parametrisations of GenAgg. The generalised distributive property can be leveraged in algorithms using GenAgg to improve space and memory-efficiency.
\end{itemize}


\section{Problem Statement}
\label{problem statement}
Consider a multiset $\mathcal{X} = \{x_1, x_2, \ldots, x_n\}$ of cardinality $|\mathcal{X}| = n$, where $x_i \in \mathbb{R}^d$. We define an aggregation function as a symmetric function $\bigodot: \mathbb{R}^{n \times d} \mapsto \mathbb{R}^{1 \times d}$. The aggregator must be independent over the feature dimension, so without loss of generality it can be represented over a single dimension $\bigodot: \mathbb{R}^{n} \mapsto \mathbb{R}^{1}$. A set of standard aggregators is defined $\mathcal{A} = \{\mathrm{mean}, \mathrm{sum}, \mathrm{product}, \mathrm{min}, \mathrm{max}, \ldots\}$ (for the full list see Table \ref{table:specialcases}). Our task is to create an aggregator $\bigoplus_\theta: \mathbb{R}^{n} \mapsto \mathbb{R}^1$ parametrised by $\theta$ which can represent all standard aggregators: $\forall \! \bigodot_i \! \in \! \mathcal{A} \; \exists \theta: \; \bigoplus_\theta = \bigodot_i$.

\section{Method}
\label{method}
\renewcommand{\arraystretch}{2.3}
\begin{table*}[t]
\centering
\begin{adjustbox}{width=1.0\linewidth}
\begin{tabular}{| l | c | c | c | c | c | c |} 
\hline
Aggregation Function & $\alpha$ & $\beta$ & $f$ & GenAgg & SoftmaxAgg & PowerAgg \\
\hline
\hline
mean: \( \displaystyle \frac{1}{n} \sum x_i \) & $0$ & $0$ & $f(x) = x$ & {\cmark} & {\cmark} & {\cmark} \\
\hline
sum: \( \displaystyle \sum x_i \) & $1$ & $0$ & $f(x) = x$ & {\cmark} & {\xmark} & {\xmark} \\
\hline
product: \( \displaystyle \prod |x_i| \) & $1$ & $0$ & $f(x) = \log(|x|)$ & {\cmark} & {\xmark} & {\xmark}  \\
\hline
min (magnitude): \( \displaystyle \min |x_i| \) & $0$ & $0$ & $f(x) = \lim_{p \to \infty} |x|^{-p}$ & {\cmark} & {\xmark} & {\cmark}  \\
\hline
max (magnitude): \( \displaystyle \max |x_i| \) & $0$ & $0$ & $f(x) = \lim_{p \to \infty} |x|^p$ & {\cmark} & {\xmark} & {\cmark}  \\
\hline
min: \( \displaystyle \min x_i \) & $0$ & $0$ & $f(x) = \lim_{p \to \infty} e^{-px}$ & {\cmark} & {\cmark} & {\xmark} \\
\hline
max: \( \displaystyle \max x_i \) & $0$ & $0$ & $f(x) = \lim_{p \to \infty} e^{px}$ & {\cmark} & {\cmark} & {\xmark} \\
\hline
harmonic mean: \( \displaystyle \frac{n}{\sum \frac{1}{x_i}} \) & $0$ & $0$ & $f(x) = \frac{1}{x}$ & {\cmark} & {\xmark} & {\cmark}  \\
\hline
geometric mean: \( \displaystyle \sqrt[n]{\prod |x_i|} \) & $0$ & $0$ & $f(x) = \log(|x|)$ & {\cmark} & {\xmark} & {\cmark} \\
\hline
root mean square: \( \displaystyle \sqrt{\frac{1}{n} \sum x_i^2} \) & $0$ & $0$ & $f(x) = x^2$ & {\cmark} & {\xmark} & {\cmark}  \\
\hline
euclidean norm: \( \displaystyle \sqrt{\sum x_i^2} \) & $1$ & $0$ & $f(x) = x^2$ & {\cmark} & {\xmark} & {\xmark}  \\
\hline
standard deviation: \( \displaystyle \sqrt{\frac{1}{n} \sum (x_i-\mu)^2} \) & $0$ & $1$ & $f(x) = x^2$ & {\cmark} & {\xmark} & {\xmark} \\
\hline
log-sum-exp: \( \displaystyle \log \left( \sum e^{x_i} \right) \) & $1$ & $0$ & $f(x) = e^x$ & {\cmark} & {\xmark} & {\xmark} \\
\hline
\end{tabular}
\end{adjustbox}
\caption{A table of all of the most common aggregators. For each special case, we specify the values of $\alpha$, $\beta$, and $f$ for which the augmented $f$-mean is equivalent (see Appendix \ref{appendix:param-proofs}). We also report whether or not SoftmaxAgg and PowermeanAgg can represent each aggregator.}
\label{table:specialcases}
\end{table*}

In this section we introduce GenAgg, a parametrised aggregation function which is based on the \textit{generalised $f$-mean} \cite{fmean}. In our formulation, we introduce additional parameters to increase the representational capacity of the $f$-mean, producing the \textit{augmented $f$-mean} (AFM). Then, as the implementation is non-trivial, we propose a method to implement it. The novel aspect of GenAgg is not only the augmented $f$-mean formula, but also the implementation, which allows the mathematical concept of a generalised mean to be utilised in a machine learning context.

\subsection{Generalised $f$-Mean}

The \textit{generalised f-mean} \cite{fmean} is given by: $f^{-1}(\frac{1}{n} \sum_i f(x_i))$. While it is difficult to define aggregation functions, the generalised $f$-mean provides a powerful intuition: most aggregators can be represented by a single invertible scalar-valued function $f: \mathbb{R}^1 \mapsto \mathbb{R}^1$. This is a useful insight, because it allows comparisons to be drawn between aggregators by analysing their underlying functions $f$. Furthermore, it provides a framework for discovering new aggregation functions. While classic functions like ${e^x, \log(x), x^p}$ all map to aggregators in $\mathcal{A}$, new aggregators can be created by defining new functions $f$.

\subsection{Augmented $f$-Mean}

The standard generalised $f$-mean imposes strict constraints, such as symmetry (permutation-invariance), idempotency ($\bigoplus(\{x,\ldots,x\}) = x$), and monotonicity ($\forall i \in [1..n], \frac{\partial \bigoplus(\{x_1, \ldots, x_n\})}{\partial{x_i}} \geq 0$). However, this definition is too restrictive to parametrise many special cases of aggregation functions. For example, sum violates idempotency ($\sum_{i \in [1..n]} x_i = nx$), and standard deviation violates monotonicity ($\frac{\partial \sigma(\{x_1, 1\})}{\partial{x_1}} \rvert_{x_1=0} < 0$). Consequently, we deem these constraints to be counterproductive. In our method, we introduce learnable parameters $\alpha$ and $\beta$ to impose a relaxation on the idetempotency and monotonicity constraints, while maintaining symmetry. We call this relaxed formulation the \textit{augmented $f$-mean} (AFM), given by:
\begin{equation}
    \bigoplus_{i \in [1..n]} x_i = f^{-1} \left( n^{\alpha-1} \sum_{i \in [1..n]} f(x_i - \beta \mu) \right) .
    \label{eq:genagg}
\end{equation}
The $\alpha$ parameter allows AFM to control its level of dependence on the cardinality of the input $\mathcal{X}$. For example, given $f(x) = \log(|x|)$, if $\alpha = 0$, then AFM represents the geometric mean: $\bigoplus_{\langle f, \alpha, \beta \rangle} = \bigoplus_{\langle \log(|x|), 0, 0 \rangle} = \sqrt[n]{\prod |x_i|}$. However, if $\alpha=1$, then the $n$-th root disappears, and AFM represents a product: $\bigoplus_{\langle \log(|x|), 1, 0 \rangle} = {\prod |x_i|}$.

The $\beta$ parameter enables AFM to calculate \textit{centralised moments}, which are quantitative measures of the distribution of the input $\mathcal{X}$ \cite{moments}. The first raw moment of $\mathcal{X}$ is the mean $\mu = \frac{1}{n}\sum{x_i}$, and the $k$-th central moment is given by $\mu_k = \sum{(x_i-\mu)^k}$. With the addition of $\beta$, it becomes possible for AFM to represent $\sqrt[k]{\mu_k}$, the $k$-th root of the $k$-th central moment. For $k=2$, this quantity is the standard deviation, which is in our set of standard aggregators $\mathcal{A}$. If the output is scaled to the $k$-th power, then it can also represent metrics such as variance, unnormalised skewness, and unnormalised kurtosis. It is clear that these metrics about the distribution of data are useful---they can have real-world meaning (\textit{e.g.}, moments of inertia), and they have been used as aggregators in GNNs in prior work \cite{pna}. Consequently, $\beta$ provides AFM with an important extra dimension of representational complexity. In addition to representing the centralised moments when $\beta=1$ and $f(x)=x^p$, $\beta$ allows \textit{any} aggregator to be calculated in a centralised fashion. While the centralised moments are the only well-known aggregators that arise from nonzero $\beta$, there are several aggregators with qualitatively unique behaviour that can only be represented with nonzero $\beta$ (see Fig. \ref{fig:vis}).

With this parametrisation, AFM can represent any standard aggregator in $\mathcal{A}$ (Table \ref{table:specialcases}). Furthermore, by selecting new parametrisations $\theta = \langle f, \alpha, \beta \rangle$, it is possible to compose new aggregators (Fig. \ref{fig:vis}).

\subsection{Implementation}
In Equation \eqref{eq:genagg}, the manner in which $f^{-1}$ is implemented is an important design choice. One option is to learn the coefficients for an analytical function (\textit{e.g.} a truncated Taylor Series) and analytically invert it. However, it can be difficult to compute the analytical inverse of a function, and without carefully selected constraints, there is no guarantee that $f$ will be invertible. 

Another possible option is an invertible neural network (\textit{e.g.} a parametrised invertible mapping from \textit{normalising flows} \cite{normflow}). We have tested the invertible networks from normalising flows literature as implementations for $f$. While they work well on smaller tasks, these methods present speed and memory issues in larger datasets.

In practice, we find that the most effective approach is to use two separate MLPs for $f$ and $f^{-1}$. We enforce the constraint $x = f^{-1}(f(x))$ by minimizing the following optimisation objective:
\begin{equation}
    \mathcal{L}_\mathrm{inv}(\theta_1, \theta_2) = \mathbb{E} \left[ \left( \Big\lvert f_{\theta_2}^{-1}(f_{\theta_1}(x)) \Big\rvert - |x| \right)^2 \right] .
\end{equation}
The absolute value operations apply a relaxation to the constraint, allowing $f^{-1}(f(x))$ to reconstruct either $x$ or $|x|$. This is useful because several of the ground truth functions from Table \ref{table:specialcases} include an absolute value, making them non-invertible. With this relaxation, it becomes possible to represent those cases. This optimisation objective ensures that $f$ is both monotonic and invertible over the domains $\mathbb{R}^+$ and $\mathbb{R}^-$, independently. In our implementation, this extra optimisation objective is hidden behind the GenAgg interface and gets applied automatically with a forward hook, so it is not necessary for the user to apply an extra loss term.

While using a scalar-valued $f: \mathbb{R}^1 \mapsto \mathbb{R}^1$ is the most human-interpretable formulation, it is not necessary. A valid implementation of GenAgg can also be achieved with $f: \mathbb{R}^1 \mapsto \mathbb{R}^d$ and $f^{-1}: \mathbb{R}^d \mapsto \mathbb{R}^1$. In our experiments, we found that mapping to a higher intermediate dimension can sometimes improve performance over a scalar-valued $f$ (see training details in Appendix \ref{appendix:training-details}).

\subsection{Generalised Distributive Property}

Given that GenAgg presents a method of parameterising the function space of aggregators, it can also be used as a tool for mathematical analysis. To demonstrate this, we use the augmented $f$-mean to analyse a generalised form of the distributive property, which is satisfied if $\psi \left(c, \bigodot_{x_i \in \mathcal{X}} x_i \right) = \bigodot_{x_i \in \mathcal{X}} \psi(c, x_i)$ for binary operator $\psi$ and aggregator $\bigodot$. For a given aggregation function parametrised by $f$ (assuming $\beta$ is 0), we derive a closed-form solution for a corresponding binary operator which will satisfy the generalised distributive property (for further explanation and proofs, see Appendix \ref{appendix:distributive}). 

\begin{theorem}
    For GenAgg parametrised by $\theta = \langle f, \alpha, \beta \rangle = \langle f, \alpha, 0 \rangle$, the binary operator $\psi$ which will satisfy the Generalised Distributive Property for $\bigoplus_\theta$ is given by: 
    \begin{equation}
        \psi(a,b) = f^{-1}(f(a) \cdot f(b))
        \label{eq:dist}
    \end{equation}
    Furthermore, for the special case $\theta = \langle f, \alpha, \beta \rangle = \langle f, 0, 0 \rangle$, there $\psi(a,b) = f^{-1}(f(a) + f(b))$ is an additional solution.
    \label{theorem:distributive}
\end{theorem}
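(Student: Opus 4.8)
The plan is to prove both claims by direct substitution into Equation \eqref{eq:genagg} with $\beta = 0$, using only the defining identity $f(f^{-1}(y)) = y$ and the fact that scalars pull out of sums. Write $S = \sum_{i \in [1..n]} f(x_i)$, so that $\bigoplus_\theta$ applied to $\mathcal{X}$ equals $f^{-1}(n^{\alpha-1} S)$ and hence $f\!\left(\bigoplus_\theta \mathcal{X}\right) = n^{\alpha-1} S$. Everything reduces to tracking how $f$ and $f^{-1}$ cancel and how the binary operation interacts with the sum.

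First I would handle the general case $\psi(a,b) = f^{-1}(f(a)\,f(b))$. For the left-hand side of the generalised distributive property, $\psi\!\left(c, \bigoplus_\theta \mathcal{X}\right) = f^{-1}\!\left(f(c)\cdot n^{\alpha-1} S\right)$ by the observation above. For the right-hand side, note first that $f(\psi(c, x_i)) = f(c)\,f(x_i)$, so $\bigoplus_\theta \{\psi(c,x_i)\}_i = f^{-1}\!\left(n^{\alpha-1}\sum_i f(c)\,f(x_i)\right) = f^{-1}\!\left(f(c)\cdot n^{\alpha-1} S\right)$. The two expressions coincide because multiplication by $f(c)$ commutes both with the summation and with the scalar factor $n^{\alpha-1}$ — and, crucially, this holds for \emph{every} value of $\alpha$, which is what allows the multiplicative solution to be stated without restricting $\alpha$.

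Next I would treat the special case $\alpha = 0$ with $\psi(a,b) = f^{-1}(f(a)+f(b))$. Now $f(\psi(c,x_i)) = f(c) + f(x_i)$, so the right-hand side becomes $f^{-1}\!\left(n^{-1}\sum_i (f(c)+f(x_i))\right) = f^{-1}\!\left(n^{-1}\cdot n f(c) + n^{-1} S\right) = f^{-1}\!\left(f(c) + n^{-1} S\right)$, while the left-hand side is $f^{-1}\!\left(f(c) + n^{-1}S\right)$ directly; they match. Repeating the same computation for general $\alpha$ gives $n^{\alpha-1}(n f(c) + S) = n^{\alpha} f(c) + n^{\alpha-1} S$ on the right versus $f(c) + n^{\alpha-1} S$ on the left, which agree only when $n^{\alpha} = 1$ for all $n$, i.e. $\alpha = 0$; this explains why the additive operator is an \emph{additional} solution precisely in that regime and not otherwise.

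The algebra itself is routine, so the only real subtlety — and the thing I would state carefully — concerns $f^{-1}$. Because the practical implementation only enforces $f^{-1}(f(x)) = |x|$, the cancellation $f(f^{-1}(y)) = y$ is genuinely valid only when $f$ is taken to be injective on the relevant branch ($\mathbb{R}^+$ or $\mathbb{R}^-$) and all the quantities fed into $f^{-1}$ — namely $n^{\alpha-1} S$, $f(c)\,f(x_i)$, and $f(c)+f(x_i)$ — lie in the image of $f$ on that branch. I would therefore prepend an explicit domain/invertibility hypothesis (which each parametrisation in Table \ref{table:specialcases} is easily checked to satisfy), after which the two short computations above go through verbatim.
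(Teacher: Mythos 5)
Your verification is correct, but it takes a genuinely different and more direct route than the paper. The paper first posits the ansatz $\psi(a,b) = \phi^{-1}(\phi(a)+\phi(b))$, proves a lemma (Lemma \ref{lemma:distributive}) reducing the Generalised Distributive Property to a translation-equivariance condition on $\rho = f\circ\phi^{-1}$, then exhibits $\rho(x)=e^x$ (valid for all $\alpha$) and $\rho(x)=x$ (valid only for $\alpha=0$) and back-solves for $\phi$ to recover the multiplicative and additive forms of $\psi$. You instead substitute the claimed $\psi$ directly into both sides of the identity and cancel using $f\circ f^{-1}=\mathrm{id}$ and the linearity of the sum, which is shorter, avoids the ansatz and the change of variables entirely, and as a bonus makes explicit \emph{why} the additive operator fails for $\alpha\neq 0$ (the mismatch $n^{\alpha}f(c)$ versus $f(c)$), a point the paper leaves implicit. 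What the paper's derivational route buys is a recipe for generating candidate operators from choices of $\rho$ within the conjugated-addition family, whereas your route only confirms the two stated solutions; note that neither argument establishes that these are the \emph{only} operators satisfying the property, so the definite article in the theorem statement is not fully justified by either proof. Your closing caveat about $f^{-1}$ being a genuine inverse only on the relevant branch is a sensible hypothesis to make explicit; the paper's proof tacitly assumes exact invertibility throughout.
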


\renewcommand{\arraystretch}{2.}
\begin{table*}[t]
\centering
\begin{adjustbox}{width=0.55\linewidth}
\begin{tabular}{| c | c |} 
\hline
Aggregation Function & Distributive Operations $\psi(a,b)$\\
\hline
\hline
mean: \( \displaystyle \frac{1}{n} \sum x_i \) & $a+b$, \hspace{0.5em} $a \cdot b$  \\
\hline
sum: \( \displaystyle \sum x_i \) & $a \cdot b$ \\
\hline
product: \( \displaystyle \prod |x_i| \) & $|a|^{\log|b|}$ \\
\hline
min (magnitude): \( \displaystyle \min |x_i| \) & $\min(|a|,|b|)$ \\
\hline
max (magnitude): \( \displaystyle \max |x_i| \) & $\max(|a|,|b|)$ \\
\hline
min: \( \displaystyle \min x_i \) & $\min(a,b)$ \\
\hline
max: \( \displaystyle \max x_i \) & $\max(a,b)$ \\
\hline
harmonic mean: \( \displaystyle \frac{n}{\sum \frac{1}{x_i}} \) & $\frac{a \cdot b}{a + b}$, \hspace{0.5em} $a \cdot b$ \\
\hline
geometric mean: \( \displaystyle \sqrt[n]{\prod |x_i|} \) & $|a| \cdot |b|$, \hspace{0.5em} $|a|^{\log|b|}$ \\
\hline
root mean square: \( \displaystyle \sqrt{\frac{1}{n} \sum x_i^2} \) & $\sqrt{a^2 + b^2}$, \hspace{0.5em} $|a| \cdot |b|$\\
\hline
euclidean norm: \( \displaystyle \sqrt{\sum x_i^2} \) & $|a| \cdot |b|$ \\
\hline
standard deviation: \( \displaystyle \sqrt{\frac{1}{n} \sum (x_i-\mu)^2} \) & $|a| \cdot |b|$ \\
\hline
log-sum-exp: \( \displaystyle \log \left( \sum e^{x_i} \right) \) & $a+b$ \\
\hline
\end{tabular}
\end{adjustbox}
\caption{A table of the distributive operations $\psi$ that satisfy each aggregation function, computed using \autoref{eq:dist}. All aggregation functions have at least one solution, and some special cases have multiple solutions.}
\end{table*}

For example, for the euclidean norm where $f(x)=x^2$ and $\alpha=1$, the binary operator is $\psi(a,b) = (a^2 \cdot b^2)^{\frac{1}{2}} = a \cdot b$, which implies that a constant multiplicative term can be moved outside of the euclidean norm. This is a useful finding, as the distributive property can used to improve algorithmic time and space complexity (\textit{e.g.} the FFT) \cite{distributive}. With our derivation of $\psi$ as a function of $f$, it is possible to implement similar efficient algorithms with GenAgg.

\section{Related Work}
\label{related work}
Several existing works propose methods to parametrise the space of aggregation functions. These methods can broadly be divided into two categories. \textit{Mathematical} approaches derive an explicit equation in terms of the inputs and one or more learnable parameters. Usually, these approaches represent a smooth interpolation through function space from min, through mean, to max. Alternatively, \textit{Deep Learning} approaches seek to use the universal approximation properties of neural networks to maximise representational complexity.

\subsection{Mathematical Approaches}

\textbf{SoftmaxAgg} SoftmaxAgg computes the weighted sum of the set, where the weighting is derived from the softmax over the elements with some learnable temperature term $s$ \cite{gnn-genmean, gnn-genmean2}. This formulation allows SoftmaxAgg to represent $\text{mean}$, $\text{min}$, and $\text{max}$ (see Table \ref{table:specialcases}). Unfortunately, it fails to generalise across the majority of the standard aggregators.


\textbf{PowerAgg} Based on the $p$-norm, PowerAgg is a special case of GenAgg where $\alpha=0$, $\beta=0$, and $f(x) = x^p$. There are some methods which use the powermean directly \cite{gnn-genmean, gnn-genmean2, pnorm}, and others which build on top of it \cite{laf}. Theoretically, PowerAgg can represent a significant subset of the standard aggregators: $\text{min magnitude}$, $\text{max magnitude}$, $\text{mean}$, $\text{root mean square}$, $\text{harmonic mean}$, and $\text{geometric mean}$ (although the geometric mean requires $\lim_{p \to 0}$, so it is not practically realisable) (see Table \ref{table:specialcases}). Unfortunately, there is a caveat to this approach: for negative inputs $x_i < 0$ and non-integer values $p$, it is only defined in the complex domain. Furthermore, for negative inputs, the gradient $\frac{\partial x^p}{\partial p}$ with respect to trainable parameter $p$ is complex and oscillatory (and therefore is prone to getting stuck in local optima). In order to fix this problem, the inputs must be constrained to be positive. In prior work, this has been achieved by clamping $x_i' = \max(x_i, 0)$ \cite{gnn-genmean}, subtracting the minimum element $x_i' = x_i - \min(\mathcal{X})$ \cite{gnn-genmean2}, or taking the absolute value $x_i'=|x_i|$ \cite{pnorm}. However, this removes important information, making it impossible to reconstruct most standard aggregators.


\subsection{Deep Learning Approaches}

\textbf{PNA} While Principle Neighbourhood Aggregation \cite{pna} is introduced as a GNN architecture, its novelty stems from its method of aggregation. In PNA, input signal is processed by a set of aggregation functions, which is produced by the cartesian product of standard aggregators $\{\text{mean}, \text{min}, \text{max}, \text{std}\}$ and scaling factors $\{\frac{1}{n}, 1, n\}$. The output of every aggregator is concatenated, and passed through a dense network. While this increases the representational complexity of the aggregator, it also scales the dimensionality of the input by the number of aggregators multiplied by the number of scaling factors, which can decrease sample efficiency (\autoref{fig:benchmarks}). Furthermore, the representational complexity of the method is limited by the choice of standard aggregators---it cannot be used to represent many of the special cases of parametrised general aggregators.

\textbf{LSTMAgg} In LSTMAgg, the input set is treated as a sequence (applying some random permutation), and is encoded with a recurrent neural network \cite{lstm-agg}. While this method is theoretically capable of universal approximation, in practice its non-permutation-invariance can cause its performance to suffer (as the factorial complexity of possible orderings leads to sample-inefficiency). SortAgg addresses this issue by sorting the inputs with computed features, and passing the first $k$ sorted inputs through convolutional and dense networks \cite{sort-agg}. While this method solves the issue of non-permutation-invariance, it loses the capability of universal approximation by truncating to $k$ inputs. While universal approximation is not a requirement for an effective aggregation function, we note that it cannot represent many of the standard aggregators.

\textbf{Deep Sets} Deep Sets is a universal set function approximator \cite{deepsets}. However, because it operates over the feature dimension in addition to the ``set'' dimension, it is not regarded as an aggregation function. Instead, it usually serves as a full GNN layer or graph pooling architecture \cite{readouts, attention-agg}. One may note that the formulation for Deep Sets $\phi(\sum_{i \in [1..n]} f(x_i))$ bears some resemblance to our method. However, there are two important differences. First, our method adds the constraint $\phi = f^{-1}$, limiting possible parametrisations to the subspace where all of the standard aggregators lie. Second, while the learnable functions $\phi$ and $f$ in Deep Sets are fully connected over the feature dimension, the $f$ and $f^{-1}$ modules in our architecture are scalar-valued functions which are applied element-wise. To summarise, Deep Sets is useful as a set function approximator, but it lacks constraints that would make it viable as an aggregation function.


\section{Experiments}
\label{experiments}
\begin{figure}
    \centering
    \subfloat[Aggregator Regression. \label{table:regression}]{\renewcommand{\arraystretch}{1.6}
\scalebox{0.73}{
\begin{tabular}{lcccc}
 Aggregation & GenAgg & P-Agg & S-Agg & mean \\
 \hline
 mean & \textbf{1.000} & 0.817 & \textbf{1.000} & \textbf{1.000}  \\
 sum & \textbf{1.000} & 0.761 & 0.887 & 0.888 \\
 product (mag) & \textbf{0.985} & 0.407 & 0.172 & 0.022 \\
 min (mag) & \textbf{0.962} & 0.450 & 0.024 & 0.027 \\
 max (mag) & \textbf{0.990} & 0.586 & 0.423 & 0.024 \\
 min & \textbf{0.995} & 0.734 & \textbf{1.000} & 0.805 \\
 max & \textbf{0.990} & 0.920 & \textbf{1.000} & 0.805 \\
 harm. mean (abs) & \textbf{0.986} & 0.453 & 0.088 & 0.027 \\
 geom. mean (abs) & \textbf{0.994} & 0.481 & 0.152 & 0.031 \\
 root mean square & \textbf{0.996} & 0.532 & 0.308 & 0.028 \\
 euclidean norm & \textbf{0.964} & 0.585 & 0.464 & 0.019 \\
 standard dev. & \textbf{0.999} & 0.442 & 0.558 & 0.013 \\
 log-sum-exp & \textbf{0.999} & 0.823 & 0.947 & 0.747 \\
 \hline
\end{tabular}
}
}
    \qquad
    \subfloat[GNN Regression \label{table:cross}]{\renewcommand{\arraystretch}{1.6}
\scalebox{0.73}{
\begin{tabular}{lcccc}
 Aggregation & GenAgg & P-Agg & S-Agg & mean \\
 \hline
 mean               & 0.977 & \textbf{0.999} & \textbf{1.000} & 0.972 \\
 sum                & \textbf{0.971} & 0.906 & 0.887 & 0.903 \\
 product (mag)      & \textbf{0.966} & 0.644 & 0.726 & 0.434 \\
 min (mag)          & \textbf{0.952} & 0.876 & 0.810 & 0.731 \\
 max (mag)          & \textbf{0.986} & 0.734 & 0.784 & 0.747 \\
 min                & \textbf{0.995} & 0.986 & \textbf{0.999} & 0.806 \\
 max                & \textbf{0.989} & 0.976 & \textbf{0.999} & 0.845 \\
 harm. mean (abs)   & \textbf{0.931} & 0.797 & 0.842 & 0.697 \\
 geom. mean (abs)   & \textbf{0.963} & 0.629 & 0.836 & 0.626 \\
 root mean square   & \textbf{0.975} & 0.775 & 0.808 & 0.899 \\
 euclidean norm     & \textbf{0.985} & 0.742 & 0.680 & 0.756 \\
 standard dev.      & \textbf{0.966} & 0.739 & 0.805 & 0.624 \\
 log-sum-exp        & \textbf{0.983} & 0.919 & 0.952 & 0.841 \\
 \hline
\end{tabular}
}

    \caption{Results for the Aggregator Regression and GNN Regression experiments, indicating the ability of GenAgg, PowerAgg (P-Agg), SoftmaxAgg (S-Agg), and $\mathrm{mean}$ to parametrise each standard aggregator in $\mathcal{A}$. We report the correlation coefficient between the ground truth and predicted outputs. The highest-performing methods (and those within $0.01$ correlation) are shown in bold.}
\end{figure}

In this paper, we run three experiments. First, we show that GenAgg can perform regression to recover any standard aggregation function. Then, we evaluate GenAgg and several baselines inside of a GNN. The resulting GNN architectures are given the same task of regressing upon graph-structured data generated with a standard aggregator. This tests if it is possible for a GNN with a given aggregator to represent data which was generated by different underlying aggregators. Finally, we provide practical results by running experiments on public GNN benchmark datasets: CLUSTER, PATTERN, CIFAR10, and MNIST \cite{benchmark}. 

For all baselines, we use the implementations provided in PyTorch Geometric \cite{pyg}. The only exception is PNA, which is a GNN architecture by nature, not an aggregation method. For our experiments, we adapt PNA into an aggregation method, staying as true to the original formulation as possible: $\mathrm{PNA}(\mathcal{X}) = f([1, n, \frac{1}{n}] \otimes [\mathrm{mean}(\mathcal{X}), \mathrm{std}(\mathcal{X}), \min(\mathcal{X}), \max(\mathcal{X})])$, where $f$ is a linear layer mapping from $\mathbb{R}^{12d}$ back to $\mathbb{R}^d$.

For more training details, see Appendix \ref{appendix:training-details}. Our code can be found at: \url{https://github.com/Acciorocketships/generalised-aggregation}.

\begin{figure}
    \centering
    \subfloat{\begin{tabular}{cc}
    \includegraphics[width=0.45\textwidth]{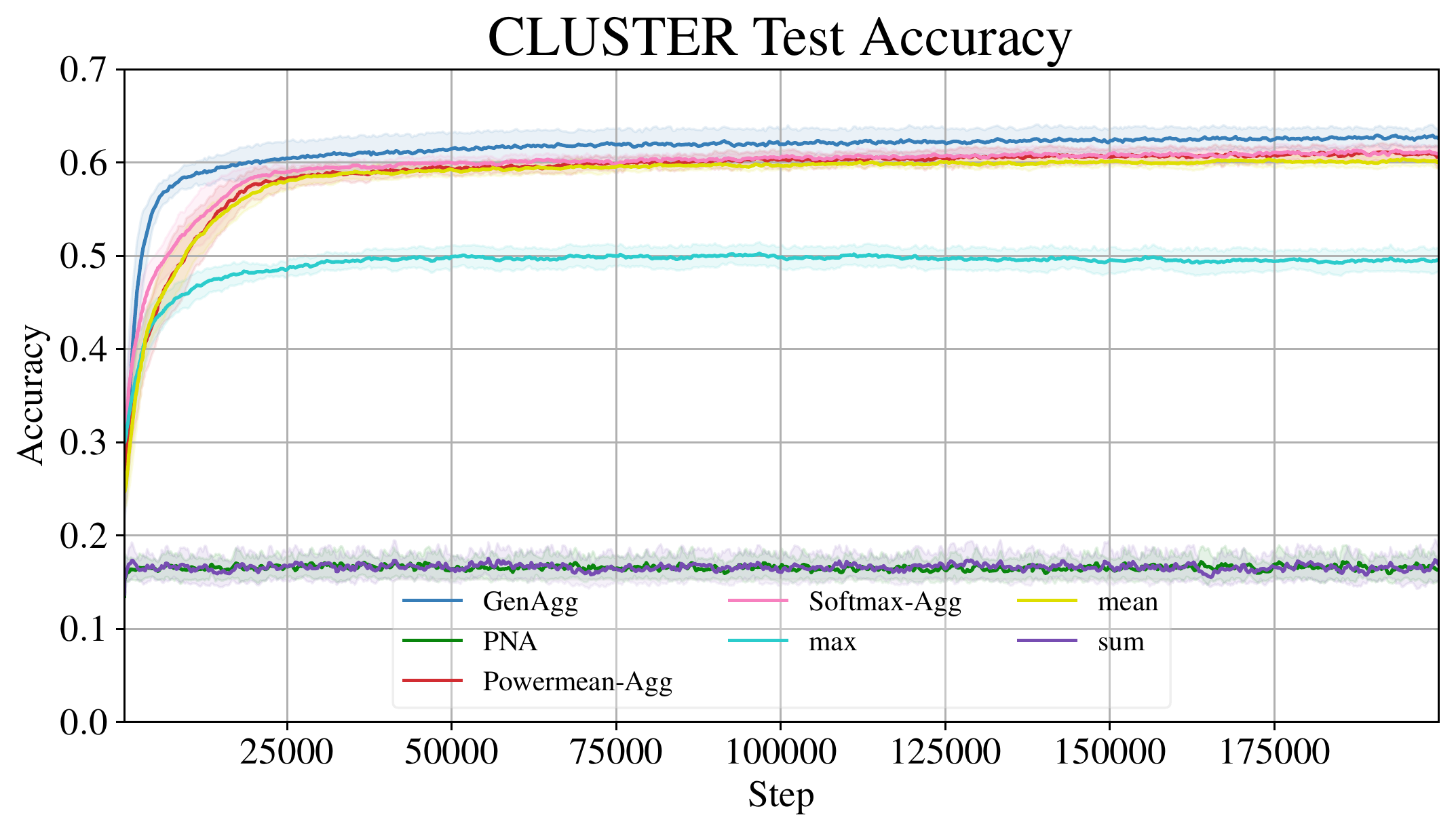} \qquad & \includegraphics[width=0.45\textwidth]{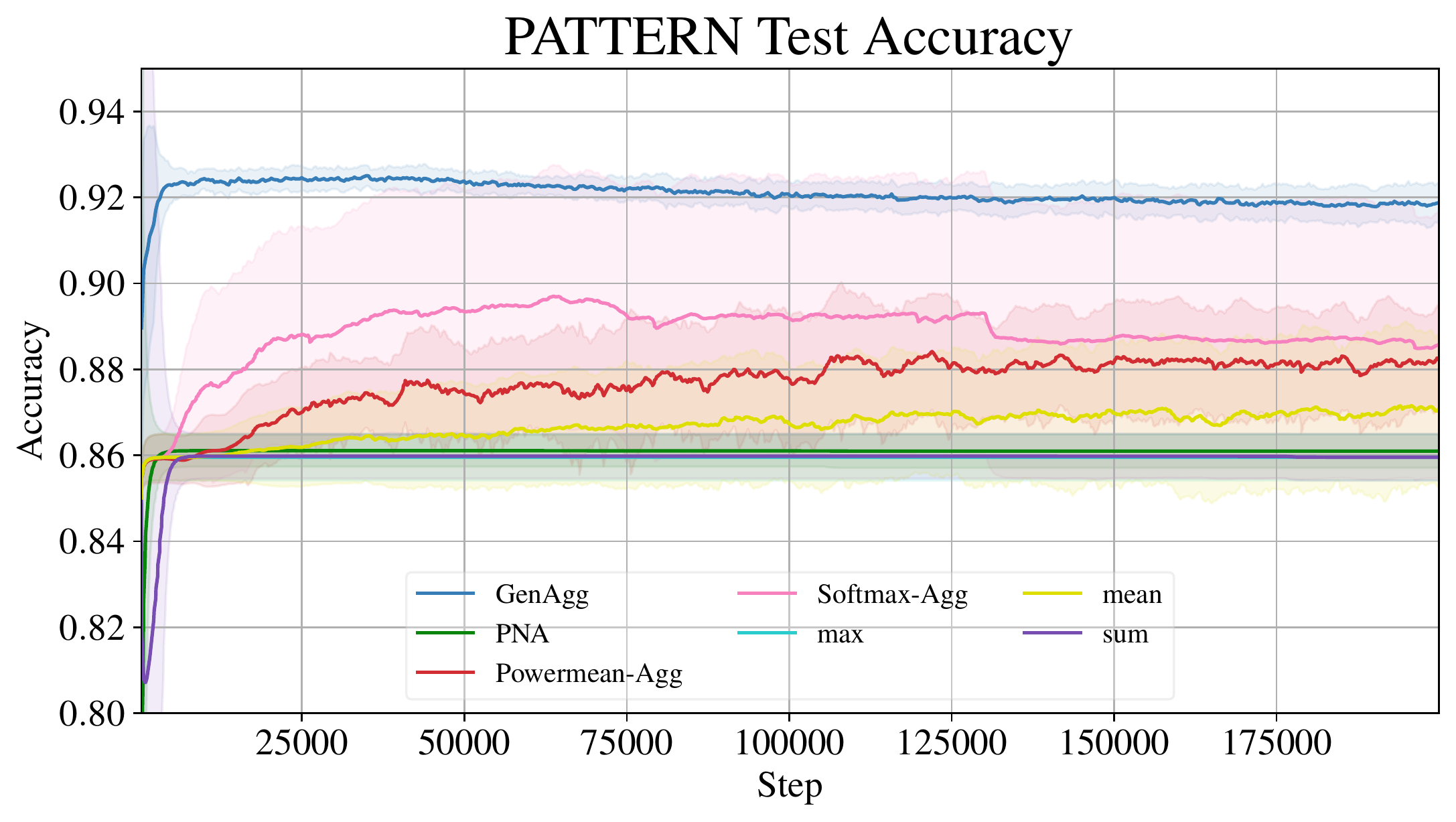} \\
    \includegraphics[width=0.45\textwidth]{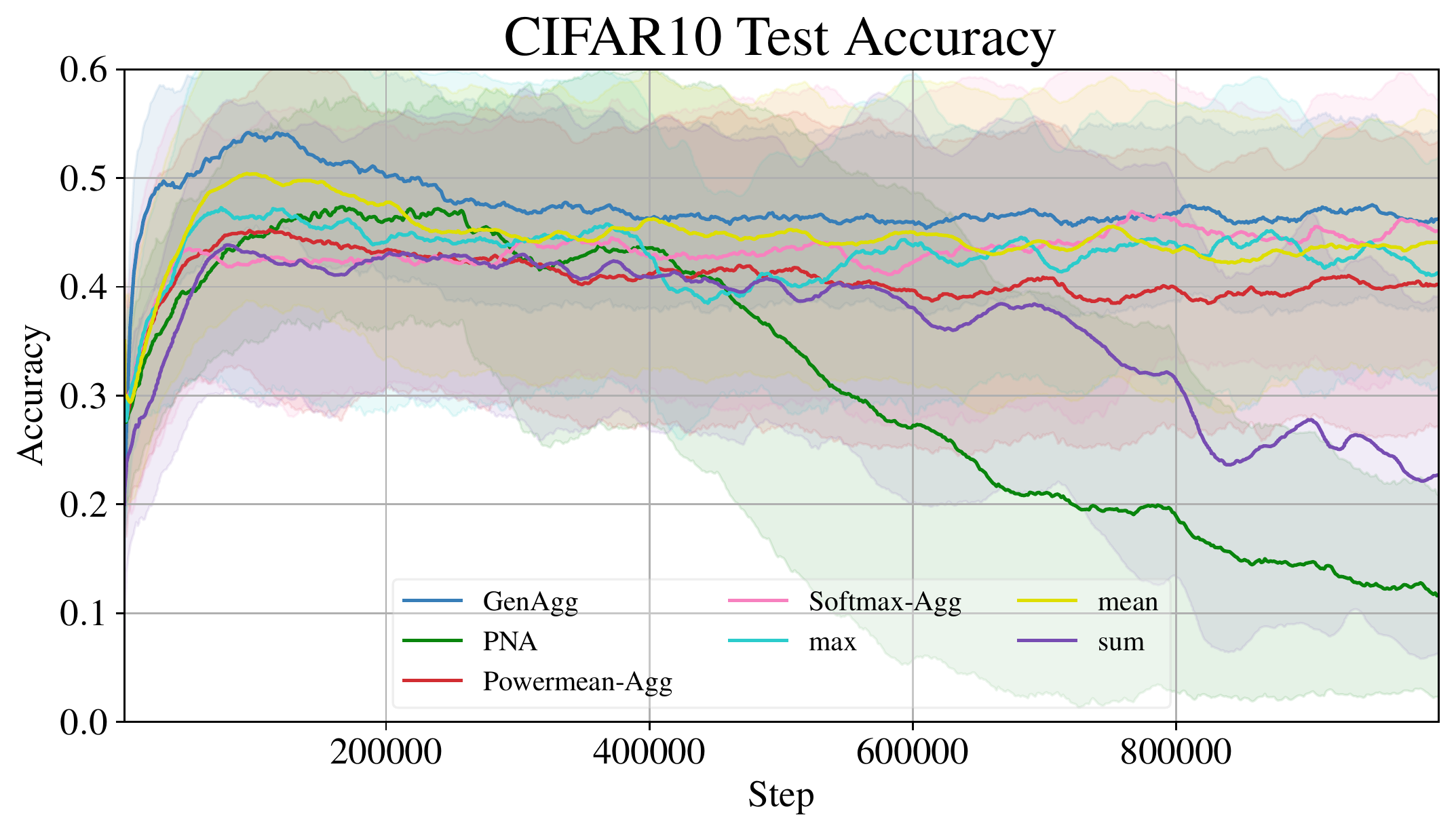} \qquad & \includegraphics[width=0.45\textwidth]{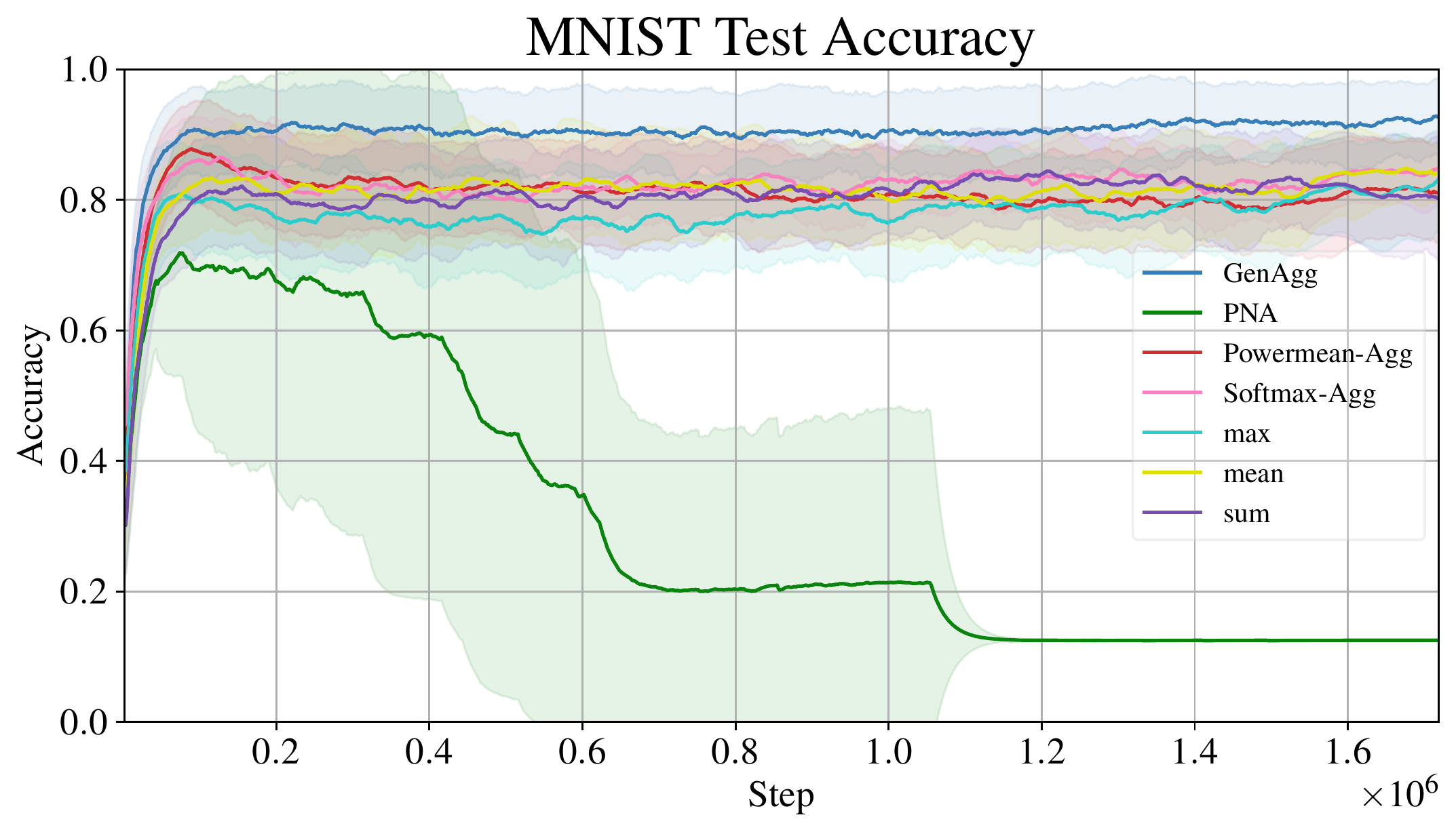}
\end{tabular}}
    \qquad
    \subfloat{\renewcommand{\arraystretch}{1.4}
\scalebox{0.8}{
\begin{tabularx}{1.1\textwidth}{lXXXXXXX}
   Dataset \hspace{1cm} & GenAgg & P-Agg & S-Agg & PNA & mean & sum & max \hspace{0.06cm} \\
   \hline
   CIFAR10 & \textbf{0.540} \newline \mbox{$\pm 0.09$} & 0.451 \mbox{$\pm 0.14$} & 0.467 \mbox{$\pm 0.13$} & 0.473 \mbox{$\pm 0.14$} & 0.502 \mbox{$\pm 0.13$} & 0.435 \mbox{$\pm 0.15$} & 0.471 \mbox{$\pm 0.13$} \\
   MNIST & \textbf{0.928} \mbox{$\pm 0.07$} & 0.877 \mbox{$\pm 0.07$} & 0.864 \mbox{$\pm 0.06$} & 0.717 \mbox{$\pm 0.19$} & 0.847 \mbox{$\pm 0.08$} & 0.844 \mbox{$\pm 0.08$} & 0.831 \mbox{$\pm 0.08$} \\
   CLUSTER & \textbf{0.627} \mbox{$\pm 0.02$} & 0.610 \mbox{$\pm 0.01$} & 0.611 \mbox{$\pm 0.01$} & 0.168 \mbox{$\pm 0.01$} & 0.602 \mbox{$\pm 0.01$} & 0.170 \mbox{$\pm 0.02$} & 0.501 \mbox{$\pm 0.01$} \\
   PATTERN & \textbf{0.925} \mbox{$\pm 0.00$} & 0.883 \mbox{$\pm 0.01$}  & 0.896 \mbox{$\pm 0.03$} & 0.861 \mbox{$\pm 0.01$} & 0.871 \mbox{$\pm 0.01$} & 0.860 \mbox{$\pm 0.01$} & 0.860 \mbox{$\pm 0.01$} \\
   \hline
\end{tabularx}
}}
    \vspace{5pt}
    \caption{Test accuracy for GNNs with various aggregators on GNN benchmark datasets. In this experiment, each trial uses the same base GNN architecture ($4$-layer GraphConv), and the default aggregator is replaced with either GenAgg, PowermeanAgg (P-Agg), SoftmaxAgg (S-Agg), PNA, mean, sum, or max. The plots depict the mean and standard deviation of the test accuracy over $10$ trials (note that the y-axis is scaled to increase readability). The table reports the maximum of the mean test accuracy over all timesteps, as well as the standard deviation (rounded to $0.01$).}
    \label{fig:benchmarks}
    \vspace{-0.5em}
\end{figure}

\subsection{Aggregator Regression}
\label{sec:aggregator-regression}

In this experiment, we generate a random graph $\mathcal{G} = \langle \mathcal{V}, \mathcal{E} \rangle$ with $|\mathcal{V}|=8$ nodes and an edge density of $\frac{|\mathcal{E}|}{\!\hphantom{^2}|\mathcal{V}|^2\!} = 0.3$. For each node $i \in \mathcal{V}$, we draw an internal state $x_i \in \mathbb{R}^d$ from a normal distribution $x_i \sim \mathcal{N}(\mathbf{0}_d, I_d)$ with $d=6$. Then, we generate training data with a set of ground truth aggregators $\bigodot_k \in \mathcal{A}$ (where $k$ is an index). For each aggregator $\bigodot_k$, the dataset $X_k,Y_k$ is produced with a Graph Network \cite{graphnets}, using $\bigodot_k$ as the node aggregation module. The inputs are defined by the set of neighbourhoods in the graph $X_k = \{\mathcal{X}_i \mid i \in [1..|\mathcal{V}|]\}$ where the neighbourhood $\mathcal{X}_i$ is defined as $\mathcal{X}_i = \{x_j \mid j \in \mathcal{N}_i \}$ with $\mathcal{N}_i = \{ j \mid (i,j) \in \mathcal{E} \}$. The corresponding ground truth outputs are defined as $Y_k = \{y_i \mid i \in [1..|\mathcal{V}|] \}$, where $y_i = \bigodot_k(\mathcal{X}_i)$.

The model that we use for regression takes the same form as the model used to generate the data, except that the standard aggregator used to generate the training data $\bigodot_k$ is replaced with a parametrised aggregator $\bigoplus_\theta$:
\begin{equation}
    \hat{y}_i = \sideset{}{_\theta}\bigoplus_{x_j \in \mathcal{X}_i} x_j
\end{equation}
In our experiments, each type of parametrised aggregator (GenAgg, SoftmaxAgg, PowerAgg, and mean as a baseline) is trained separately on each dataset $X_k, Y_k$.

\textbf{Results.} We report the MSE loss and correlation coefficient with respect to the ground truth in Table \ref{table:regression}. GenAgg is able to represent all of the standard aggregators with a correlation of at least $0.96$, and most aggregators with a correlation of greater than $0.99$. The only cases where the performance of GenAgg is surpassed by a baseline are $\min$ and $\max$, where SoftmaxAgg exhibits marginally higher accuracy.

One interesting observation is that even if the baselines can represent an aggregator in \textit{theory}, they cannot necessarily do so in practice. For example, PowerAgg can theoretically represent the geometric mean with $\lim_{p \rightarrow 0} (\frac{1}{n}\sum_i x_i^p)^\frac{1}{p}$, but in practice there are instabilities as $p$ approaches $0$ because $\frac{1}{p}$ approaches $\frac{1}{0}$. Similarly, while in theory PowerAgg can represent min magnitude, max magnitude, harmonic mean, and root mean square, it falls short in practice (see Table \ref{table:regression}), likely because of the reasons stated in Section \ref{related work}. In other cases, the baselines can perform well even if they should not be able to represent the target in theory. One such example is PowerAgg, which achieves a correlation of $0.92$ on max, but only $0.59$ on max magnitude, which is the opposite of what theory might suggest. This is likely due to the the clamp operation that Pytorch Geometric's implementation uses to restricts inputs to the positive domain. The performance of max magnitude suffers, as it misses cases where the highest magnitude element is negative. Similarly, the performance of max increases, because it simply selects the maximum among the positive elements. Another baseline which performs unexpectedly well is SoftmaxAgg, which achieves a high correlation with the log-sum-exp aggregator. While it cannot compute a log, the SoftmaxAgg formulation does include a sum of exponentials, so it is able to produce a close approximation.

\subsection{GNN Regression}
\label{sec:gnn-regression}

In GNN Regression, the experimental setup is the same as that of Aggregator Regression (Section \ref{sec:aggregator-regression}), with the exception that the observation size is reduced to $d=1$. However, instead of using GenAgg $\bigoplus_\theta$ as a model, we use a multi-layer GNN. The GNN  is implemented with $4$ layers of GraphConv \cite{graphconv} with Mish activation (after every layer except the last), where the default aggregation function is replaced by a parametrised aggregator $\bigoplus_
\theta$:
\begin{alignat}{2}
    &z_i^{(k+1)} &&= \mathrm{Mish} \left( W_1^{(k)} z_i^{(k)} + W_2^{(k)} \, \sideset{}{_\theta}\bigoplus_{j \in \mathcal{N}_i} z_i^{(k)} \right), \; \text{s.t.} \; z_i^{(0)} = x_i \\ 
    &\hat{y}_i &&= W_1^{(3)} z_i^{(3)} + W_2^{(3)} \, \sideset{}{_\theta}\bigoplus_{j \in \mathcal{N}_i} z_i^{(3)}
\end{alignat}
While this experiment uses the same dataset as Aggregator Regression (Section \ref{sec:aggregator-regression}), it provides several new insights. First, while the aggregator regression experiment shows that GenAgg \textit{can} represent various aggregators, this experiment demonstrates that training remains stable even when used within a larger architecture. Second, this experiment underlines the importance of using the correct aggregation function. While it is clear that it is advantageous to match a model's aggregation function with that of the underlying mechanism which generated a particular dataset, we often opt to simply use a default aggregator. The conventional wisdom of this choice is that the other learnable parameters in a network layer can rectify an inaccurate choice in aggregator. However, the results from this experiment demonstrate that even with additional parameters, it is not necessarily possible to represent a different aggregator, underlining the importance of aggregators with sufficient representational capacity.

\textbf{Results.} The results show that GenAgg maintains its performance, even when used as a component within a GNN (Table \ref{table:cross}). GenAgg achieves a mean correlation of $0.97$ across all aggregators. While the baselines perform significantly better with the help of a multi-layer GNN architecture, they still cannot represent many of the standard aggregators. The highest-performing baseline is SoftmaxAgg, which only achieves a mean correlation of $0.86$.

\subsection{GNN Benchmark}

In this experiment, we examine the performance of GenAgg on GNN benchmark datasets \cite{benchmark}. In order to perform a comparison with benchmarks, we train on an existing GNN architecture (a 4-layer GraphConv \cite{graphconv} GNN with a hidden size of 64) where the default aggregator is replaced with a new aggregator, selected from $\{\mathrm{GenAgg}, \mathrm{PowerAgg}, \mathrm{SoftmaxAgg}, \mathrm{PNA}, \mathrm{mean}, \mathrm{sum}, \mathrm{max}\}$.

\textbf{Results.} As shown in Fig \ref{fig:benchmarks}, GenAgg outperforms all baselines in all four GNN benchmark datasets. It provides a significant boost in performance, particularly compared to the relatively small differences in performance between the baseline methods.

The training plots in Fig. \ref{fig:benchmarks} provide complementary information. One interesting observation is that GenAgg converges at least as fast as the other methods, and sometimes converges significantly faster (in PATTERN, for example). Furthermore, the training plots lend information about the stability of training. For example, note that in MNIST, most of the baseline methods achieve a maximum and then degrade in performance, while GenAgg maintains a stable performance throughout training. 

\section{Discussion}
\label{discussion}
\textbf{Results}. In our experiments, we present two regression tasks and one GNN benchmark task. The regression experiments demonstrate that GenAgg is the only method capable of representing all of the standard aggregators, and a GNN cannot be used to compensate for the shortcomings of the baseline aggregators. The GNN benchmark experiment complements these findings, demonstrating that this representational complexity is actually useful in practice. The fact that GenAgg outperforms the standard aggregators ($\mathrm{mean}$, $\mathrm{max}$, and $\mathrm{sum}$) on the GNN benchmark experiment implies that it is in fact creating a \textit{new} aggregator. Furthermore, the fact that it outperforms baseline methods like SoftmaxAgg and PowermeanAgg implies that the aggregator learned by GenAgg lies outside the set of functions which can be represented by such methods.

\textbf{Limitations}. While GenAgg achieves positive results on these datasets, it is not possible to make generalisations about its performance in all applications. In particular, we observe that some datasets fundamentally require less complexity to solve, so simple aggregators are sufficient (\textit{i.e.}, GenAgg fails to provide a significant performance boost). For a full list of datasets that we considered and further discussion of limitations, see Appendix \ref{appendix:limitations}.

\textbf{Parameters}. When comparing the performance of different models, it is important to also consider the number of parameters. By introducing additional parameters, some models can improve overall performance at the cost of sample efficiency. While methods like PowerAgg and SoftmaxAgg only have one trainable parameter, GenAgg has two scalar parameters $\alpha$ and $\beta$, and a learnable function $f$, which has $30$ parameters in our implementation (independent of the size of the state). However, we observe that using GenAgg within a GNN is always at least as sample-efficient as the baselines, and sometimes converges significantly faster (Fig. \ref{fig:benchmarks} and Appendix \ref{appendix:plots}). Furthermore, while GenAgg has more parameters than PowerAgg and SoftmaxAgg, the increase is negligible compared to the total number of parameters in the GNN. We also note that GenAgg has significantly fewer parameters than the deep learning methods discussed in Section \ref{related work}. While the deep learning methods scale linearly or quadratically in the dimension of the state, the number of parameters in GenAgg is constant.

\textbf{Stability}. Another observation from our experiments is that GenAgg exhibits more stability during the training process than the baselines (Appendix \ref{appendix:plots}). In the GNN Regression experiment, the PowerAgg and SoftmaxAgg training curves tend to plateau at least once before reaching their maximum value. It is possible that these methods lead to local optima because they are optimised in a lower dimensional parameter space \cite{critpoints}. For example, it is straightforward to smoothly transform a learned $f$ in GenAgg from $x^2$ to $x^4$, but to do so in PowerAgg, it is necessary to pass through $x^3$, which has significantly different behaviour in the negative domain. While PowerAgg restricts inputs to the positive domain to circumvent this particular issue, the problem of local optima can still arise when methods like PowerAgg or SoftmaxAgg are used as components in a larger architecture.

\textbf{Explainability}. While in this paper we primarily focus on the \textit{performance} of GenAgg, we note that it also presents benefits in the realm of explainability. The three parameters in GenAgg are all human-readable (scalars and scalar-valued functions can easily be visualised), and they all provide a unique intuition. The $\alpha$ parameter controls the dependence on the cardinality of the input set. The $\beta$ parameter dictates if the aggregator is computed in a raw or centralised fashion (colloquially, it answers if the aggregator operates over the inputs themselves, or the variation between the inputs). Lastly, the function $f$ can be analysed by considering the sign and magnitude of $f(x_i)$. The sign denotes if a given $x_i$ increases ($f(x)>0$) or decreases ($f(x_i) < 0$) the output. On the other hand, the magnitude $|f(x_i)|$ can be interpreted as the relative impact of that point on the output. For example, the parametrisation of product is $f(x)=\log(|x|)$, which implies that a value of $1$ has no impact on the output since $|\log(|1|)|=0$, and extremely small values $\epsilon$ have a large impact, because $\lim_{\epsilon \to 0} |\log(|\epsilon|)| = \infty$. Indeed, $1$ is the identity element under multiplication, and multiplying by a small value $\epsilon$ can change the output by many orders of magnitude. The interpretability of GenAgg can also be leveraged as a method to \textit{select} an aggregator---a model can be pre-trained with GenAgg, and then each instance of GenAgg can be replaced with the most similar standard aggregator in $\mathcal{A}$.


\section{Conclusion}
\label{conclusion}
In this paper we introduced GenAgg, a generalised, explainable aggregation function which parametrises the function space of aggregators, yet remains as constrained as possible to improve sample efficiency and prevent overfitting. In our experiments, we showed that GenAgg can represent all $13$ of our selected ``standard aggregators'' with a correlation coefficient of at least $0.96$. We also evaluated GenAgg alongside baseline methods within a GNN, illustrating how other approaches have difficulties representing standard aggregators, even with the help of additional learnable parameters. Finally, we demonstrated the usefulness of GenAgg on GNN benchmark tasks, comparing the performance of the same GNN with various different aggregators. The results showed that GenAgg provided a significant boost in performance over the baselines in all four datasets. Furthermore, GenAgg often exhibited more stability and faster convergence than the baselines in the training process. These results show that GenAgg is an application-agnostic aggregation method that can provide a boost in performance as a drop-in replacement for existing aggregators.


\section{Acknowledgements}
\small{Ryan Kortvelesy and Amanda Prorok were supported in part by ARL DCIST CRA W911NF-17-2-0181 and European Research Council (ERC) Project 949940 (gAIa).}



\clearpage
\bibliographystyle{plain}
\bibliography{ref}


\clearpage
\appendix
\section*{Appendix}

\section{Generalised Distributive Property}
\label{appendix:distributive}

The Distributive Law is an extremely useful tool for analysis and computation in mathematics and computer science. Following from the additivity and homogeneity properties of linearity, it states that $\sum_{i=1}^n{c \cdot x_i} = c \cdot \sum_{i=1}^n{x_i}$. 
The Distributive Law is often leveraged to formulate fast algorithms, such as the Fast Fourier Transform and the Viterbi algorithm \cite{distributive}. It can also be used to make algorithms more memory-efficient. For example, if we wish to take the DFT of a shifted signal, we can avoid storing the signal itself. Instead, the Distributive Law can be utilised to formulate the shifted DFT as a function of the non-shifted DFT: $\mathrm{DFT}[x(n-\Delta)] = e^{-i \omega_k \Delta} X(\omega_k)$. 

While the Distributive Law is defined for the group of real numbers under additivity $(\mathbb{R}, +)$, we can also define a more general distributive property for the Abelian group defined by an aggregator $(\mathbb{R}, \bigodot)$.

\begin{definition}[Generalised Distributive Property]

For a binary operator $\psi$ and set aggregation function $\bigodot$, the Generalised Distributive Property is defined:

\begin{equation}
    \psi \left(c, \bigodot_{x_i \in \mathcal{X}} x_i \right) = \bigodot_{x_i \in \mathcal{X}} \psi(c, x_i)
    \label{eq:gdp}
\end{equation}

\end{definition}

In this section, we derive the Generalised Distributive Property for the special case of GenAgg $\bigodot = \bigoplus_\theta$. That is, for a given parametrisation $\theta$, we derive an explicit formula for the corresponding function $\psi$ which satisfies the Generalised Distributive Property. Note that while our solution holds for any function $f$, it focuses on the special cases $\alpha=0, \beta=0$ and $\alpha=1, \beta=0$.

\begin{lemma}
\label{lemma:distributive}
Given a binary operator of the form $\psi(a,b) = \phi^{-1}(\phi(a)+\phi(b))$ and a parametrisation of GenAgg $\theta = \langle f, \alpha, \beta \rangle = \langle f, \alpha, 0 \rangle$, the operator $\psi(a,b)$ satisfies the Generalised Distributive Property over the $(\mathbb{R}, \oplus_\theta)$ abelian group if:

\begin{align}
    \rho^{-1} \left( \frac{n^\alpha}{n} \sum \rho(c+x_i) \right) &= \rho^{-1} \left( \frac{n^\alpha}{n} \sum \rho(x_i) \right) + c \\
    \text{where } \rho(x) &= f(\phi^{-1}(x))
\end{align}

\end{lemma}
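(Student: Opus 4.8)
The plan is to unfold both sides of the Generalised Distributive Property \eqref{eq:gdp} using the definitions of $\psi$ and of $\bigoplus_\theta$, and then to change variables so that the resulting identity matches the claimed condition verbatim. Throughout I would assume $f$ and $\phi$ are invertible on the relevant domains, so that $\bigoplus_\theta$, $\psi$, and $\rho := f \circ \phi^{-1}$ are all well defined (note $\rho^{-1} = \phi \circ f^{-1}$), and I would work one feature coordinate at a time exactly as in \eqref{eq:genagg}.

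First I would record two elementary identities. Since $\rho = f \circ \phi^{-1}$ we have $\rho \circ \phi = f$, hence $f(x_i) = \rho(\phi(x_i))$ for every input $x_i$. Similarly, from $\psi(c,x) = \phi^{-1}(\phi(c) + \phi(x))$ we get $f(\psi(c,x)) = f\big(\phi^{-1}(\phi(c)+\phi(x))\big) = \rho(\phi(c)+\phi(x))$. These two identities are what let the $\phi$'s and $f$'s collapse into expressions involving only $\rho$ and $\rho^{-1}$.

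Next I would expand \eqref{eq:gdp} with $\bigodot = \bigoplus_\theta$ and $\beta=0$, i.e. $\bigoplus_\theta\{x_i\} = f^{-1}\big(n^{\alpha-1}\sum_i f(x_i)\big)$. Writing $M = \bigoplus_\theta\{x_i\}$ and applying $\phi$ to the left-hand side $\psi(c,M)$ gives $\phi(\psi(c,M)) = \phi(c) + \phi(M) = \phi(c) + \rho^{-1}\big(n^{\alpha-1}\sum_i \rho(\phi(x_i))\big)$, using $\phi\circ f^{-1}=\rho^{-1}$ and $f(x_i)=\rho(\phi(x_i))$. Applying $\phi$ to the right-hand side $\bigoplus_\theta\{\psi(c,x_i)\} = f^{-1}\big(n^{\alpha-1}\sum_i f(\psi(c,x_i))\big)$ and using the second identity gives $\rho^{-1}\big(n^{\alpha-1}\sum_i \rho(\phi(c)+\phi(x_i))\big)$. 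Since $\phi$ is injective, \eqref{eq:gdp} is equivalent to the equality of these two expressions. Substituting $\tilde c = \phi(c)$, $\tilde x_i = \phi(x_i)$ (legitimate inputs for the hypothesis, as they lie in the range of $\phi$) and writing $n^{\alpha-1} = n^\alpha/n$ turns that equality into
\[
  \rho^{-1}\!\left( \frac{n^\alpha}{n} \sum \rho(\tilde c + \tilde x_i) \right) = \rho^{-1}\!\left( \frac{n^\alpha}{n} \sum \rho(\tilde x_i) \right) + \tilde c ,
\]
which is precisely the stated condition after renaming; in particular the condition implies \eqref{eq:gdp}, the ``if'' direction claimed.

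I expect the only real obstacle to be bookkeeping around invertibility and domains: ensuring $\phi^{-1}$, $f^{-1}$, and $\rho^{-1}$ are applied only where defined, and recording that the computation is reversible, so the condition is in fact necessary as well as sufficient under these assumptions. A side benefit of writing the argument this way is that it exposes the conceptual content — $\psi(a,b)=\phi^{-1}(\phi(a)+\phi(b))$ distributes over $\bigoplus_{\langle f,\alpha,0\rangle}$ exactly when the conjugated aggregator $\bigoplus_{\langle \rho,\alpha,0\rangle}$ is translation-equivariant in the displayed sense — which is the identity that Theorem~\ref{theorem:distributive} then resolves for the specific choices of $f$ and $\alpha$.
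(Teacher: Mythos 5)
Your proposal is correct and follows essentially the same route as the paper's proof: substitute $\bigoplus_\theta$ and $\psi = \phi^{-1}(\phi(\cdot)+\phi(\cdot))$ into the Generalised Distributive Property, apply $\phi$ to both sides, change variables via $x' = \phi(x)$, $c' = \phi(c)$, and identify $\rho = f \circ \phi^{-1}$. Your version is somewhat more explicit about invertibility assumptions and the reversibility of each step, but the argument is the same.
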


\begin{proof}
Substituting GenAgg for the generic aggregation function $\bigodot$ in the definition of the Deneralised Distributive Property, we get:

\begin{equation}
    f^{-1}\left( \frac{n^\alpha}{n} \sum f(\psi(c, x_i)) \right) = \psi \left( c, f^{-1} \left( \frac{n^\alpha}{n} \sum f(x_i) \right) \right)
\end{equation}

We replace the binary operator $\psi$ with its representation as a composition of univariate functions $\psi(a,b) = \phi^{-1}(\phi(a)+\phi(b))$ to obtain:

\begin{align}
    f^{-1}\left( \frac{n^\alpha}{n} \sum f\left( \phi^{-1}(\phi(c)+\phi(x_i)) \right) \right) = \phi^{-1}\left(\phi\left( f^{-1}\left( \frac{n^\alpha}{n} \sum f(x_i) \right) \right) + \phi(c) \right) \\
    \phi\left( f^{-1}\left( \frac{n^\alpha}{n} \sum f\left( \phi^{-1}(\phi(c)+\phi(x_i)) \right) \right) \right) = \phi\left( f^{-1}\left( \frac{n^\alpha}{n} \sum f(x_i) \right) \right) + \phi(c)
\end{align}

To simplify, we apply a change of variables $x' = \phi(x)$, $c' = \phi(c)$:

\begin{equation}
    \phi\left( f^{-1}\left( \frac{n^\alpha}{n} \sum f\left( \phi^{-1}(c'+x_i') \right) \right) \right) = \phi\left( f^{-1}\left( \frac{n^\alpha}{n} \sum f(\phi^{-1}(x_i')) \right) \right) + c'
\end{equation}

Finally, we further simplify by substituting $\rho(x) = f(\phi^{-1}(x))$:

\begin{equation}
    \rho^{-1} \left( \frac{n^\alpha}{n} \sum \rho(c'+x_i') \right) = \rho^{-1} \left( \frac{n^\alpha}{n} \sum \rho(x_i') \right) + c'
\end{equation}

\end{proof}

\begin{theorem}
    For GenAgg parametrised by $\theta = \langle f, \alpha, \beta \rangle = \langle f, \alpha, 0 \rangle$, the binary operator $\psi$ which will satisfy the Generalised Distributive Property for $\bigoplus_\theta$ is given by $\psi(a,b) = f^{-1}(f(a) \cdot f(b))$.
\end{theorem}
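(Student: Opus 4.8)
The plan is to use Lemma~\ref{lemma:distributive}, which already reduces the problem to finding a univariate function $\rho$ such that
\begin{equation}
    \rho^{-1} \left( \tfrac{n^\alpha}{n} \textstyle\sum_i \rho(c+x_i) \right) = \rho^{-1} \left( \tfrac{n^\alpha}{n} \textstyle\sum_i \rho(x_i) \right) + c ,
    \label{eq:proposal-cauchy}
\end{equation}
where $\rho = f \circ \phi^{-1}$ and the target operator is $\psi(a,b) = \phi^{-1}(\phi(a)+\phi(b))$. Once $\rho$ is known, recovering $\psi$ in terms of $f$ alone is a matter of unwinding the substitutions: $\phi^{-1} = f^{-1}\circ\rho$, hence $\psi(a,b) = f^{-1}\big(\rho(\rho^{-1}(f(a)) + \rho^{-1}(f(b)))\big)$, which collapses to $\psi(a,b) = f^{-1}(f(a)\cdot f(b))$ precisely when $\rho^{-1}$ is multiplicative-to-additive, i.e. $\rho(x) = e^x$ (equivalently $\rho^{-1} = \log$). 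So the real content is to show that $\rho = \exp$ solves \eqref{eq:proposal-cauchy} for every $n$.

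First I would simply substitute $\rho(x) = e^x$ into the left-hand side of \eqref{eq:proposal-cauchy}: the sum becomes $\tfrac{n^\alpha}{n}\sum_i e^{c+x_i} = e^c \cdot \tfrac{n^\alpha}{n}\sum_i e^{x_i}$, and applying $\rho^{-1} = \log$ gives $c + \log\big(\tfrac{n^\alpha}{n}\sum_i e^{x_i}\big)$, which is exactly the right-hand side. This verifies that $\exp$ works for all $\alpha$ with $\beta = 0$, and translating back through $\phi$ and $f$ yields $\psi(a,b) = f^{-1}(f(a)\cdot f(b))$ as claimed. For the stated special case $\alpha = 0$, I would additionally note that \eqref{eq:proposal-cauchy} is then translation-covariant in a second way: $\rho(x) = x$ (the identity) also solves it, since $\tfrac1n\sum_i(c+x_i) = c + \tfrac1n\sum_i x_i$; unwinding gives the extra solution $\psi(a,b) = f^{-1}(f(a)+f(b))$. (Tables in the main text confirm both branches on the concrete aggregators.)

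The subtle point — and the place a fully rigorous treatment must be careful — is the direction of the implication. Lemma~\ref{lemma:distributive} only asserts that $\rho = \exp$ is \emph{sufficient}; to claim $\psi(a,b) = f^{-1}(f(a)\cdot f(b))$ is \emph{the} operator one should argue it is essentially the unique continuous solution. I would invoke a Cauchy-type functional-equation argument: writing $g = \rho^{-1}$, equation \eqref{eq:proposal-cauchy} for varying multiset sizes and a limiting/averaging argument forces $g(\lambda u + (1-\lambda)v)$-type relations that, under continuity and strict monotonicity of $\rho$ (guaranteed by invertibility of $f$ and $\phi$), pin down $g$ up to affine reparametrisation — and affine reparametrisations of $\log$ do not change the induced $\psi$ beyond the two families already found. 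I expect this uniqueness step to be the main obstacle, since it requires exploiting \eqref{eq:proposal-cauchy} across all $n$ simultaneously rather than for a fixed $n$; the existence half, by contrast, is the one-line substitution above. For the purposes of the theorem statement as written (which only claims $\psi$ \emph{will satisfy} the property, not uniqueness), the substitution check together with Lemma~\ref{lemma:distributive} already suffices, and I would present that as the core proof, relegating the uniqueness discussion to a remark.
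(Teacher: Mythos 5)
Your proposal is correct and follows essentially the same route as the paper: invoke Lemma~\ref{lemma:distributive}, verify by direct substitution that $\rho(x)=e^x$ satisfies the reduced condition for all $\alpha$ (and $\rho(x)=x$ for $\alpha=0$), then unwind $\phi = \rho^{-1}\circ f$ to obtain $\psi(a,b)=f^{-1}(f(a)\cdot f(b))$. Your added remark on the sufficiency-versus-uniqueness distinction is a fair observation the paper does not address, but as you note it is not needed for the theorem as stated.
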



\begin{proof}

From Lemma \ref{lemma:distributive}, the Generalised Distributive Property is satisfied if:

\begin{align}
    \rho^{-1} \left( \frac{n^\alpha}{n} \sum \rho(c+x_i) \right) &= \rho^{-1} \left( \frac{n^\alpha}{n} \sum \rho(x_i) \right) + c \\
    \text{where } \rho(x) &= f(\phi^{-1}(x))
\end{align}



If we select $\rho(x) = e^x$, then we can show that the condition is satisfied by the standard Distributive Law:

\begin{align}
    \log \left( \frac{n^\alpha}{n} \sum e^{c+x_i} \right) &= \log \left( \frac{n^\alpha}{n} \sum e^{x_i} \right) + c \\
    e^{\log \left( \frac{n^\alpha}{n} \sum e^{c+x_i} \right)} &= e^{\log \left( \frac{n^\alpha}{n} \sum e^{x_i} \right) + c} \\
    \frac{n^\alpha}{n} \sum e^{c} e^{x_i}  &= e^c \cdot \frac{n^\alpha}{n} \sum e^{x_i}
\end{align}

Given that $\rho(x) = e^x$ satisfies the Distributive Property for this case, we can use it to solve for $\phi$ and $\phi^{-1}$:

\begin{align}
    \rho(x) &= f(\phi^{-1}(x)) \\
    \phi^{-1}(x) &= f^{-1}(\rho(x)) \\
    \phi^{-1}(x) &= f^{-1}(e^x)
\end{align}

\begin{align}
    \rho^{-1}(x) &= \phi(f^{-1}(x)) \\
    \phi(x) &= \rho^{-1}(f(x)) \\
    \phi(x) &= \log(f(x))
\end{align}

Finally, substituting $\phi(x)$ and $\phi^{-1}(x)$ back into the equation for $\psi$, we get:

\begin{align}
    \psi(a,b) &= \phi^{-1}\left( \phi(a) + \phi(b) \right) \\
    \psi(a,b) &= f^{-1} \left( e^{\log(f(a)) + \log(f(b))} \right) \\
    \psi(a,b) &= f^{-1} \left( e^{\log(f(a))} \cdot e^{ \log(f(b))} \right) \\
    \psi(a,b) &= f^{-1} \left( f(a) \cdot f(b) \right)
\end{align}
    
\end{proof}


\begin{theorem}
    For the special case of GenAgg parametrised by $\theta = \langle f, \alpha, \beta \rangle = \langle f, 0, 0 \rangle$, the Generalised Distributive Property for $\bigoplus_\theta$ is also satisfied by the binary operator $\psi(a,b) = f^{-1}(f(a) + f(b))$.
\end{theorem}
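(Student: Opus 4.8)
The plan is to reuse Lemma~\ref{lemma:distributive}, specialised to $\alpha=0$. That lemma reduces the Generalised Distributive Property for $\bigoplus_{\langle f,\alpha,0\rangle}$ with an operator of the form $\psi(a,b)=\phi^{-1}(\phi(a)+\phi(b))$ to the single functional equation
\begin{equation}
    \rho^{-1}\!\left(\tfrac{n^\alpha}{n}\sum \rho(c+x_i)\right) = \rho^{-1}\!\left(\tfrac{n^\alpha}{n}\sum \rho(x_i)\right) + c, \qquad \rho(x)=f(\phi^{-1}(x)).
\end{equation}
Setting $\alpha=0$ turns the prefactor $\tfrac{n^\alpha}{n}$ into $\tfrac1n$, i.e.\ an ordinary average, and the whole argument then hinges on finding a $\rho$ that solves the resulting equation.

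The key observation is that, once the prefactor is $\tfrac1n$, the equation is satisfied by the identity $\rho(x)=x$: the left side is $\tfrac1n\sum(c+x_i)=c+\tfrac1n\sum x_i$, which is the right side by nothing more than the ordinary distributive/additivity law for finite averages. This is exactly the step that breaks when $\alpha=1$, where $\tfrac{n^\alpha}{n}=1$ and $\sum(c+x_i)=nc+\sum x_i$; so the identity is admissible precisely in the $\alpha=0$ case, playing here the role that $\rho(x)=e^x$ plays in Theorem~\ref{theorem:distributive}. Having fixed $\rho=\mathrm{id}$, I would back out $\phi$ from $\rho(x)=f(\phi^{-1}(x))$: this forces $f(\phi^{-1}(x))=x$, hence $\phi^{-1}=f^{-1}$ and $\phi=f$, and substituting into $\psi(a,b)=\phi^{-1}(\phi(a)+\phi(b))$ gives $\psi(a,b)=f^{-1}(f(a)+f(b))$, as claimed. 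As a cross-check one can verify \eqref{eq:gdp} directly: pushing $\psi(c,\cdot)$ through $\bigoplus_{\langle f,0,0\rangle}(\mathcal{X})=f^{-1}(\tfrac1n\sum f(x_i))$, the inner $f\circ f^{-1}$ cancellations collapse $\tfrac1n\sum\big(f(c)+f(x_i)\big)$ to $f(c)+\tfrac1n\sum f(x_i)$, which is $f(\psi(c,\bigoplus x_i))$.

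I do not expect a genuine obstacle here; the proof is the $\alpha=0$ analogue of Theorem~\ref{theorem:distributive} with a simpler choice of $\rho$. The only things to watch are the standing domain assumptions already implicit in \eqref{eq:genagg} — that $f$ is invertible on the relevant range so that $\phi^{-1}=f^{-1}$ and $\psi$ are well defined, and that $f(c)+f(x_i)$ stays in the range of $f$. The conceptual takeaway is that the cardinality normalisation $\tfrac1n$, present exactly when $\alpha=0$, is what lets the \emph{additive} operator $f^{-1}(f(a)+f(b))$ — and not only the multiplicative one $f^{-1}(f(a)\cdot f(b))$ from the previous theorem — commute with the aggregator.
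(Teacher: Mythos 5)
Your proposal is correct and follows essentially the same route as the paper's own proof: specialise Lemma~\ref{lemma:distributive} to $\alpha=0$, observe that $\rho(x)=x$ satisfies the resulting functional equation because the $\tfrac1n$ prefactor makes $\tfrac1n\sum(c+x_i)=c+\tfrac1n\sum x_i$, and then back out $\phi=f$ to obtain $\psi(a,b)=f^{-1}(f(a)+f(b))$. The direct verification you add at the end is a harmless bonus; no gaps.
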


\begin{proof}

From Lemma \ref{lemma:distributive}, the Generalised Distributive Property is satisfied if:

\begin{align}
    \rho^{-1} \left( \frac{n^\alpha}{n} \sum \rho(c+x_i) \right) &= \rho^{-1} \left( \frac{n^\alpha}{n} \sum \rho(x_i) \right) + c \\
    \text{where } \rho(x) &= f(\phi^{-1}(x))
\end{align}

In this proof, we are given that $\alpha=0$:

\begin{equation}
    \rho^{-1} \left( \frac{1}{n} \sum \rho(c+x_i) \right) = \rho^{-1} \left( \frac{1}{n} \sum \rho(x_i) \right) + c
\end{equation}

If we select $\rho(x) = x$, then we can show that the condition is satisfied:

\begin{align}
    \frac{1}{n} \sum (c+x_i) &= \left( \frac{1}{n} \sum x_i \right) + c \\
    \frac{1}{n} \sum x_i + \frac{1}{n} \sum c &= \left( \frac{1}{n} \sum x_i \right) + c \\
    \left( \frac{1}{n} \sum x_i \right) + c &= \left( \frac{1}{n} \sum x_i \right) + c
\end{align}

Given that $\rho(x) = x$ satisfies the Distributive Property for this case, we can use it to solve for $\phi$ and $\phi^{-1}$:

\begin{align}
    \rho(x) &= f(\phi^{-1}(x)) \\
    \phi^{-1}(x) &= f^{-1}(\rho(x)) \\
    \phi^{-1}(x) &= f^{-1}(x)
\end{align}

\begin{align}
    \rho^{-1}(x) &= \phi(f^{-1}(x)) \\
    \phi(x) &= \rho^{-1}(f(x)) \\
    \phi(x) &= f(x)
\end{align}

Finally, substituting $\phi(x)$ and $\phi^{-1}(x)$ back into the equation for $\psi$, we get:

\begin{align}
    \psi(a,b) &= \phi^{-1}\left( \phi(a) + \phi(b) \right) \\
    \psi(a,b) &= f^{-1}\left( f(a) + f(b) \right)
\end{align}

\end{proof}

\newpage
\section{Training Plots}
\label{appendix:plots}

\begin{figure}[h]
    \centering
    
    \subfloat[GenAgg]{
        \includegraphics[width=0.45\textwidth]{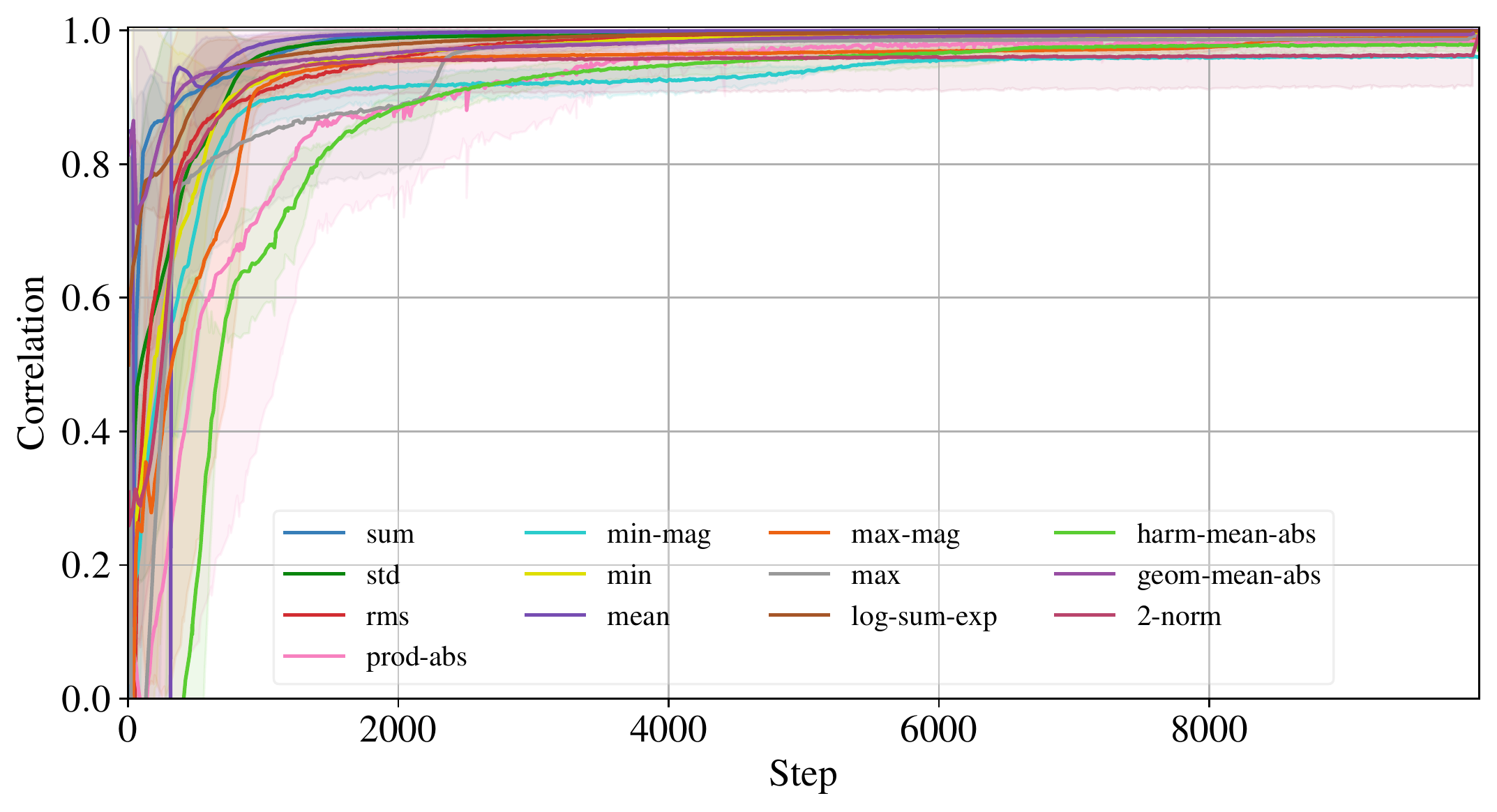}  
    }
    \subfloat[mean]{
        \includegraphics[width=0.45\textwidth]{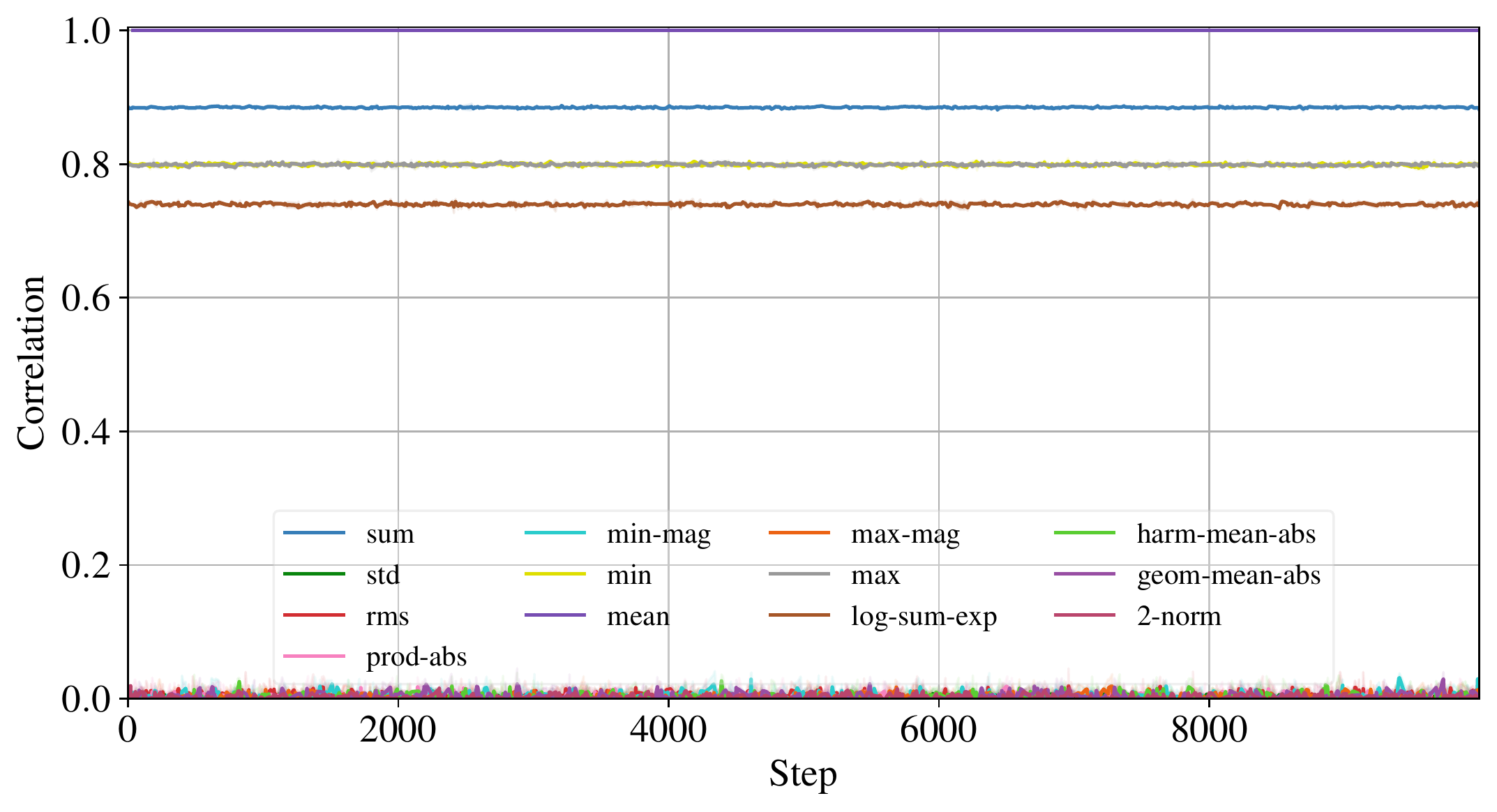}  
    } \\
    \subfloat[PowerAgg]{
        \includegraphics[width=0.45\textwidth]{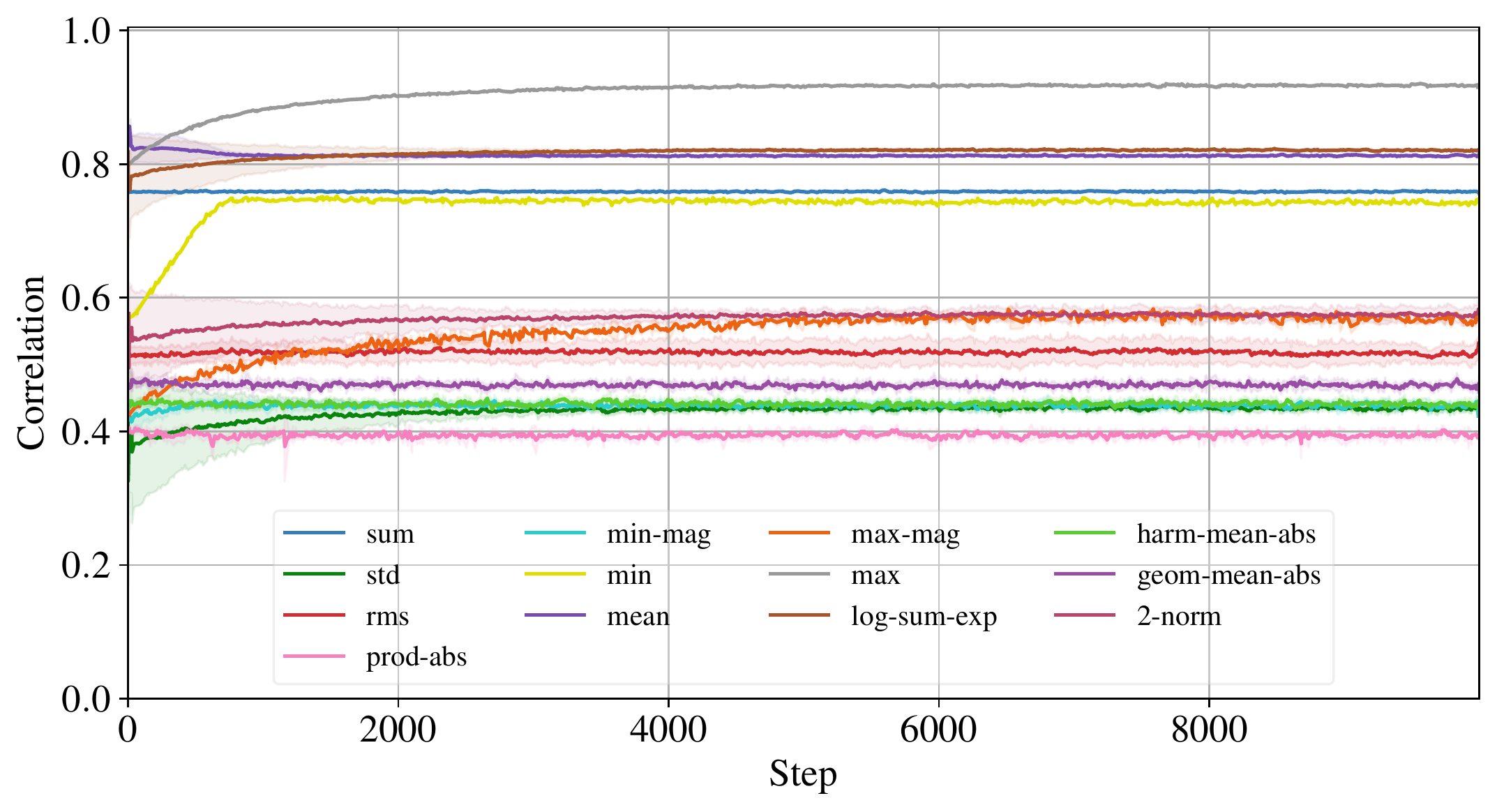}  
    }
    \subfloat[SoftmaxAgg]{
        \includegraphics[width=0.45\textwidth]{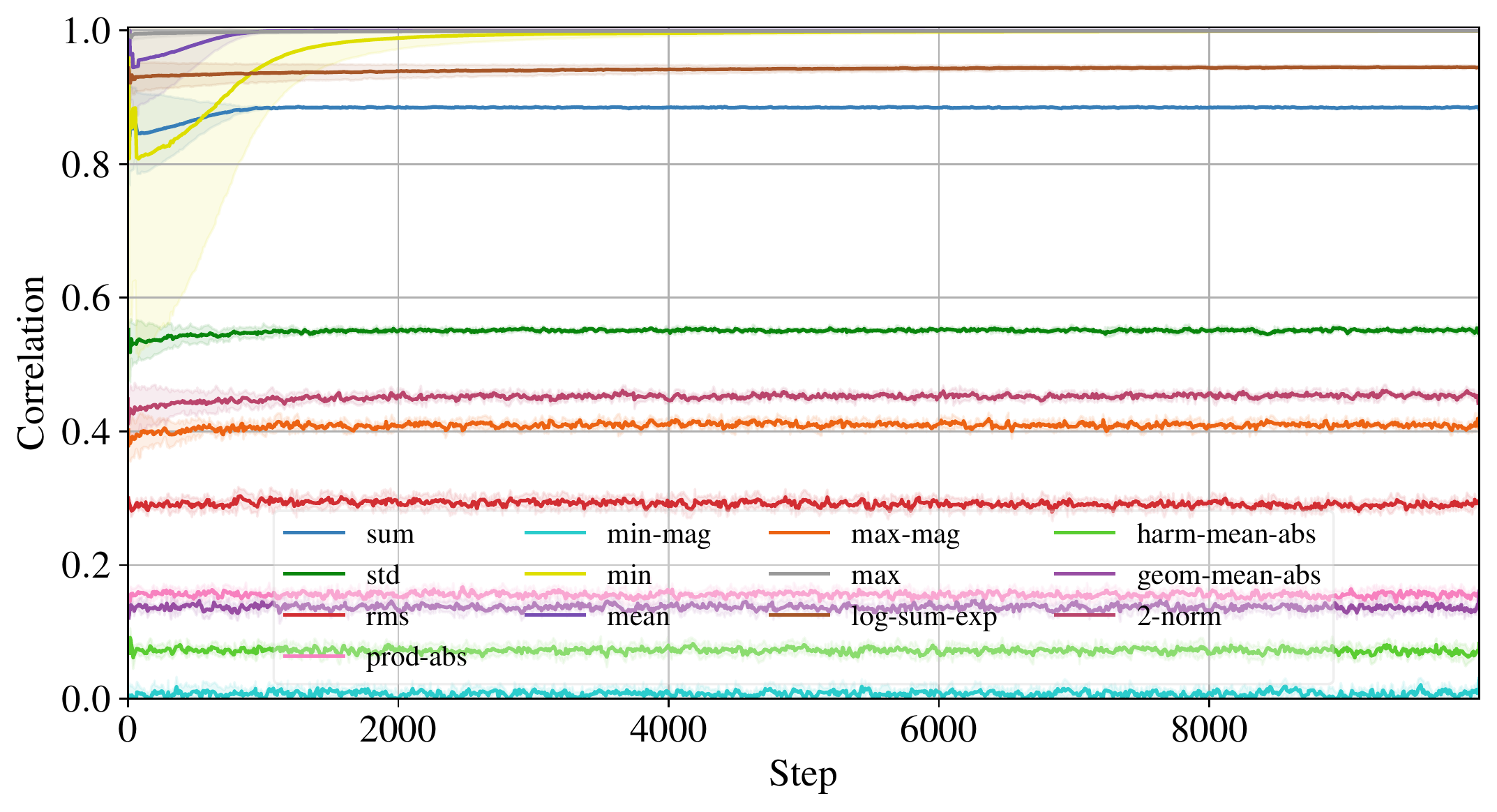}  
    }
    
    \caption{\small{Training plots for the Aggregator Regression experiment (see Section \ref{sec:aggregator-regression}). Each plot represents the ability of a parametrised aggregator $\bigoplus$ to regress over all standard aggregators $\bigodot_k \in \mathcal{A}$. The plots show the mean and standard deviation of the correlation between the predicted and ground truth values over $10$ trials.}}
    \label{fig:agg_reg_plots}
\end{figure}

\begin{figure}[h]
    \centering
    
    \subfloat[GenAgg]{
        \includegraphics[width=0.45\textwidth]{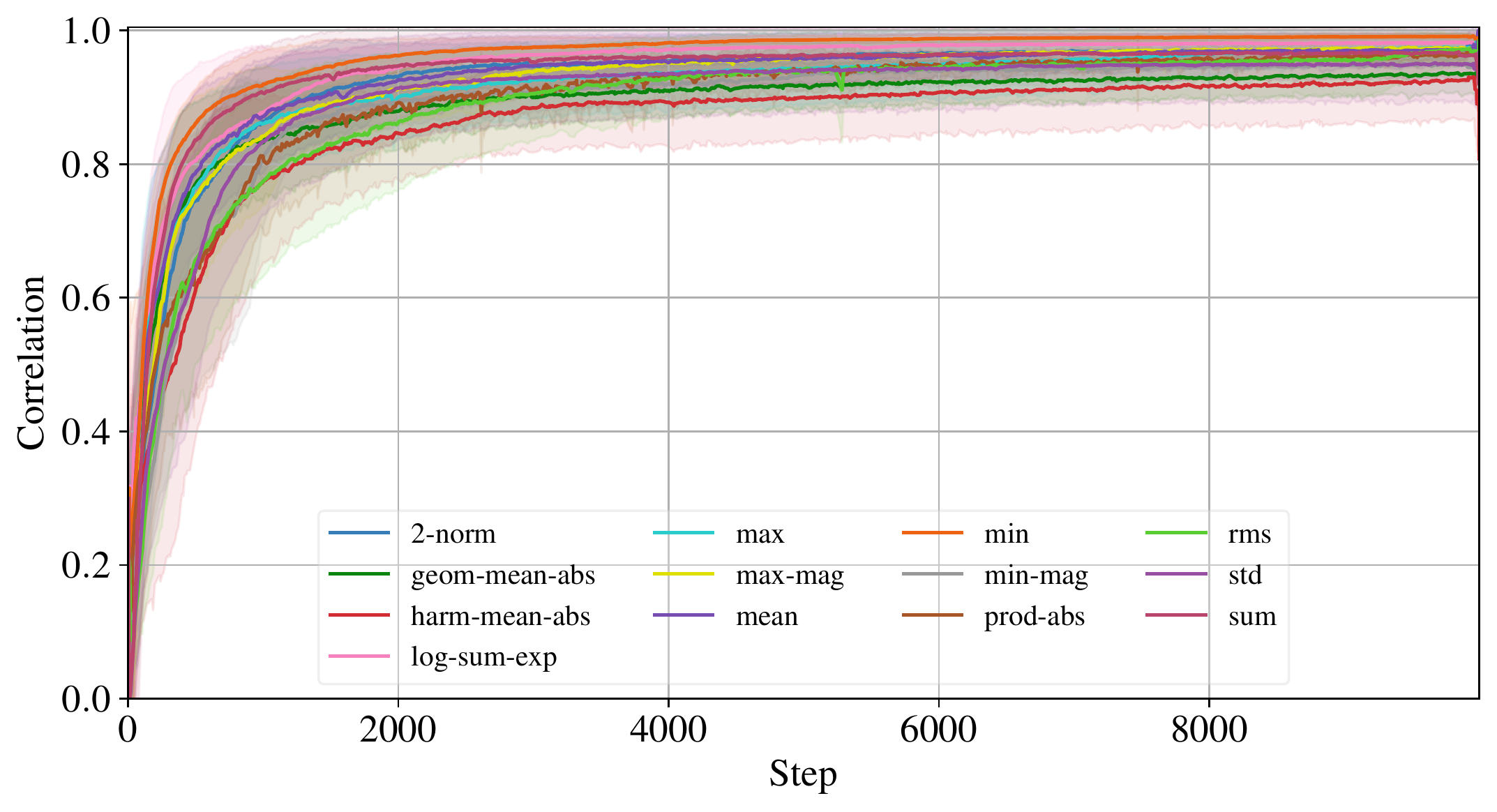}  
    }
    \subfloat[mean]{
        \includegraphics[width=0.45\textwidth]{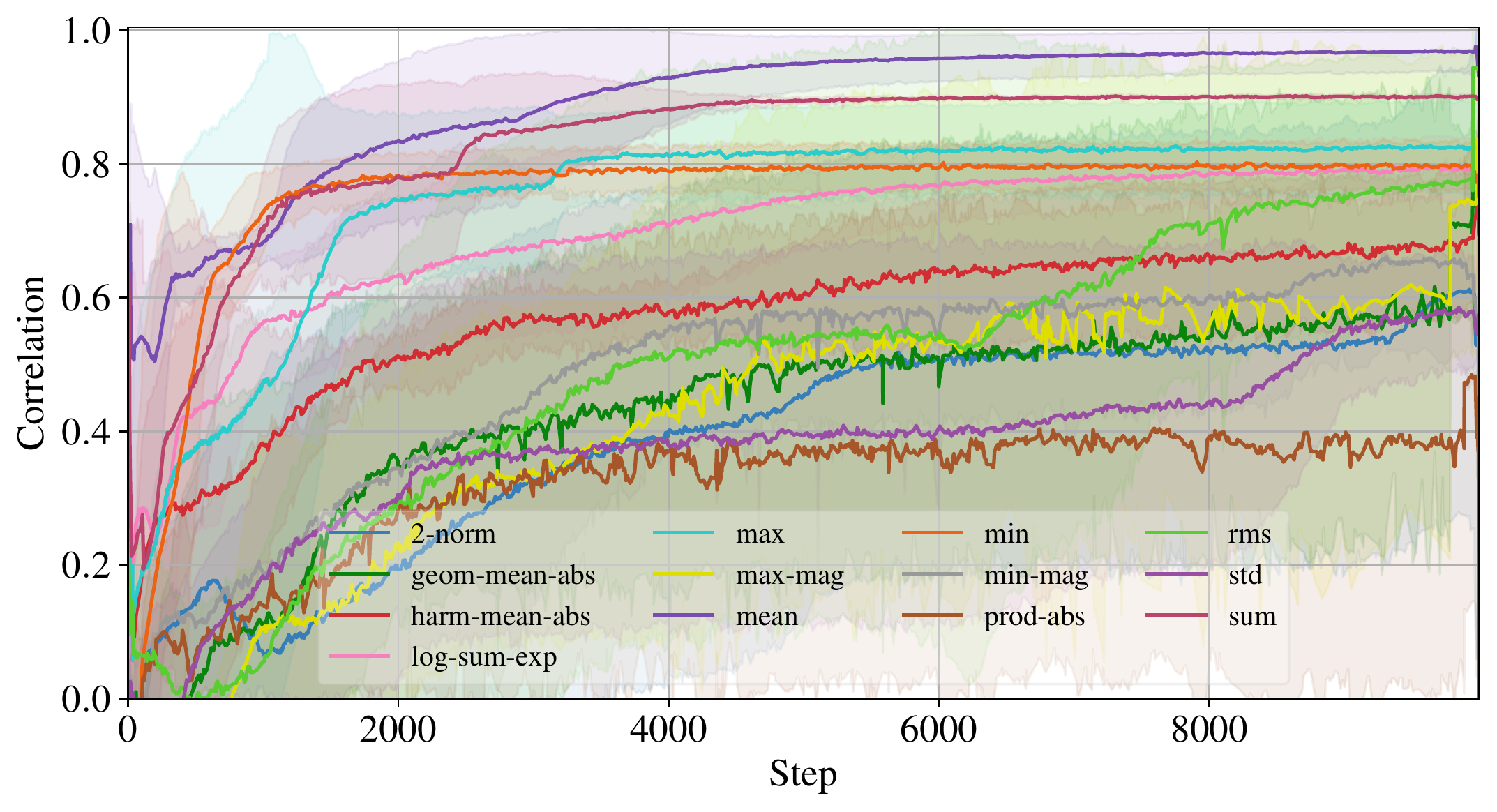}  
    } \\
    \subfloat[PowerAgg]{
        \includegraphics[width=0.45\textwidth]{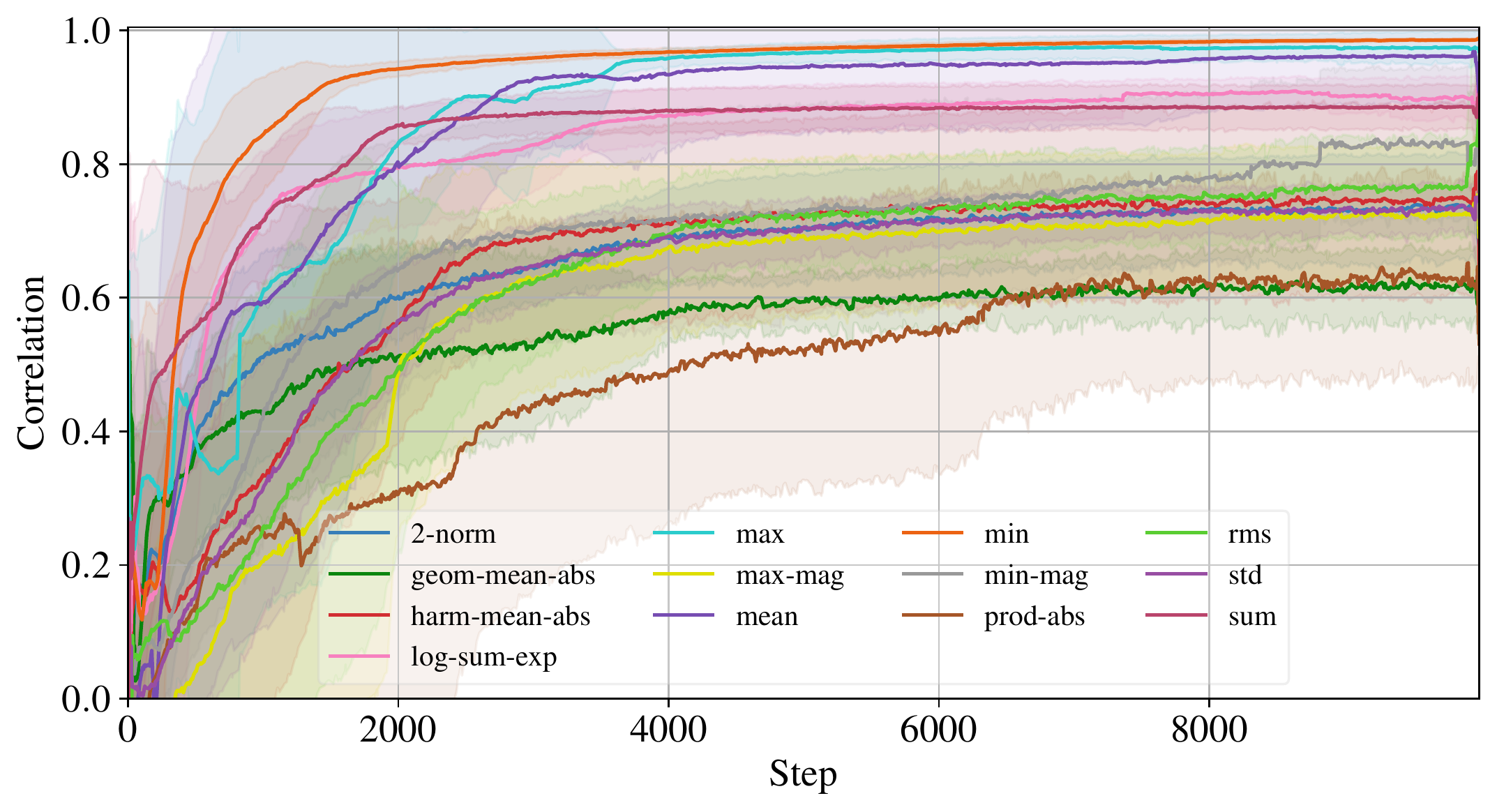}  
    }
    \subfloat[SoftmaxAgg]{
        \includegraphics[width=0.45\textwidth]{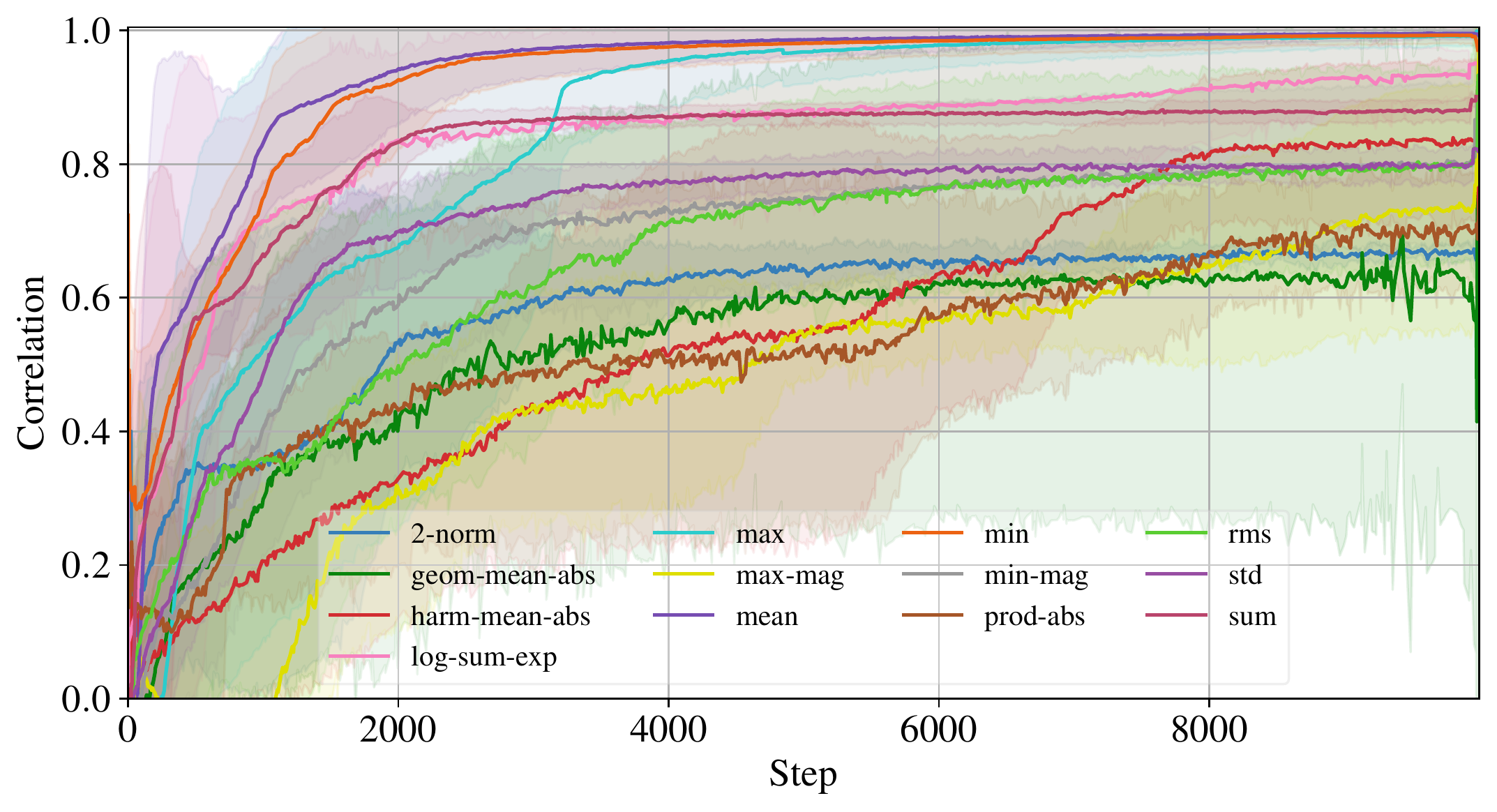}  
    }
    
    \caption{\small{Training plots for the GNN Regression experiment (see Section \ref{sec:gnn-regression}). Each plot represents the ability of a GNN using parametrised aggregator $\bigoplus$ to regress over all standard aggregators $\bigodot_k \in \mathcal{A}$. The plots show the mean and standard deviation of the correlation between the predicted and ground truth values over $10$ trials.}}
    \label{fig:gnn_reg_plots}
\end{figure}

\newpage
\section{Parametrisations}
\label{appendix:param-proofs}

In this section, we use the augmented $f$-mean to show that GenAgg is theoretically capable of representing all of the standard aggregators $\bigodot_k \in \mathcal{A}$. For each standard aggregator $\bigodot_k$, we prove that there exists a parametrisation $\theta$ such that $\bigoplus_\theta = \bigodot_k$. In a slight abuse of notation, mathematical operations applied to the input set $\mathcal{X}$ (such as an absolute value or a geometric inverse) denote elementwise operations: $f(\mathcal{X}) = \{f(x_1), \ldots, f(x_n)\}$.

\vspace{20pt}


\begin{theorem}
    \textbf{Mean}. For $\theta = \langle f, \alpha, \beta \rangle = \langle x, 0, 0\rangle$, the augmented $f$-mean equals the mean $\bigoplus_
    \theta = \frac{1}{n} \sum x_i$.
\end{theorem}

\begin{proof}
    \begin{align}
        \sideset{}{_{\langle x, 0, 0\rangle}}\bigoplus_{x_i \in \mathcal{X}} x_i &= 
        \left( n^{0-1} \sum_{x_i \in \mathcal{X}} (x_i - 0 \cdot \mu) \right) \\
        &= \frac{1}{n} \sum_{x_i \in \mathcal{X}} x_i
    \end{align}
\end{proof}


\begin{theorem}
    \textbf{Sum}. For $\theta = \langle f, \alpha, \beta \rangle = \langle x, 1, 0\rangle$, the augmented $f$-mean equals the sum $\bigoplus_
    \theta = \sum x_i$.
\end{theorem}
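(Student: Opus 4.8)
The plan is to substitute the stated parametrisation directly into the definition of the augmented $f$-mean, Equation~\eqref{eq:genagg}, and simplify. The three ingredients are: the learnable function is the identity $f(x) = x$, whose inverse is trivially $f^{-1}(x) = x$; the cardinality exponent is $\alpha = 1$, so the prefactor becomes $n^{\alpha - 1} = n^{0} = 1$ (using $n = |\mathcal{X}| \geq 1$); and the centralisation coefficient is $\beta = 0$, so the argument $x_i - \beta\mu$ collapses to $x_i$ and the mean $\mu$ never enters.

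Concretely, I would write
\begin{align}
    \sideset{}{_{\langle x, 1, 0\rangle}}\bigoplus_{x_i \in \mathcal{X}} x_i
    &= n^{1-1} \sum_{x_i \in \mathcal{X}} (x_i - 0 \cdot \mu) \\
    &= n^{0} \sum_{x_i \in \mathcal{X}} x_i \\
    &= \sum_{x_i \in \mathcal{X}} x_i ,
\end{align}
where the first equality applies $f = f^{-1} = \mathrm{id}$ together with $\alpha = 1$ and $\beta = 0$, and the last uses $n^{0} = 1$.

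There is no real obstacle here: the only thing one must acknowledge is that the identity map is its own inverse (so that the outer $f^{-1}$ and inner $f$ both disappear) and that the $\alpha = 1$ choice is precisely what cancels the $1/n$ normalisation built into the $f$-mean, exactly as discussed in the text following Equation~\eqref{eq:genagg} for the $f(x) = \log(|x|)$ example. The statement is an immediate specialisation of that general principle, so the proof is a two-line computation rather than an argument.
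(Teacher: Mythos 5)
Your proposal is correct and matches the paper's proof essentially verbatim: both substitute $f = f^{-1} = \mathrm{id}$, $\alpha = 1$, $\beta = 0$ into Equation~\eqref{eq:genagg} and observe that $n^{1-1} = 1$ leaves the bare sum. Nothing further is needed.
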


\begin{proof}
    \begin{align}
        \sideset{}{_{\langle x, 1, 0\rangle}}\bigoplus_{x_i \in \mathcal{X}} x_i &= 
        \left( n^{1-1} \sum_{x_i \in \mathcal{X}} (x_i - 0 \cdot \mu) \right) \\
        &= \sum_{x_i \in \mathcal{X}} x_i
    \end{align}
\end{proof}


\begin{theorem}
    \textbf{Product}. For $\theta = \langle f, \alpha, \beta \rangle = \langle \log(|x|), 1, 0\rangle$, the augmented $f$-mean equals the product $\bigoplus_
    \theta = \prod |x_i|$.
\end{theorem}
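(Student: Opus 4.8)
The plan is to substitute the parametrisation $\theta = \langle \log(|x|), 1, 0 \rangle$ directly into the augmented $f$-mean formula \eqref{eq:genagg} and simplify, mirroring the one-line proofs already given for the Mean and Sum cases. First I would set $f(x) = \log(|x|)$, $\alpha = 1$, and $\beta = 0$. The choice $\alpha = 1$ makes the cardinality prefactor trivial, $n^{\alpha - 1} = n^{0} = 1$, and $\beta = 0$ removes the centralising term $\beta \mu$, so the expression collapses to $f^{-1}\!\left( \sum_{i \in [1..n]} \log(|x_i|) \right)$.

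The second step is to pin down the appropriate inverse $f^{-1}$. Since $f(x) = \log(|x|)$ is even, it is not injective on $\mathbb{R}$, so I would invoke the relaxed notion of invertibility used throughout the paper (the constraint $f^{-1}(f(x)) = |x|$ discussed in the implementation section): the relevant inverse is $f^{-1}(y) = e^{y}$, which is a bijection onto $\mathbb{R}^{+}$ and satisfies $f^{-1}(f(x)) = e^{\log|x|} = |x|$. This is the only place where any care is needed — one should state explicitly that the output of $\bigoplus_\theta$ lands in $\mathbb{R}^{+}$ and that the $\mathbb{R}^{+}$-branch of the inverse is used, which is consistent with $\prod|x_i| \ge 0$.

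Finally, I would carry out the one-line computation $f^{-1}\!\left( \sum_{i} \log|x_i| \right) = \exp\!\left( \sum_{i} \log|x_i| \right) = \prod_{i} \exp(\log|x_i|) = \prod_{i} |x_i|$, using that $\exp$ converts sums into products (the degenerate case $x_i = 0$ holds under the usual conventions $\log 0 = -\infty$, $e^{-\infty} = 0$, matching $\prod|x_i| = 0$). There is essentially no obstacle here; the only subtlety worth flagging is the non-invertibility of $\log|\cdot|$ on all of $\mathbb{R}$, which is already resolved by the absolute-value relaxation introduced earlier, so I would make that dependence explicit rather than leave it implicit.
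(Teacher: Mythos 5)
Your proposal is correct and follows essentially the same route as the paper's proof: direct substitution of $\langle \log(|x|), 1, 0\rangle$ into Equation \eqref{eq:genagg}, taking $f^{-1} = \exp$, and using $e^{\sum \log|x_i|} = \prod |x_i|$. Your explicit remark about the non-injectivity of $\log|\cdot|$ and the choice of the $\mathbb{R}^{+}$ branch of the inverse is a welcome clarification that the paper leaves implicit, but it does not change the argument.
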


\begin{proof}
    \begin{align}
        \sideset{}{_{\langle \log(|x|), 1, 0\rangle}}\bigoplus_{x_i \in \mathcal{X}} x_i &= 
        e^{\left( n^{1-1} \sum_{x_i \in \mathcal{X}} \log(|x_i - 0 \cdot \mu|) \right)} \\
        &= e^{\sum_{x_i \in \mathcal{X}} \log(|x_i|)} \\
        &= \prod_{x_i \in \mathcal{X}} e^{\log(|x_i|)} \\
        &= \prod_{x_i \in \mathcal{X}} |x_i|
    \end{align}
\end{proof}


\begin{theorem}
    \label{theorem:max-mag}
    \textbf{Max Magnitude}. For $\theta = \langle f, \alpha, \beta \rangle = \langle \lim_{p \to \infty} |x|^p, 0, 0\rangle$, the augmented $f$-mean equals the max magnitude $\bigoplus_
    \theta = \max(|\mathcal{X}|)$.
\end{theorem}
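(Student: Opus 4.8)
The plan is to read the parametrisation $f(x) = \lim_{p\to\infty}|x|^p$ in the sense suggested by Table~\ref{table:specialcases}: a parametrisation written as a limit of functions $g_p$ denotes the corresponding limit of GenAgg operators, $\bigoplus_{\langle \lim_p g_p,\, \alpha,\, \beta\rangle} := \lim_{p\to\infty}\bigoplus_{\langle g_p,\, \alpha,\, \beta\rangle}$. With $g_p(x) = |x|^p$ (so $g_p^{-1}(y) = y^{1/p}$) and $\alpha = \beta = 0$, Equation~\eqref{eq:genagg} gives
\[
\sideset{}{_\theta}\bigoplus_{x_i\in\mathcal{X}} x_i \;=\; \lim_{p\to\infty}\left(\frac{1}{n}\sum_{x_i\in\mathcal{X}}|x_i|^p\right)^{\!1/p},
\]
and it then remains to show that this limit equals $\max_{x_i\in\mathcal{X}}|x_i|$ --- the classical convergence of $\ell^p$ norms to the $\ell^\infty$ norm, the only wrinkle being the constant factor $\tfrac1n$ sitting inside the power mean.

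First I would dispose of the degenerate case: set $m := \max_{x_i\in\mathcal{X}}|x_i|$ and fix an index $i^\star$ with $|x_{i^\star}| = m$; if $m = 0$ then every summand vanishes and both sides equal $0$, so assume $m > 0$. Then, for every $p \ge 1$, I would sandwich the power mean between two explicit quantities: retaining only the $i^\star$ term in the sum yields $\bigl(\tfrac1n\sum_i|x_i|^p\bigr)^{1/p} \ge \bigl(\tfrac1n m^p\bigr)^{1/p} = n^{-1/p} m$, while bounding each of the $n$ terms above by $m^p$ yields $\bigl(\tfrac1n\sum_i|x_i|^p\bigr)^{1/p} \le \bigl(\tfrac1n \cdot n\, m^p\bigr)^{1/p} = m$. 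Since $n^{-1/p} \to 1$ as $p \to \infty$, the squeeze theorem forces the power mean to converge to $m = \max(|\mathcal{X}|)$, which completes the argument.

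The step I expect to require the most care is the very first one: justifying that the value of GenAgg at the limiting parametrisation is the limit of its values at the $g_p$, i.e.\ that the limit in $p$ may be pulled outside the whole expression $\bigl(\tfrac1n\sum_i|x_i|^p\bigr)^{1/p}$ rather than inserted separately into $f$ and $f^{-1}$ (note that $f(x) = \lim_p|x|^p$ is $0$, $1$, or $\infty$ and carries no information on its own). This is precisely why the ``limit'' entries of Table~\ref{table:specialcases} must be interpreted as limits of GenAgg operators; once that convention is fixed, the remainder reduces to the elementary squeeze above. The same template --- substituting $|x|^{-p}$, $e^{px}$, or $e^{-px}$ for $|x|^p$ and reversing the inequalities where appropriate --- then covers the min-magnitude, max, and min theorems that follow.
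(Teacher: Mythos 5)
Your proposal is correct and follows essentially the same route as the paper's proof: both reduce the expression to a power mean, bound it above by replacing every term with $\max(|\mathcal{X}|)^p$ and below by retaining only the maximal term, and conclude via the squeeze theorem using $n^{\pm 1/p}\to 1$. Your explicit treatment of the degenerate case $m=0$ and of the interpretation of the limit parametrisation are minor additions of rigour, not a different argument.
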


\begin{proof}
    \begin{align}
        \sideset{}{_{\langle \lim_{p \to \infty} |x|^p, 0, 0\rangle}}\bigoplus_{x_i \in \mathcal{X}} x_i &= 
        \lim_{p \to \infty} \left( n^{0-1} \sum_{x_i \in \mathcal{X}} |x_i - 0 \cdot \mu|^p) \right)^{\frac{1}{p}}\\
        &= \lim_{p \to \infty} \frac{1}{n^{\frac{1}{p}}} \left( \sum_{x_i \in \mathcal{X}} |x_i|^p) \right)^{\frac{1}{p}} \\
        &= \lim_{p \to \infty} \left( \sum_{x_i \in \mathcal{X}} |x_i|^p \right)^{\frac{1}{p}}
    \end{align}

    This aggregator is composed of monotonic functions, so if every element $x_i$ is substituted with $\max(|\mathcal{X}|)$, then the output should increase. Therefore, we can write the following inequality:

    \begin{align}
         \lim_{p \to \infty} \left( \sum_{x_i \in \mathcal{X}} |x_i|^p \right)^{\frac{1}{p}} &\leq \lim_{p \to \infty} \left( \sum_{i \in [1..n]} \max(|\mathcal{X}|)^p \right)^{\frac{1}{p}} \\ 
         \lim_{p \to \infty} \left( \sum_{x_i \in \mathcal{X}} |x_i|^p \right)^{\frac{1}{p}} &\leq \lim_{p \to \infty} n^{\frac{1}{p}} \cdot (\max(|\mathcal{X}|)^p)^{\frac{1}{p}} \\ 
         \lim_{p \to \infty} \left( \sum_{x_i \in \mathcal{X}} |x_i|^p \right)^{\frac{1}{p}} &\leq \max(|\mathcal{X}|) \\ 
    \end{align}

    Similarly, since sum is monotonic, the value of the output computed over a set $\bigoplus_\theta(\mathcal{X})$ is greater than the output computed over a single element from that set $\bigoplus_\theta(\{x_i\})$, where $x_i \in \mathcal{X}$. Furthermore, we know that  $\max(|\mathcal{X}|)$ is one of the elements of $\mathcal{X}$. So, we can write the following inequality:

    \begin{align}
         \lim_{p \to \infty} \left( \sum_{x_i \in \mathcal{X}} |x_i|^p \right)^{\frac{1}{p}} &\geq \lim_{p \to \infty} (\max(|\mathcal{X}|)^p)^{\frac{1}{p}} \\ 
         \lim_{p \to \infty} \left( \sum_{x_i \in \mathcal{X}} |x_i|^p \right)^{\frac{1}{p}} &\geq \max(|\mathcal{X}|) \\
    \end{align}

    Consequently, by the squeeze theorem, we can state:

    \begin{align} 
         \lim_{p \to \infty} \left( \sum_{x_i \in \mathcal{X}} |x_i|^p \right)^{\frac{1}{p}} &= \max(|\mathcal{X}|)
    \end{align}
    
\end{proof}


\begin{theorem}
    \textbf{Min Magnitude}. For $\theta = \langle f, \alpha, \beta \rangle = \langle \lim_{p \to \infty} |x|^{-p}, 0, 0\rangle$, the augmented $f$-mean equals the min magnitude $\bigoplus_
    \theta = \min(|\mathcal{X}|)$.
\end{theorem}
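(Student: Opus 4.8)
The plan is to follow the template of the Max Magnitude proof (Theorem~\ref{theorem:max-mag}), but applied to the reciprocal magnitudes. First I would substitute $\theta = \langle \lim_{p\to\infty}|x|^{-p},\, 0,\, 0\rangle$ into Equation~\eqref{eq:genagg}. For each finite $p$ the relevant function is $f_p(x) = |x|^{-p}$ with inverse $f_p^{-1}(y) = y^{-1/p}$; since $\beta = 0$ removes the $-\beta\mu$ term and $\alpha = 0$ gives the prefactor $n^{0-1} = n^{-1}$, this produces
\begin{align}
    \bigoplus_\theta(\mathcal{X}) &= \lim_{p\to\infty}\Bigl( n^{-1}\sum_{x_i \in \mathcal{X}} |x_i|^{-p} \Bigr)^{-1/p} \\
    &= \lim_{p\to\infty} n^{1/p}\Bigl( \sum_{x_i \in \mathcal{X}} |x_i|^{-p} \Bigr)^{-1/p} = \lim_{p\to\infty}\Bigl( \sum_{x_i \in \mathcal{X}} |x_i|^{-p} \Bigr)^{-1/p},
\end{align}
where the last equality uses $n^{1/p} \to 1$. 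Denote the surviving quantity by $S_p := \bigl(\sum_i |x_i|^{-p}\bigr)^{-1/p}$ and set $m := \min(|\mathcal{X}|)$.

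Then I would pin down $\lim_{p\to\infty} S_p$ by a squeeze argument, assuming for now that every $x_i \neq 0$ so that all magnitudes lie in $(0,\infty)$. Since $t \mapsto t^{-p}$ is decreasing there, every summand obeys $|x_i|^{-p} \le m^{-p}$, hence $\sum_i |x_i|^{-p} \le n\,m^{-p}$; applying the decreasing map $u \mapsto u^{-1/p}$ gives $S_p \ge n^{-1/p} m$. Conversely the sum dominates its largest term, namely $m^{-p}$ (attained at the element of least magnitude), so $\sum_i |x_i|^{-p} \ge m^{-p}$ and therefore $S_p \le m$. Thus $n^{-1/p} m \le S_p \le m$ for all $p$, and letting $p\to\infty$ the squeeze theorem yields $S_p \to m = \min(|\mathcal{X}|)$. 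Alternatively, one can avoid re-deriving the squeeze by observing $S_p = \bigl(\bigl(\sum_i (1/|x_i|)^p\bigr)^{1/p}\bigr)^{-1}$ and quoting the middle chain of the proof of Theorem~\ref{theorem:max-mag} for the multiset $\{1/|x_1|,\dots,1/|x_n|\}$, which shows $\bigl(\sum_i (1/|x_i|)^p\bigr)^{1/p} \to \max_i 1/|x_i| = 1/\min_i|x_i|$; taking reciprocals then closes the argument.

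The main obstacle, exactly as in Theorem~\ref{theorem:max-mag}, is being careful about the nested limits: $f$ and $f^{-1}$ are themselves defined as $p\to\infty$ limits, so the AFM expression must be read as ``form the $p$-indexed expression $S_p$, then take $p\to\infty$'', and the intermediate manipulations (factoring out $n^{1/p}$, passing to reciprocals) must be justified at each finite $p$ before the limit is taken --- none of this is deep but it should be stated cleanly. The one genuinely separate case is $\min(|\mathcal{X}|) = 0$, i.e.\ some $x_i = 0$: then $|x_i|^{-p} = +\infty$ and $S_p = 0$ for every $p$, so the claimed identity $\bigoplus_\theta(\mathcal{X}) = 0 = \min(|\mathcal{X}|)$ still holds (consistent with the $|\cdot|$-relaxation used in the implementation), but it falls outside the $(0,\infty)$ analysis above and should be handled in a one-line remark. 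Everything else is routine algebra.
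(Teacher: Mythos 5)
Your proposal is correct, and your second, ``alternative'' route --- writing $S_p = \bigl(\bigl(\sum_i (1/|x_i|)^p\bigr)^{1/p}\bigr)^{-1}$ and invoking Theorem~\ref{theorem:max-mag} on the multiset of reciprocals --- is exactly what the paper does: it uses the monotonically decreasing transformation $T(x)=1/x$ to write $\min(|\mathcal{X}|) = 1/\max(1/|\mathcal{X}|)$ and then substitutes the max-magnitude parametrisation. Your primary argument, the direct squeeze $n^{-1/p}m \le S_p \le m$, is a self-contained variant that re-runs the squeeze from Theorem~\ref{theorem:max-mag} with the inequalities flipped rather than citing that theorem as a lemma; it is slightly longer but avoids the (harmless) sleight of hand of applying the max result to a transformed multiset, and each inequality you state is verified correctly. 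Your treatment of the $\min(|\mathcal{X}|)=0$ case is a genuine addition --- the paper silently assumes all $x_i \neq 0$ (as it must also for $f(x)=\log|x|$ and $f(x)=1/x$) --- and your observation that the identity still holds in the degenerate limit is sound, though strictly speaking $f_p(0)$ is undefined rather than $+\infty$ unless one works in the extended reals. Either of your two routes would serve as a complete proof.
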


\begin{proof}
    \begin{align}
        \sideset{}{_{\langle \lim_{p \to \infty} |x|^{-p}, 0, 0\rangle}}\bigoplus_{x_i \in \mathcal{X}} x_i &= 
        \lim_{p \to \infty} \left( n^{0-1} \sum_{x_i \in \mathcal{X}} |x_i - 0 \cdot \mu|^{-p}) \right)^{-\frac{1}{p}}\\
        &= \lim_{p \to \infty} \frac{1}{n^{-\frac{1}{p}}} \left( \sum_{x_i \in \mathcal{X}} |x_i|^{-p}) \right)^{-\frac{1}{p}} \\
        &= \lim_{p \to \infty} \left( \sum_{x_i \in \mathcal{X}} |x_i|^{-p}) \right)^{-\frac{1}{p}}
    \end{align}

    We use the theorem for the parametrisation of max magnitude (Theorem \ref{theorem:max-mag}) as a lemma for this proof. By applying a monotonically decreasing transformation to the inputs and then inverting that transformation on the output, we can write the min as a function of the max. Since the inputs are restricted to the positive domain with the absolute value, we select the transformation $T(x) = \frac{1}{x}$:

    \begin{align}
        \min(|\mathcal{X}|) = \frac{1}{\max(\frac{1}{|\mathcal{X}|})}
    \end{align}

    Substituting the parametrisation from Theorem \ref{theorem:max-mag} for $\max$, we get:

    \begin{align}
        \min(|\mathcal{X}|) &= \frac{1}{\lim_{p \to \infty} \left( \sum_{x_i \in \mathcal{X}} \left(\frac{1}{|x_i|}\right)^{p} \right)^{\frac{1}{p}}} \\
        &= \lim_{p \to \infty} \left( \sum_{x_i \in \mathcal{X}} |x_i|^{-p} \right)^{-\frac{1}{p}}
    \end{align}
\end{proof}


\begin{theorem}
        \textbf{Max}. For $\theta = \langle f, \alpha, \beta \rangle = \langle \lim_{p \to \infty} e^{px}, 0, 0\rangle$, the augmented $f$-mean equals the max: $\bigoplus_\theta = \max(\mathcal{X})$.
        \label{theorem:max}
\end{theorem}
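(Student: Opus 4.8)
The goal is to show that for $\theta = \langle \lim_{p\to\infty} e^{px}, 0, 0\rangle$, the augmented $f$-mean reduces to $\max(\mathcal{X})$. The plan is to mirror the structure of the proof of Theorem~\ref{theorem:max-mag} (max magnitude), since the two arguments are structurally identical: substitute $f(x) = e^{px}$ and its inverse $f^{-1}(y) = \frac{1}{p}\log(y)$ into Equation~\eqref{eq:genagg} with $\alpha = 0$, $\beta = 0$, simplify to a clean ``$p$-norm-like'' expression, and then pin down the limit by a squeeze argument.

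First I would write out
\[
    \sideset{}{_{\langle \lim_{p \to \infty} e^{px}, 0, 0\rangle}}\bigoplus_{x_i \in \mathcal{X}} x_i
    = \lim_{p\to\infty} \frac{1}{p}\log\!\left( n^{-1} \sum_{x_i \in \mathcal{X}} e^{p x_i} \right)
    = \lim_{p\to\infty}\left[ \frac{1}{p}\log\!\left( \sum_{x_i \in \mathcal{X}} e^{p x_i} \right) - \frac{\log n}{p}\right],
\]
and note the $\frac{\log n}{p}$ term vanishes as $p \to \infty$, leaving $\lim_{p\to\infty} \frac{1}{p}\log\big(\sum_i e^{p x_i}\big)$, which is exactly the classical ``log-sum-exp approaches the max'' identity (the smooth-max / Laplace-principle limit). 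To keep the proof self-contained and in the same style as Theorem~\ref{theorem:max-mag}, I would prove this by squeezing: since $e^{px}$ is monotonically increasing in $x$, replacing every $x_i$ by $\max(\mathcal{X})$ only increases the sum, giving the upper bound $\frac{1}{p}\log\big(n \, e^{p\max(\mathcal{X})}\big) = \max(\mathcal{X}) + \frac{\log n}{p}$; and since every summand is positive, the sum is at least $e^{p\max(\mathcal{X})}$ (that term appears in the sum because $\max(\mathcal{X}) \in \mathcal{X}$), giving the lower bound $\max(\mathcal{X})$. Taking $p \to \infty$ on both bounds and invoking the squeeze theorem yields the claim.

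The only subtlety — and the place I would be most careful — is the handling of the two limits: the $\lim_{p\to\infty}$ that defines $f$ as a limiting function and the implicit structure of $f^{-1}$. Strictly, $f = \lim_{p\to\infty} e^{px}$ is not a genuine function, so the clean way to read the statement (consistent with how Theorems~\ref{theorem:max-mag} and the min-magnitude theorem are written) is that one evaluates the augmented $f$-mean with $f_p(x) = e^{px}$ for finite $p$ and then takes $p \to \infty$ of the \emph{result}; I would state this reading explicitly at the start. Given that reading, everything else is routine: the $\alpha = 0$ normalisation contributes only the $n^{-1/p}$-type factor that disappears in the limit, and the squeeze bounds are elementary. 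No genuine obstacle arises; the proof is a direct adaptation of the max-magnitude argument with $e^{px}$ in place of $|x|^p$.
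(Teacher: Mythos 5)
Your proposal is correct and follows essentially the same route as the paper's own proof: substitute $f_p(x)=e^{px}$ with $\alpha=\beta=0$, observe that the $\frac{1}{p}\log\frac{1}{n}$ term vanishes, and squeeze the remaining log-sum-exp expression between $\max(\mathcal{X})$ and $\max(\mathcal{X})+\frac{\log n}{p}$ exactly as in the max-magnitude argument. Your explicit remark about reading the limiting $f$ as "evaluate at finite $p$, then take $p\to\infty$ of the result" is a welcome clarification of what the paper leaves implicit.
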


\begin{proof}
    \begin{align}
        \sideset{}{_{\langle \lim_{p \to \infty} e^{px}, 0, 0\rangle}}\bigoplus_{x_i \in \mathcal{X}} x_i &= 
        \lim_{p \to \infty} \frac{1}{p} \log \left( n^{0-1} \sum_{x_i \in \mathcal{X}} e^{p (x_i - 0 \cdot \mu)} \right) \\
        &= 
        \lim_{p \to \infty} \frac{1}{p} \log \left(\frac{1}{n}\right) + \frac{1}{p} \log \left(\sum_{x_i \in \mathcal{X}} e^{p \cdot x_i} \right) \\
        &= 
        \lim_{p \to \infty} \log \left( \left( \sum_{x_i \in \mathcal{X}} e^{p \cdot x_i} \right)^{\frac{1}{p}} \right) \\
    \end{align}

    This aggregator is composed of monotonic functions, so if every element $x_i$ is substituted with $\max(\mathcal{X})$, then the output should increase. Therefore, we can write the following inequality:

    \begin{align}
         \lim_{p \to \infty} \log \left( \left( \sum_{x_i \in \mathcal{X}} e^{p \cdot x_i} \right)^{\frac{1}{p}} \right) &\leq \lim_{p \to \infty} \log \left( \left( \sum_{x_i \in \mathcal{X}} e^{p \cdot \max(\mathcal{X})} \right)^{\frac{1}{p}} \right) \\
         &\leq \lim_{p \to \infty} \log \left( n^{\frac{1}{p}} \cdot \left( e^{p \cdot \max(\mathcal{X})} \right)^{\frac{1}{p}} \right) \\
         &\leq \log \left( e^{\max(\mathcal{X})} \right) \\
         &\leq \max(\mathcal{X}) \\
    \end{align}

    Similarly, since sum is monotonic, the value of the output computed over a set $\bigoplus_\theta(\mathcal{X})$ is greater than the output computed over a single element from that set $\bigoplus_\theta(\{x_i\})$, where $x_i \in \mathcal{X}$. Furthermore, we know that  $\max(\mathcal{X})$ is one of the elements of $\mathcal{X}$. So, we can write the following inequality:

    \begin{align}
         \lim_{p \to \infty} \log \left( \left( \sum_{x_i \in \mathcal{X}} e^{p \cdot x_i} \right)^{\frac{1}{p}} \right) &\geq \lim_{p \to \infty} \log \left( \left( e^{p \cdot \max(\mathcal{X})} \right)^{\frac{1}{p}} \right) \\
         &\geq \log \left( e^{\max(\mathcal{X})} \right) \\
         &\geq \max(\mathcal{X}) \\
    \end{align}

    Consequently, by the squeeze theorem, we can state:

    \begin{align}
         \lim_{p \to \infty} \log \left( \left( \sum_{x_i \in \mathcal{X}} e^{p \cdot x_i} \right)^{\frac{1}{p}} \right) = \max(\mathcal{X})
    \end{align}
    
\end{proof}


\begin{theorem}
        \textbf{Min}. For $\theta = \langle f, \alpha, \beta \rangle = \langle \lim_{p \to \infty} e^{-px}, 0, 0\rangle$, the augmented $f$-mean equals the min: $\bigoplus_\theta = \min(\mathcal{X})$.
\end{theorem}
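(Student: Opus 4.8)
The plan is to mirror the structure of the Min Magnitude proof, using the Max theorem (Theorem~\ref{theorem:max}) as a lemma via the order-reversing transformation $T(x) = -x$. First I would substitute the parametrisation $\theta = \langle f, \alpha, \beta\rangle = \langle \lim_{p\to\infty} e^{-px}, 0, 0\rangle$ into Equation~\eqref{eq:genagg}. Here $f(x) = \lim_{p\to\infty} e^{-px}$, whose inverse is $f^{-1}(y) = \lim_{p\to\infty} -\tfrac{1}{p}\log(y)$, and with $\alpha = 0$, $\beta = 0$ the augmented $f$-mean becomes
\begin{equation}
    \sideset{}{_{\langle \lim_{p \to \infty} e^{-px}, 0, 0\rangle}}\bigoplus_{x_i \in \mathcal{X}} x_i = \lim_{p \to \infty} -\frac{1}{p} \log\left( n^{-1} \sum_{x_i \in \mathcal{X}} e^{-p x_i} \right).
\end{equation}
Splitting the logarithm gives a term $-\tfrac{1}{p}\log(\tfrac{1}{n}) \to 0$, so the expression simplifies to $\lim_{p\to\infty} -\tfrac{1}{p}\log\big(\sum_{x_i\in\mathcal{X}} e^{-p x_i}\big)$.

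Next I would invoke Theorem~\ref{theorem:max}, which establishes $\lim_{p\to\infty} \tfrac{1}{p}\log\big(\sum_{x_i\in\mathcal{X}} e^{p x_i}\big) = \max(\mathcal{X})$. Applying the monotonically decreasing transformation $T(x) = -x$ elementwise to the input set and inverting it on the output expresses $\min$ in terms of $\max$: $\min(\mathcal{X}) = -\max(-\mathcal{X})$. Substituting the parametrisation from Theorem~\ref{theorem:max} with inputs $-x_i$,
\begin{align}
    \min(\mathcal{X}) &= -\lim_{p \to \infty} \log\left( \left( \sum_{x_i \in \mathcal{X}} e^{p \cdot (-x_i)} \right)^{\frac{1}{p}} \right) \\
    &= \lim_{p \to \infty} -\frac{1}{p} \log\left( \sum_{x_i \in \mathcal{X}} e^{-p x_i} \right),
\end{align}
which is exactly the simplified form of $\bigoplus_\theta$ above, completing the proof.

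Alternatively, if one prefers a self-contained argument rather than reducing to Theorem~\ref{theorem:max}, the squeeze-theorem template used in that proof transfers directly: since the summand functions are monotonic, substituting every $x_i$ by $\min(\mathcal{X})$ gives the upper bound $\lim_{p\to\infty}-\tfrac1p\log\big(\sum e^{-px_i}\big) \geq \lim_{p\to\infty}-\tfrac1p\log\big(n\,e^{-p\min(\mathcal{X})}\big) = \min(\mathcal{X})$, and dropping all but the $\min(\mathcal{X})$ term in the sum gives the matching lower bound $\lim_{p\to\infty}-\tfrac1p\log\big(\sum e^{-px_i}\big) \leq \lim_{p\to\infty}-\tfrac1p\log\big(e^{-p\min(\mathcal{X})}\big) = \min(\mathcal{X})$. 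I expect no serious obstacle here; the only point requiring a little care is confirming that the $-\tfrac{1}{p}\log(\tfrac{1}{n})$ prefactor and the $n^{1/p}$ factor inside the logarithm genuinely vanish in the limit, so that the cardinality dependence drops out exactly as in the $\max$ case.
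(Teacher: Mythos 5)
Your proposal is correct and follows essentially the same route as the paper: substitute the parametrisation, discard the vanishing $-\tfrac{1}{p}\log(\tfrac{1}{n})$ term, and reduce to the Max theorem via $\min(\mathcal{X}) = -\max(-\mathcal{X})$. The supplementary squeeze-theorem argument is also sound (though note your labels ``upper bound'' and ``lower bound'' are swapped relative to the inequalities you write, which are themselves correct).
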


\begin{proof}
    \begin{align}
        \sideset{}{_{\langle \lim_{p \to \infty} e^{-px}, 0, 0\rangle}}\bigoplus_{x_i \in \mathcal{X}} x_i &= 
        \lim_{p \to \infty} -\frac{1}{p} \log \left( n^{0-1} \sum_{x_i \in \mathcal{X}} e^{-p (x_i - 0 \cdot \mu)} \right) \\
        &= 
        \lim_{p \to \infty} -\frac{1}{p} \log \left(\frac{1}{n}\right) - \frac{1}{p} \log \left(\sum_{x_i \in \mathcal{X}} e^{-p \cdot x_i} \right) \\
        &= 
        \lim_{p \to \infty} \log \left( \left( \sum_{x_i \in \mathcal{X}} e^{-p \cdot x_i} \right)^{-\frac{1}{p}} \right) \\
    \end{align}

    We use the theorem for the parametrisation of max (Theorem \ref{theorem:max}) as a lemma for this proof. By applying a monotonically decreasing transformation to the inputs and then inverting that transformation on the output, we can write the min as a function of the max. We select the transformation $T(x) = -x$:

    \begin{align}
        \min(\mathcal{X}) = -\max(-\mathcal{X})
    \end{align}

    Substituting the parametrisation from Theorem \ref{theorem:max} for $\max$, we get:

    \begin{align}
        \min(\mathcal{X}) &= \lim_{p \to \infty} -\log \left( \left( \sum_{x_i \in \mathcal{X}} e^{p \cdot (-x_i)} \right)^{\frac{1}{p}} \right) \\
        &= \lim_{p \to \infty} \log \left( \left( \sum_{x_i \in \mathcal{X}} e^{-p \cdot x_i} \right)^{-\frac{1}{p}} \right)
    \end{align}

\end{proof}


\begin{theorem}
    \textbf{Harmonic Mean}. For $\theta = \langle f, \alpha, \beta \rangle = \langle \frac{1}{x}, 0, 0\rangle$, the augmented $f$-mean equals the harmonic mean $\bigoplus_
    \theta = \frac{n}{\sum \frac{1}{x_i}}$.
\end{theorem}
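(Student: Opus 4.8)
The plan is to proceed exactly as in the preceding parametrisation proofs (Mean, Sum, Product, \dots): substitute the specified triple $\theta = \langle 1/x, 0, 0 \rangle$ directly into the augmented $f$-mean formula \eqref{eq:genagg} and simplify. The only preliminary observation needed is that the reciprocal function is its own inverse, i.e. if $f(x) = 1/x$ then $f^{-1}(y) = 1/y$, so both the outer and inner applications of $f$ in \eqref{eq:genagg} are just reciprocals.

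Concretely, I would first plug in $\beta = 0$ to kill the centralisation term, reducing $f(x_i - \beta\mu)$ to $f(x_i) = 1/x_i$. Next I would plug in $\alpha = 0$, so the prefactor $n^{\alpha - 1}$ becomes $n^{-1} = \tfrac1n$; the bracketed quantity is then $\tfrac1n \sum_{x_i \in \mathcal{X}} \tfrac{1}{x_i}$. Finally I apply $f^{-1}$, i.e. take the reciprocal of that, obtaining
\begin{equation}
  \sideset{}{_{\langle 1/x, 0, 0\rangle}}\bigoplus_{x_i \in \mathcal{X}} x_i
  = \left( \frac{1}{n} \sum_{x_i \in \mathcal{X}} \frac{1}{x_i} \right)^{-1}
  = \frac{n}{\sum_{x_i \in \mathcal{X}} \frac{1}{x_i}},
\end{equation}
which is the harmonic mean, as claimed.

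There is essentially no obstacle here: unlike the $\min$/$\max$ cases there is no limit to take and no squeeze-theorem argument, so the proof is a two-line substitution. The only caveats worth flagging are the implicit domain assumptions already standing throughout the appendix, namely that every $x_i \neq 0$ (so that $f(x_i) = 1/x_i$ is defined) and that $\sum_i 1/x_i \neq 0$ (so that the final reciprocal is defined); these mirror the usual conventions for the harmonic mean and need no further comment. Hence the write-up will simply be the displayed chain of equalities above, in the same format as Theorems for Mean and Sum.
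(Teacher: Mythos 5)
Your proposal is correct and matches the paper's own proof exactly: a direct substitution of $\theta = \langle 1/x, 0, 0\rangle$ into Equation \eqref{eq:genagg}, using that $f(x)=1/x$ is its own inverse, yielding $\left(\frac{1}{n}\sum \frac{1}{x_i}\right)^{-1} = \frac{n}{\sum \frac{1}{x_i}}$. The domain caveats you flag are sensible but, as you note, are left implicit in the paper as well.
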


\begin{proof}
    \begin{align}
        \sideset{}{_{\langle x, 0, 0\rangle}}\bigoplus_{x_i \in \mathcal{X}} x_i &= 
        \left( n^{0-1} \sum_{x_i \in \mathcal{X}} (x_i - 0 \cdot \mu)^{-1} \right)^{-1} \\
        &= \left( \frac{1}{n} \sum_{x_i \in \mathcal{X}} \frac{1}{x_i} \right)^{-1} \\
        &= \frac{n}{\sum_{x_i \in \mathcal{X}} \frac{1}{x_i}}
    \end{align}
\end{proof}


\begin{theorem}
    \textbf{Geometric Mean}. For $\theta = \langle f, \alpha, \beta \rangle = \langle \log(|x|), 0, 0\rangle$, the augmented $f$-mean equals the geometric mean $\bigoplus_
    \theta = \sqrt[n]{\prod |x_i|}$.
\end{theorem}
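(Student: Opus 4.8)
The plan is to establish the identity by direct substitution into the augmented $f$-mean formula \eqref{eq:genagg}, mirroring the proof already given for the Product case. I would take $f(x) = \log(|x|)$, $\alpha = 0$, and $\beta = 0$, and use that the relevant inverse of $f$ is $f^{-1}(y) = e^y$ --- this is the branch picked out by the relaxed invertibility constraint from the Implementation section, and it is the appropriate one here since $\sqrt[n]{\prod |x_i|}$ is non-negative.

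Plugging in, the cardinality prefactor is $n^{\alpha - 1} = n^{-1}$, and the centring term $\beta \mu$ vanishes, so $\bigoplus_\theta = \exp\!\big(\tfrac{1}{n}\sum_{x_i \in \mathcal{X}} \log|x_i|\big)$. I would then distribute the exponential over the sum via $e^{a+b} = e^a e^b$ and rewrite $e^{\frac{1}{n}\log|x_i|} = |x_i|^{1/n}$, which collapses the right-hand side to $\prod_{x_i \in \mathcal{X}} |x_i|^{1/n} = \sqrt[n]{\prod_{x_i \in \mathcal{X}} |x_i|}$, completing the proof.

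There is essentially no obstacle: the manipulation is the same three lines used for Product, just with $\alpha = 0$ instead of $\alpha = 1$ so that the $n$-th root survives rather than cancelling. The only subtlety worth a sentence is the non-injectivity of $\log(|x|)$ on $\mathbb{R}$, which is handled exactly as elsewhere in the paper by the absolute-value relaxation on the inverse, so that $f^{-1} \!\circ f$ recovers $|x|$ and the positive branch $e^y$ yields the geometric mean.
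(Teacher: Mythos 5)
Your proposal is correct and follows exactly the same route as the paper's proof: direct substitution of $\langle \log(|x|), 0, 0\rangle$ into the augmented $f$-mean, exponentiating the averaged logarithms, and collapsing $\prod_{x_i \in \mathcal{X}} |x_i|^{1/n}$ into the $n$-th root. Your extra remark about the absolute-value relaxation handling the non-injectivity of $\log(|x|)$ is a sensible clarification but does not change the argument.
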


\begin{proof}
    \begin{align}
        \sideset{}{_{\langle \log(|x|), 0, 0\rangle}}\bigoplus_{x_i \in \mathcal{X}} x_i &= 
        e^{\left( n^{0-1} \sum_{x_i \in \mathcal{X}} \log(|x_i - 0 \cdot \mu|) \right)} \\
        &= e^{\frac{1}{n} \sum_{x_i \in \mathcal{X}} \log(|x_i|)} \\
        &= \prod_{x_i \in \mathcal{X}} e^{\frac{1}{n} \log(|x_i|)} \\
        &= \prod_{x_i \in \mathcal{X}} e^{\log(|x_i|^{\frac{1}{n}})} \\
        &= \prod_{x_i \in \mathcal{X}} |x_i|^{\frac{1}{n}} \\
        &= \sqrt[n]{\prod_{x_i \in \mathcal{X}} |x_i|}
    \end{align}
\end{proof}


\begin{theorem}
    \textbf{Root Mean Square}. For $\theta = \langle f, \alpha, \beta \rangle = \langle x^2, 0, 0 \rangle$, the augmented $f$-mean equals the root mean square $\bigoplus_
    \theta = \sqrt{\frac{1}{n}\sum{x_i^2}}$.
\end{theorem}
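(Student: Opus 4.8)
The plan is to substitute the parametrisation $\theta = \langle x^2, 0, 0 \rangle$ directly into the definition of the augmented $f$-mean from Equation~\eqref{eq:genagg} and simplify. Here $f(x) = x^2$, so $f^{-1}(x) = \sqrt{x}$ (taking the positive branch, consistent with the relaxation discussed after the $\mathcal{L}_{\mathrm{inv}}$ objective), $\alpha = 0$, and $\beta = 0$. The goal is to show the result collapses to $\sqrt{\tfrac{1}{n}\sum x_i^2}$.

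First I would write out $\bigoplus_{\langle x^2, 0, 0\rangle} x_i = f^{-1}\!\left( n^{0-1} \sum_{x_i \in \mathcal{X}} f(x_i - 0 \cdot \mu) \right)$. Second, I would note that $\beta = 0$ kills the centralisation term, so $f(x_i - 0\cdot\mu) = f(x_i) = x_i^2$. Third, $n^{0-1} = n^{-1} = \tfrac{1}{n}$, so the argument of $f^{-1}$ becomes $\tfrac{1}{n}\sum_{x_i \in \mathcal{X}} x_i^2$. Fourth, applying $f^{-1} = \sqrt{\cdot}$ yields $\sqrt{\tfrac{1}{n}\sum_{x_i \in \mathcal{X}} x_i^2}$, which is exactly the root mean square. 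This mirrors the structure of the earlier proofs for mean, harmonic mean, and geometric mean, where the same three moves (eliminate $\beta\mu$, evaluate $n^{\alpha-1}$, apply $f^{-1}$) suffice.

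The only subtlety — and it is a very mild one — is the choice of branch for $f^{-1}$, since $x^2$ is not globally invertible. I would handle this exactly as the paper does elsewhere: the inverse is taken on $\mathbb{R}^+$, so $f^{-1}(y) = \sqrt{y}$ for $y \geq 0$, and the sum of squares is always nonnegative, so the expression is well-defined. There is no genuine obstacle here; the entire argument is a one-line substitution followed by arithmetic simplification, and the proof can be presented as a short \texttt{align} environment with three or four steps.

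\begin{proof}
    \begin{align}
        \sideset{}{_{\langle x^2, 0, 0\rangle}}\bigoplus_{x_i \in \mathcal{X}} x_i &=
        \left( n^{0-1} \sum_{x_i \in \mathcal{X}} (x_i - 0 \cdot \mu)^2 \right)^{\frac{1}{2}} \\
        &= \left( \frac{1}{n} \sum_{x_i \in \mathcal{X}} x_i^2 \right)^{\frac{1}{2}} \\
        &= \sqrt{\frac{1}{n} \sum_{x_i \in \mathcal{X}} x_i^2}
    \end{align}
\end{proof}
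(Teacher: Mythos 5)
Your proof is correct and matches the paper's own argument essentially line for line: both substitute $\theta = \langle x^2, 0, 0\rangle$ into Equation~\eqref{eq:genagg}, drop the $\beta\mu$ term, evaluate $n^{0-1} = \tfrac{1}{n}$, and apply $f^{-1}(y) = \sqrt{y}$. Your added remark on the branch choice for $f^{-1}$ is a reasonable (if unnecessary) clarification that the paper leaves implicit.
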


\begin{proof}
    \begin{align}
        \sideset{}{_{\langle x^2, 0, 0\rangle}}\bigoplus_{x_i \in \mathcal{X}} x_i &= 
        \left( n^{0-1} \sum_{x_i \in \mathcal{X}} (x_i - 0 \cdot \mu)^2 \right)^{\frac{1}{2}} \\
        &= 
        \sqrt{ \frac{1}{n} \sum_{x_i \in \mathcal{X}} x_i^2 } \\
    \end{align}
\end{proof}


\begin{theorem}
    \textbf{Euclidean Norm}. For $\theta = \langle f, \alpha, \beta \rangle = \langle x^2, 1, 0 \rangle$, the augmented $f$-mean equals the euclidean norm $\bigoplus_
    \theta = \sqrt{\sum{x_i^2}}$.
\end{theorem}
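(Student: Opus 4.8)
The plan is a direct substitution into the augmented $f$-mean formula \eqref{eq:genagg}, mirroring the preceding proofs (in particular the Root Mean Square case, which uses the same $f$). First I would set $f(x) = x^2$, so that $f^{-1}(x) = \sqrt{x}$, and insert the parameters $\alpha = 1$, $\beta = 0$ into Equation \eqref{eq:genagg}. The $\beta = 0$ choice kills the centralisation term, turning $x_i - \beta\mu$ into $x_i$; the $\alpha = 1$ choice makes the cardinality prefactor $n^{\alpha - 1} = n^0 = 1$, which is precisely the difference from the Root Mean Square parametrisation (where $\alpha = 0$ leaves a factor $1/n$). Concretely:
\begin{align}
    \sideset{}{_{\langle x^2, 1, 0\rangle}}\bigoplus_{x_i \in \mathcal{X}} x_i &=
    \left( n^{1-1} \sum_{x_i \in \mathcal{X}} (x_i - 0 \cdot \mu)^2 \right)^{\frac{1}{2}} \\
    &= \sqrt{ \sum_{x_i \in \mathcal{X}} x_i^2 } .
\end{align}

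There is essentially no obstacle here: the computation is two lines of algebra, and every quantity involved is well defined since $x^2 \geq 0$ guarantees the square root is real. The only point worth a remark is the (mild) non-invertibility of $f(x) = x^2$ on all of $\mathbb{R}$; as discussed in the implementation section, the relaxed reconstruction objective $\mathcal{L}_\mathrm{inv}$ only enforces $f^{-1}(f(x)) = |x|$, so $f^{-1}$ should be read as $\sqrt{\cdot}$ acting on the (non-negative) aggregated value, which is exactly what the Euclidean norm requires. No further case analysis or limiting argument is needed, unlike the $\min$/$\max$ parametrisations.
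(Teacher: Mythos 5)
Your proposal is correct and matches the paper's proof essentially verbatim: both substitute $\theta = \langle x^2, 1, 0\rangle$ into Equation \eqref{eq:genagg}, note that $n^{1-1}=1$ and $\beta=0$ removes the centralisation, and apply $f^{-1}(x)=\sqrt{x}$ to obtain $\sqrt{\sum x_i^2}$. Your additional remark on the relaxed invertibility of $x^2$ is a sensible clarification but not needed for the argument.
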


\begin{proof}
    \begin{align}
        \sideset{}{_{\langle x^2, 1, 0\rangle}}\bigoplus_{x_i \in \mathcal{X}} x_i &= 
        \left( n^{1-1} \sum_{x_i \in \mathcal{X}} (x_i - 0 \cdot \mu)^2 \right)^{\frac{1}{2}} \\
        &= 
        \sqrt{ \sum_{x_i \in \mathcal{X}} x_i^2 } \\
    \end{align}
\end{proof}


\begin{theorem}
    \textbf{Standard Deviation}. For $\theta = \langle f, \alpha, \beta \rangle = \langle x^2, 0, 1 \rangle$, the augmented $f$-mean equals the standard deviation $\bigoplus_
    \theta = \sqrt{\sum{(x_i-\mu)^2}}$.
\end{theorem}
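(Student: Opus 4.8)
The plan is to proceed by direct substitution into the augmented $f$-mean formula \eqref{eq:genagg}, exactly as in the preceding proofs for mean, sum, root mean square and euclidean norm. Setting $\theta = \langle x^2, 0, 1 \rangle$, I would plug $f(x) = x^2$, $f^{-1}(x) = \sqrt{x}$ (on the nonnegative domain, where the sum of squares always lands), $\alpha = 0$, and $\beta = 1$ into $f^{-1}\!\left(n^{\alpha-1}\sum_i f(x_i - \beta\mu)\right)$.

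The key steps, in order: first, substitute to get $\left(n^{0-1}\sum_{x_i \in \mathcal{X}}(x_i - 1\cdot\mu)^2\right)^{1/2}$; second, simplify $n^{0-1} = \tfrac{1}{n}$ and $1\cdot\mu = \mu$; third, recognise the result $\sqrt{\tfrac{1}{n}\sum_{x_i\in\mathcal{X}}(x_i-\mu)^2}$ as the stated standard deviation. Concretely I would write
\begin{align}
    \sideset{}{_{\langle x^2, 0, 1\rangle}}\bigoplus_{x_i \in \mathcal{X}} x_i &=
    \left( n^{0-1} \sum_{x_i \in \mathcal{X}} (x_i - 1 \cdot \mu)^2 \right)^{\frac{1}{2}} \\
    &= \sqrt{\frac{1}{n} \sum_{x_i \in \mathcal{X}} (x_i - \mu)^2}.
\end{align}
I would note in passing that $\mu = \tfrac{1}{n}\sum_i x_i$ is the mean of the input multiset, so the $\beta\mu$ term performs the centring that turns the root-mean-square computation of the previous theorem into the standard deviation; this is precisely the role of $\beta$ advertised in the discussion of centralised moments around \eqref{eq:genagg}.

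There is essentially no hard part: the statement is a one-line corollary of the definition once the root-mean-square proof is in hand, and the only thing to be slightly careful about is the mild notational overloading, since the displayed form $\sqrt{\sum(x_i-\mu)^2}$ in the theorem statement omits the $\tfrac{1}{n}$ factor that the $\alpha=0$ choice supplies (matching the $\tfrac{1}{n}$ inside the ``standard deviation'' entry of Table~\ref{table:specialcases}); I would make sure the written derivation keeps the $n^{\alpha-1} = \tfrac{1}{n}$ factor explicit so the final expression agrees with the intended normalisation. No nontrivial inequality, limit, or squeeze-theorem argument is needed here, in contrast to the min/max cases.
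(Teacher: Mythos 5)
Your proof is correct and is essentially identical to the paper's own: direct substitution of $f(x)=x^2$, $\alpha=0$, $\beta=1$ into the augmented $f$-mean, simplifying to $\sqrt{\tfrac{1}{n}\sum_{x_i\in\mathcal{X}}(x_i-\mu)^2}$. Your side remark about the missing $\tfrac{1}{n}$ in the displayed theorem statement is also well taken --- the paper's derivation (and Table~\ref{table:specialcases}) keep that factor, so the statement's $\sqrt{\sum(x_i-\mu)^2}$ is a typo.
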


\begin{proof}
    \begin{align}
        \sideset{}{_{\langle x^2, 0, 1 \rangle}}\bigoplus_{x_i \in \mathcal{X}} x_i &= 
        \left( n^{0-1} \sum_{x_i \in \mathcal{X}} (x_i - 1 \cdot \mu)^2 \right)^{\frac{1}{2}} \\
        &= 
        \sqrt{ \frac{1}{n} \sum_{x_i \in \mathcal{X}} (x_i - \mu)^2 } \\
    \end{align}
\end{proof}


\begin{theorem}
    \textbf{Log-Sum-Exp}. For $\theta = \langle f, \alpha, \beta \rangle = \langle e^x, 0, 1 \rangle$, the augmented $f$-mean equals the log-sum-exp $\bigoplus_
    \theta = \log(\sum e^{x_i})$.
\end{theorem}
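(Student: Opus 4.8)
The plan is to prove the identity by direct substitution into the definition of the augmented $f$-mean, Equation~\eqref{eq:genagg}, in the same style as the proofs of the Sum and Product cases above. The ingredients are $f(x) = e^x$ --- whose inverse is $f^{-1}(y) = \log(y)$ --- together with the parameter values $\alpha = 1$, $\beta = 0$ (these are the entries for log-sum-exp in Table~\ref{table:specialcases}, and are the values for which the identity actually holds; the triple displayed in the statement should read $\langle e^x, 1, 0\rangle$).

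First I would substitute $f$, $\alpha$, and $\beta$ into $\bigoplus_\theta x_i = f^{-1}\!\left(n^{\alpha-1}\sum_{x_i\in\mathcal{X}} f(x_i - \beta\mu)\right)$. Taking $\alpha = 1$ makes the cardinality prefactor collapse, $n^{\alpha-1} = n^{0} = 1$ --- this is exactly the step that separates log-sum-exp from a normalised ``log-mean-exp'' and mirrors the role of $\alpha$ in the Sum and Product proofs --- while $\beta = 0$ removes the centring term, so that $f(x_i - \beta\mu) = e^{x_i}$. The expression then reduces in one line to
\[
    \sideset{}{_{\langle e^x, 1, 0\rangle}}\bigoplus_{x_i \in \mathcal{X}} x_i \;=\; \log\!\left(\sum_{x_i\in\mathcal{X}} e^{x_i}\right),
\]
which is precisely the log-sum-exp aggregator from $\mathcal{A}$, so the proof is complete.

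Since the argument is a single substitution, there is no substantive obstacle; the only point deserving a remark is that $f^{-1} = \log$ is applied within its domain. Because $\mathcal{X}$ is finite and each $e^{x_i} > 0$, the sum $\sum_{x_i\in\mathcal{X}} e^{x_i}$ is a strictly positive real, so $\log$ is well defined and the output is unambiguous. One could also note, by contrast with Theorem~\ref{theorem:max}, that no limiting parameter $p$ is required here, since log-sum-exp is itself a closed-form aggregator rather than a limit of a parametrised family.
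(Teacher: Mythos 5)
Your proof is correct and takes essentially the same route as the paper's: a one-line substitution of $f(x)=e^x$, $\alpha=1$, $\beta=0$ into Equation~\eqref{eq:genagg}, with $n^{\alpha-1}=1$ collapsing the prefactor. You are also right that the triple in the theorem statement is a typo --- the paper's own proof silently uses $\langle e^x, 1, 0\rangle$, consistent with Table~\ref{table:specialcases}.
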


\begin{proof}
    \begin{align}
        \sideset{}{_{\langle e^x, 1, 0 \rangle}}\bigoplus_{x_i \in \mathcal{X}} x_i &= 
        \log \left( n^{1-1} \sum_{x_i \in \mathcal{X}} e^{x_i - 0 \cdot \mu} \right) \\
        &= 
        \log \left( \sum_{x_i \in \mathcal{X}} e^{x_i} \right) \\
    \end{align}
\end{proof}

\newpage
\section{Limitations}
\label{appendix:limitations}

In our problem statement, we define a set of special cases $\mathcal{A}$ which we refer to as ``standard aggregators''. Our regression experiments analyse the representational capacity of various methods by analysing their ability to regress over the aggregators in $\mathcal{A}$. However, we acknowledge that this set is not exhaustive, so there may exist special cases not in $\mathcal{A}$ which are useful or could provide some additional insight.

Our GNN benchmark experiments also only provide data about the performance of GenAgg in a limited number of applications. We evaluate on the GNN benchmark datasets suite from \cite{benchmark}, which includes MNIST, CIFAR10, CLUSTER, and PATTERN. These datasets provide a mix of node classification and graph classification tasks (to complement the regression tasks from our other experiments). We did not include the TSP and CSL datasets from the same GNN benchmarks suite because TSP is an edge classification task (which would necessitate significant modification of our GNN), and CSL requires node positional encodings in order to be solvable by message passing GNNs, which it does not include by default \cite{benchmark}. In our initial testing, we also considered using the PUBMED, CORA, and CITESEER datasets. However, they are extremely small datasets that often lead to overfitting. Furthermore, there is something fundamental about the coauthor problem (upon which all three datasets are based) that fundamentally does not require the same level of complexity to solve---the train accuracy of all methods, including simple aggregators, approaches $1$. Instead of these coauthor datasets, we opted to use the GNN benchmark dataset, which seemed to present more ``difficult'' problems. However, for the sake of transparency, we include our results on the datasets that we decided not to use:

\begin{figure}[h]
    \centering
    \includegraphics[width=0.32\textwidth]{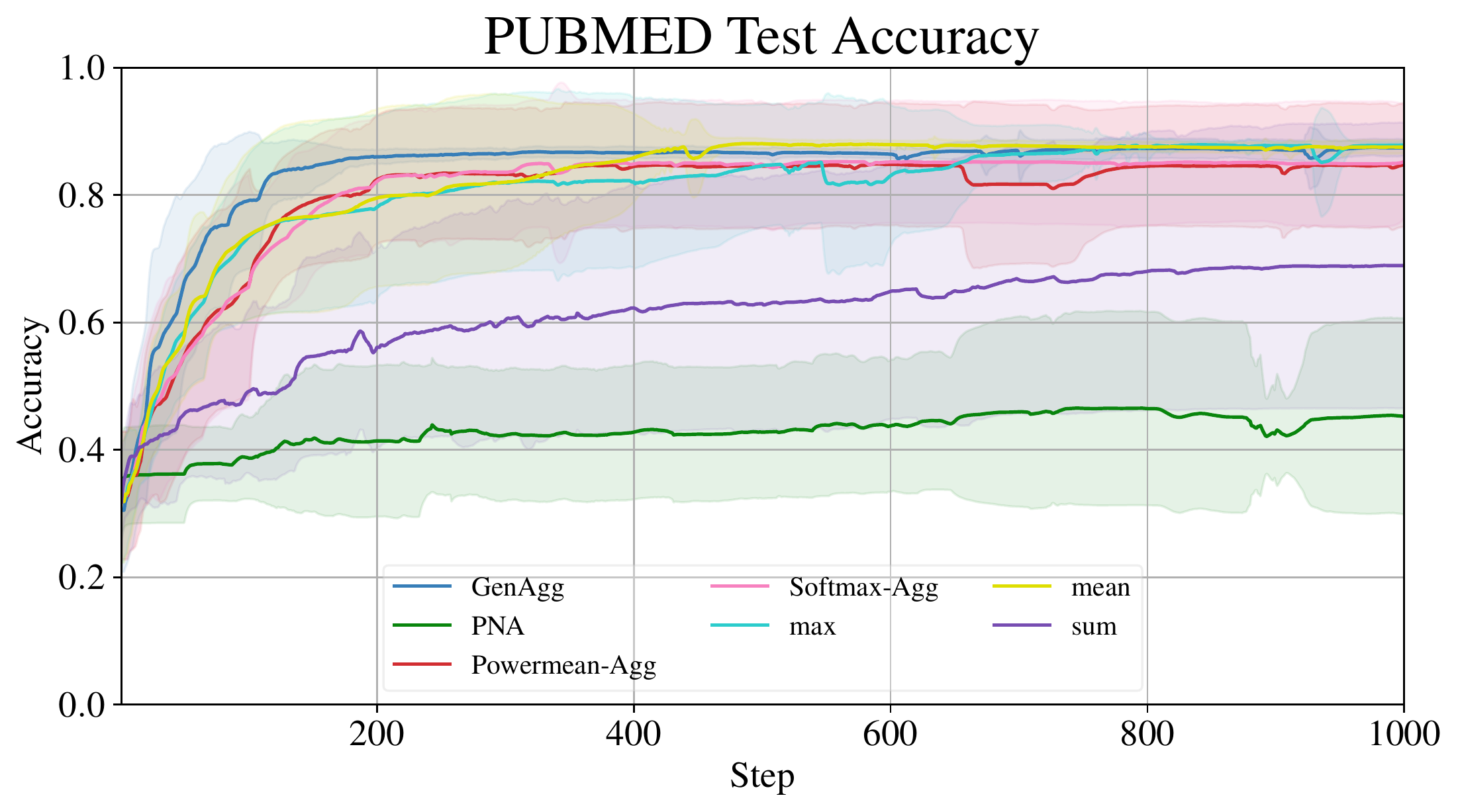}  
    \includegraphics[width=0.32\textwidth]{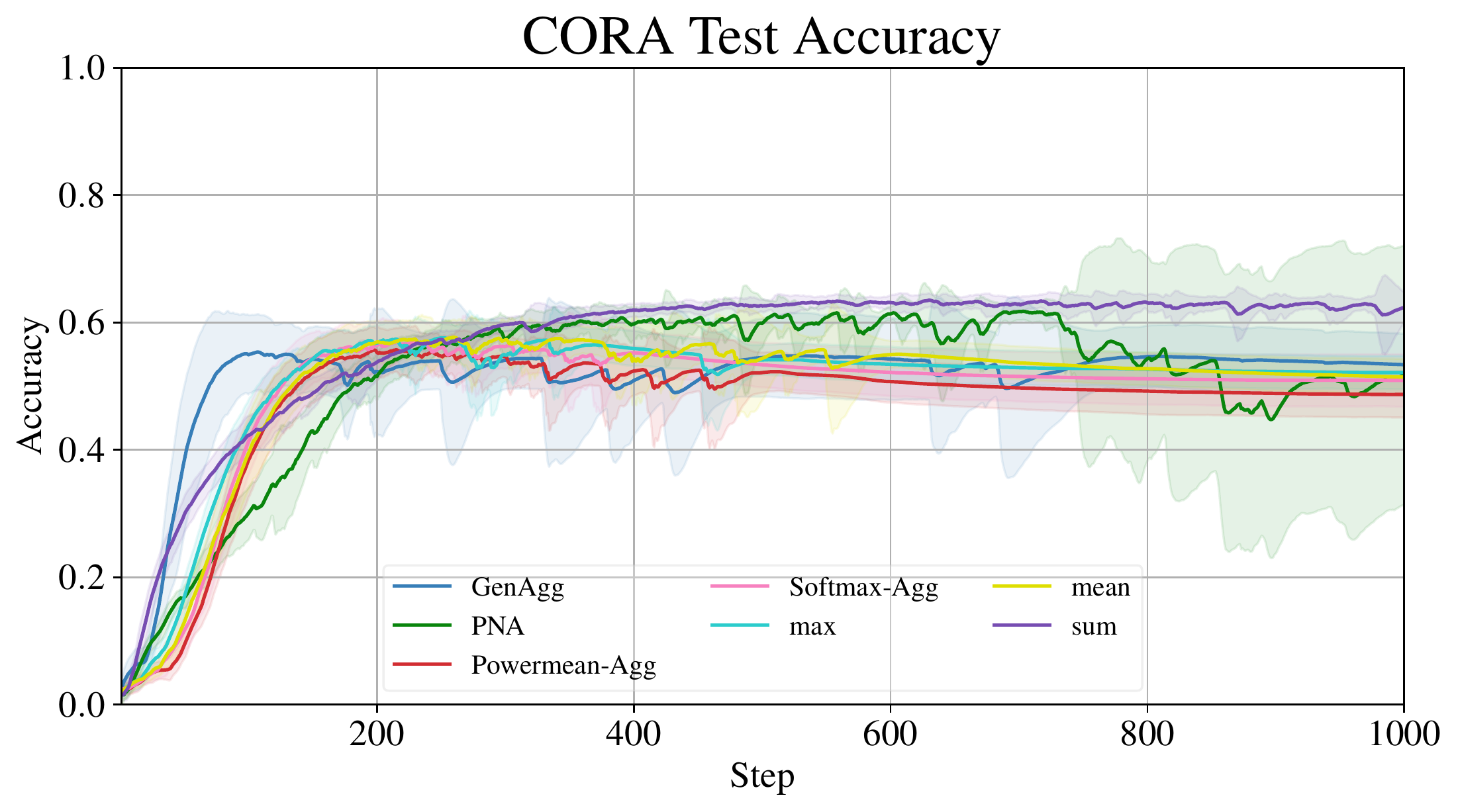}  
    \includegraphics[width=0.32\textwidth]{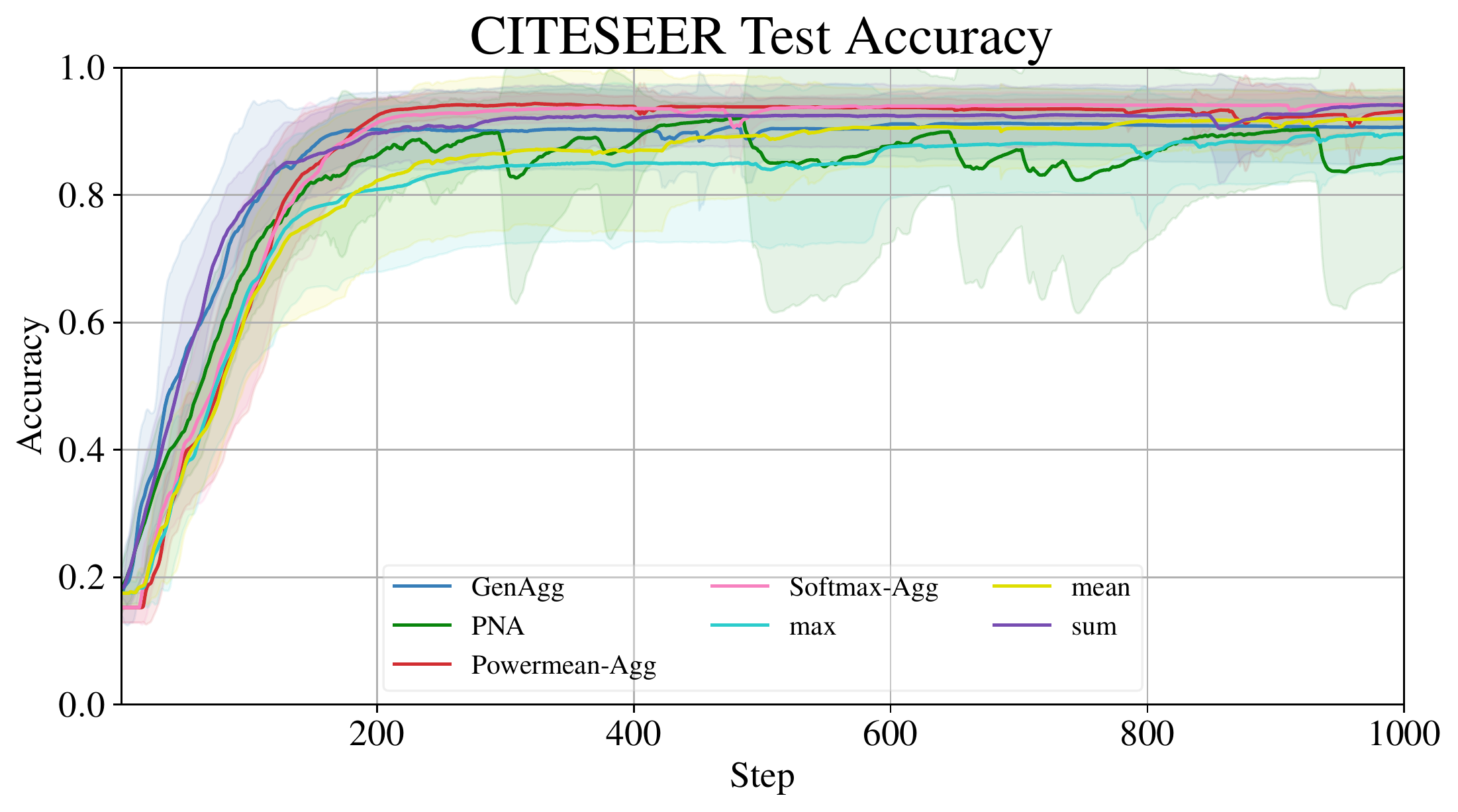}  \\
    \includegraphics[width=0.32\textwidth]{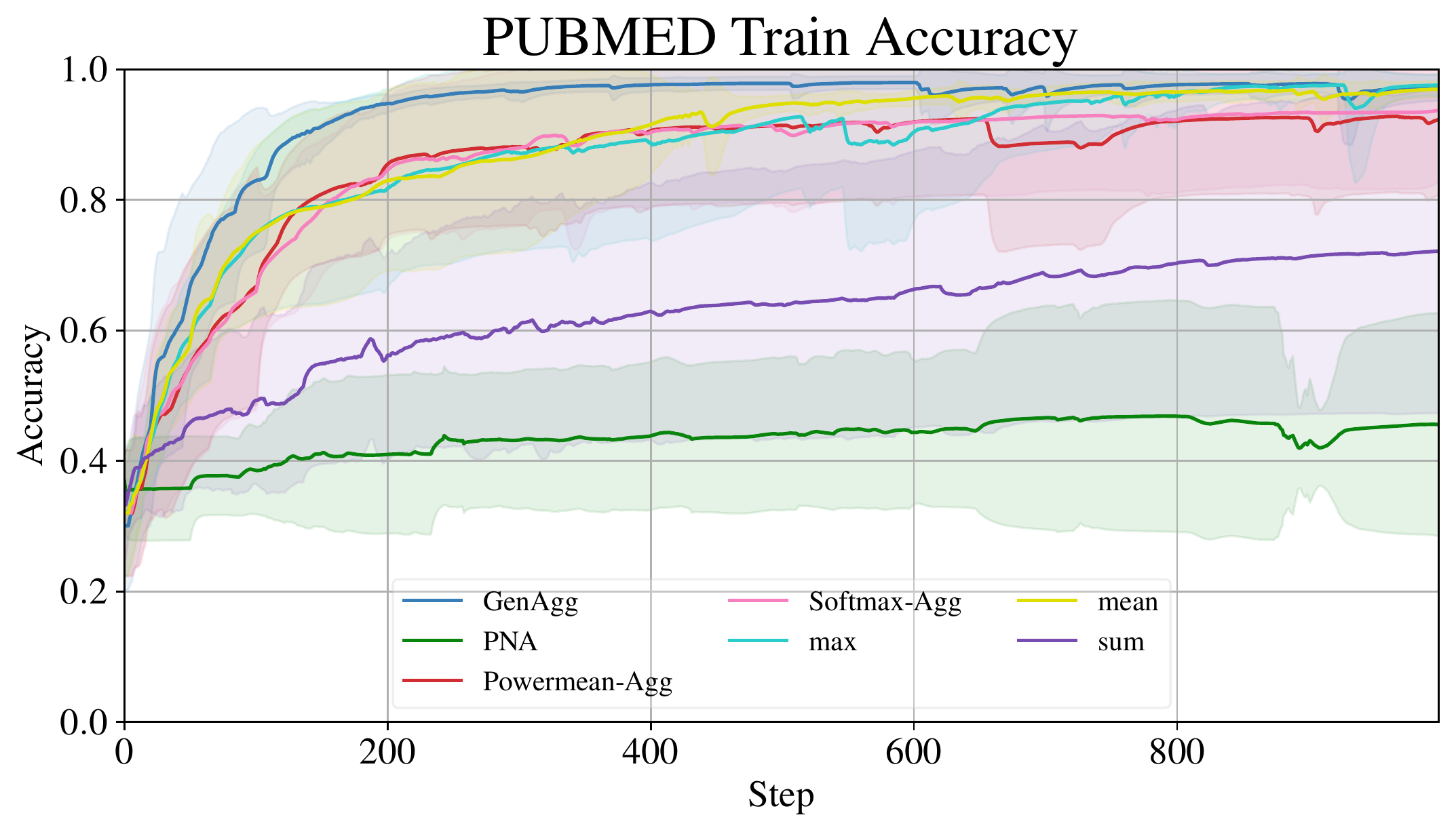}  
    \includegraphics[width=0.32\textwidth]{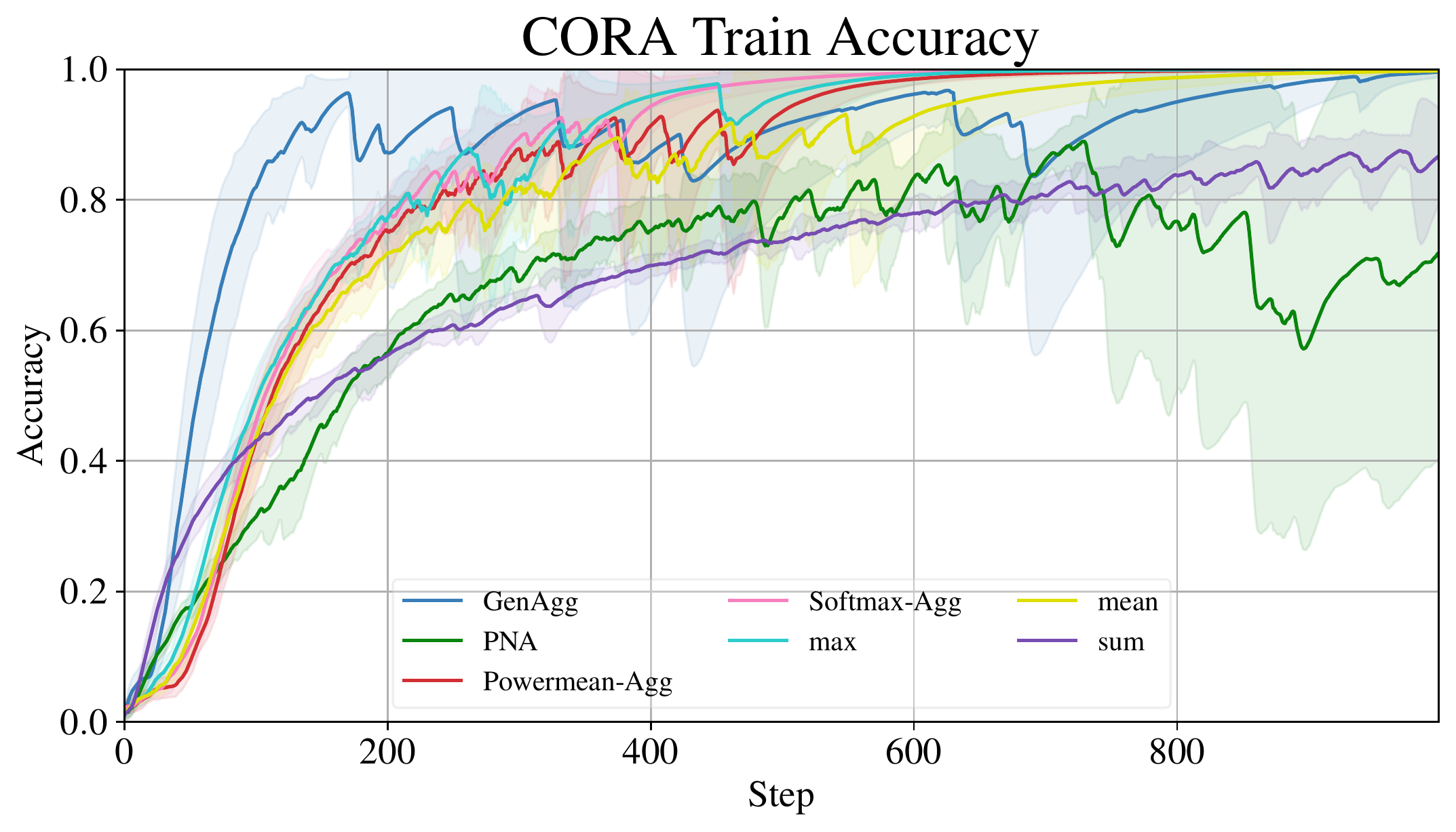}  
    \includegraphics[width=0.32\textwidth]{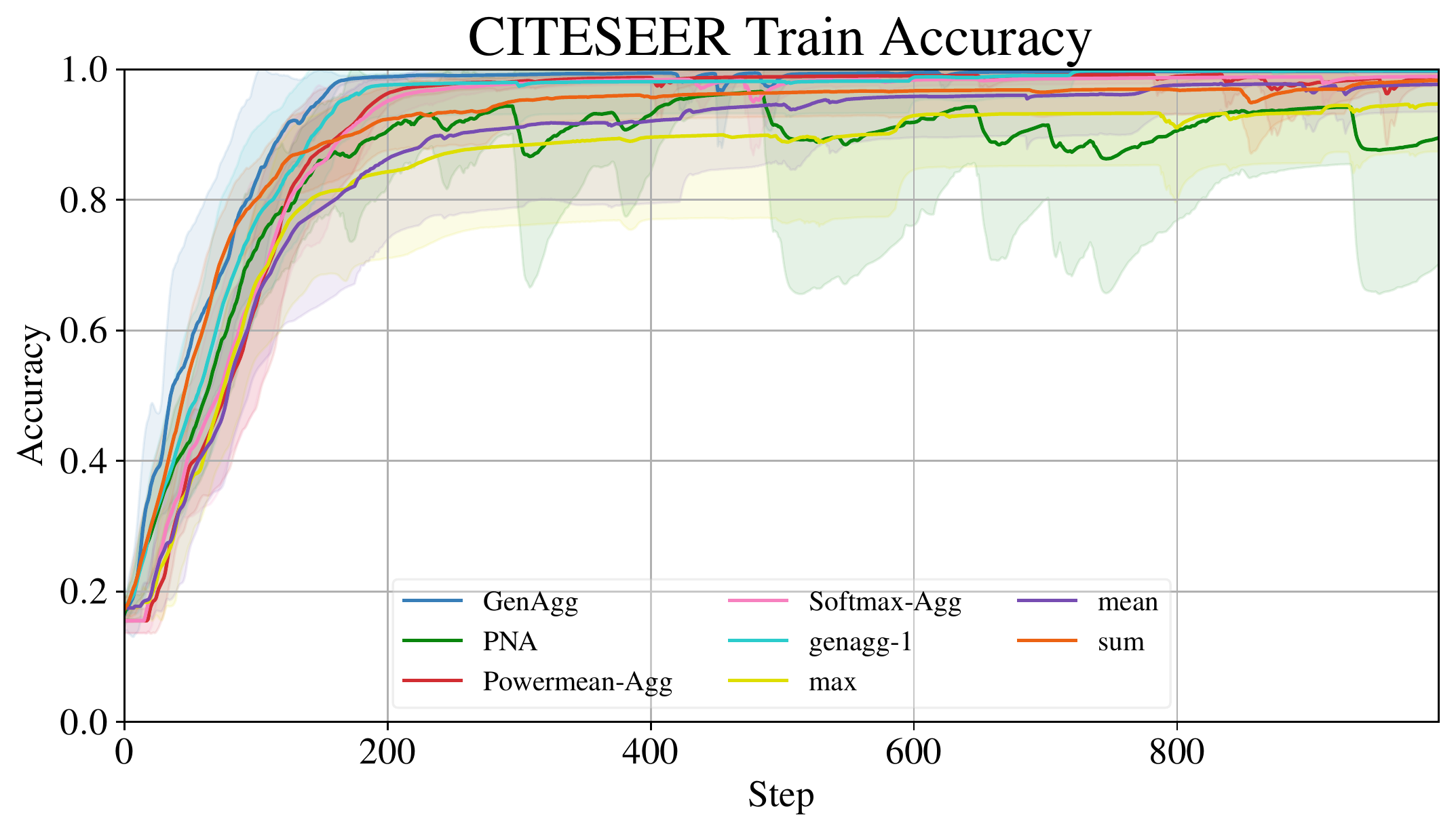}  
    \caption{\small{Train and Test Accuracy for all baselines on PUBMED, CORA, and CITESEER.}}
\end{figure}

In these small datasets, the primary problem across all models is overfitting. GenAgg performs on-par with the best baseline, but it does not exhibit a performance \textit{boost} as it does in the other datasets. To determine if the lack of improvement is due to the overfitting or the dataset itself, we run an additional experiment that explicitly examines the effect of overfitting by artificially reducing the amount of training data (\autoref{fig:overfitting}):

\begin{figure}[h]
    \centering
    \subfloat[Train Accuracy]{
    \begin{adjustbox}{width=0.49\linewidth}
    \begin{tabular}{lccc}
    & GenAgg	& Best Baseline &	Median Baseline \\
    \hline
    100\% &	\textbf{0.915} &	0.872 &	0.841 \\
    10\%  &	\textbf{0.958} &	0.903 &	0.846 \\
    1\%   &	\textbf{0.978} &	0.846 &	0.832 \\
    \hline
    \end{tabular}
    \end{adjustbox}
    }
    \subfloat[Test Accuracy]{
    \begin{adjustbox}{width=0.49\linewidth}
    \begin{tabular}{lccc}
    & GenAgg	& Best Baseline &	Median Baseline \\
    \hline
    100\%  &	\textbf{0.926} &	0.897 &	0.866 \\
    10\%   &	\textbf{0.905} &	0.854 &	0.829 \\
    1\%    &	\textbf{0.903} &	0.832 &	0.830 \\
    \hline
    \end{tabular}
    \end{adjustbox}
    }
    \caption{Train and test accuracy of GenAgg vs all baselines on various subsets of the PATTERN dataset. We train on the PATTERN dataset (100\% of the data), and versions with 10\% and 1\% of the original datapoints.}
    \label{fig:overfitting}
\end{figure}

This experiment highlights a difference between the small coauthor datasets and the reduced PATTERN dataset (\autoref{fig:overfitting}). In the coauthor datasets the train accuracies approach 1, whereas in the reduced PATTERN dataset the baselines do not surpass a certain level of performance. This indicates that the mathematical relationships in the PATTERN dataset are fundamentally more difficult to represent. It is in these more complex problems that GenAgg provides the most benefit.

\section{Training Details}
\label{appendix:training-details}

In our implementation of GenAgg, we implement $f$ and $f^{-1}$ as MLPs with hidden sizes of $[1,2,2,4]$ and $[4,2,2,1]$ using Mish activation, BatchNorm, and Kaiming Normal weight initialisation. To run the GNN benchmark experiments, we use a $4$-layer GraphConv model with a hidden size of $64$, using Mish activation between layers. MLPs are used as pre- and post- processors in order to map to and from the hidden dimension of the GNN. The preprocessor is implemented with a one layer MLP, and the postprocessor is implemented with a $4$-layer MLP using Mish activation. In tasks which require graph-level predictions, we prepend a global mean pooling layer to the postprocessor.

We run all of our experiments on an NVIDIA GeForce GTX 1080 Ti GPU. In all experiments, we use the Adam optimiser with a learning rate of $10^{-3}$. In the regression experiments we train for $10,000$ epochs with a batch size of $1024$, and in the GNN benchmark experiment we train for $1,000$ epochs with a batch size of $32$. Our results report the mean (and standard deviation, as the shaded region in the training plots) over $10$ trials.

\section{Runtime}
\label{appendix:runtime}

As GenAgg requires running forward passes of small neural networks in addition to performing a sum, it incurs an additional runtime cost. We report the runtime overhead of each method in the figure below:

\begin{figure}[h]
    \centering
    \includegraphics[width=0.7\textwidth]{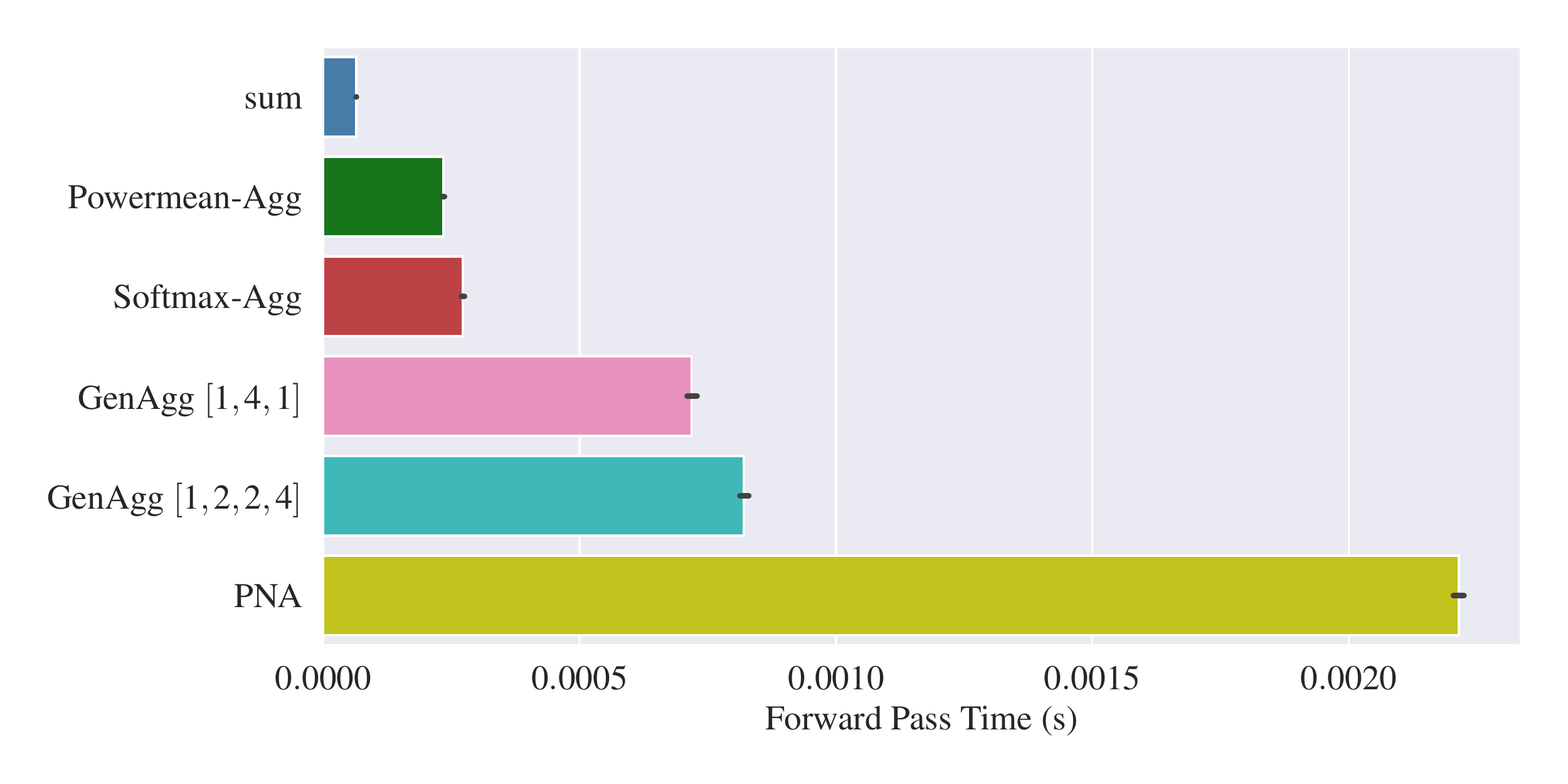}
    \caption{Forward pass times for each aggregation method. GenAgg $[1,2,2,4]$ and GenAgg $[1,4,1]$ denote two different layer architectures for $f$. The data in this figure represents the time for a single forward pass over the MNIST GNN Benchmark dataset with a batch size of 1024 (approximately 578k edges), using an NVIDIA GeForce GTX 1080 Ti GPU. The reported time is $only$ for the aggregation component, not the GNN as a whole.}
\end{figure}

While the the absolute runtime of GenAgg is relatively fast, it is still significantly slower than $\mathrm{sum}$. Consequently, it is possible that the runtime of our implementation can preclude its use in time-critical applications. However, note that this figure only represents the runtime for our specific implementation---GenAgg can be implemented with any invertible function $f$. Using a symbolic parametrisation of invertible functions can significantly speed up computations (at the cost of representational complexity). Alternatively, it is likely that an implementation in JAX with compiled networks $f$ and $f^{-1}$ can achieve a speed boost with the same architecture.

\end{document}